\documentclass[twoside, 11pt]{article}

\usepackage{amsmath,amsfonts,amssymb,amsthm}

\usepackage[abbrvbib, preprint]{jmlr2e}

\usepackage{graphicx}

\usepackage[dvipsnames]{xcolor}

\theoremstyle{definition}
\newtheorem{assumption}[theorem]{Assumption}

\usepackage{algorithm}
\usepackage{algpseudocode}
\usepackage{tikz}
\usetikzlibrary{decorations.pathreplacing,calc,backgrounds}
\usepackage{tikz-cd}
\usepackage{float}
\usepackage{bm}
\usepackage{bbm}

\newcommand{\ka}[1]{{\color{ForestGreen} #1}}

\let\phi\varphi
\let\epsilon\varepsilon

\def\Pr{\mathrm{Pr}}

\def\TV{{\rm\scriptscriptstyle TV}}
\def\cF{\mathcal{F}}
\def\diff{\mathrm{d}}
\def\cP{\mathcal{P}}
\def\cK{\mathcal{K}}
\def\cT{\mathcal{T}}

\def\Pred{\mathbf{Pred}}
\def\res{\mathrm{res}}
\def\de{\mathrm{d}}

\newcommand{\RR}{\mathbb{R}}
\newcommand{\XX}{\mathbb{X}}
\newcommand{\UU}{\mathbb{U}}
\newcommand{\NN}{\mathbb{N}}
\newcommand{\ZZ}{\mathbb{Z}}
\newcommand{\YY}{\mathbb{Y}}

\newcommand{\EE}{\mathbb{E}}

\allowdisplaybreaks

\newcommand{\PX}{\mathcal{P}(\mathbb{X})}
\newcommand{\PXn}{\mathcal{P}(\mathbb{X}_n)}
\newcommand{\PS}{\mathcal{P}(S)}

\title{When can Optimally Trained Transformers Achieve Near-Lossless Token Prediction for non-Markovian Data?}

\author{\name Ka\u{g}an Akman \email kagan.akman@bilkent.edu.tr \\
       \addr Department of Mathematics\\
       Bilkent University\\
       Ankara, 06800, Turkey
       \AND
       \name Jake Feldman Starosta \email jake.feldmanstarosta@queensu.ca\\
       \addr Department of Mathematics and Statistics\\Queen's University\\Kingston K7L 3N6, ON, Canada
       \AND
       \name Naci Saldi \email naci.saldi@bilkent.edu.tr \\
       \addr Department of Mathematics\\
       Bilkent University\\
       Ankara, 06800, Turkey
       \AND
       \name Serdar Y\"{u}ksel \email yuksel@queensu.ca \\
       \addr Department of Mathematics and Statistics\\Queen's University\\Kingston K7L 3N6, ON, Canada}

\begin{document}
\maketitle

\begin{abstract}
There exists little mathematical theory explaining why  Transformer architectures are empirically successful even for non-Markovian data. To this end, this paper studies next-token prediction via stochastic optimal control theoretically trained Transformers for such sources, where we consider either stationary and ergodic non-Markovian sources or those which are hidden Markov models of some finite order (possibly non-stationary). In either case, we first obtain error bounds via finite models with finite window approximations. For stationary ergodic sources, the resulting error is bounded by an entropy gap; for $N$th order Markovian sources, it decays at an exponential rate $\alpha^{\lfloor M/N\rfloor}$, where $M$ is the window length and $\alpha$ is determined by filter stability (and the Dobrushin coefficients of the transition and measurement kernels). We then construct a Transformer of depth $M+1$, one head per layer, realizing the truncated filter, and we optimally train it with stochastic control developed in earlier work, which rigorously returns a near optimal Transformer. In particular, we show that for a non-Markovian source, either under stationarity and ergodicity for general models or under filter stability for hidden Markov models, an optimally trained Transformer has arbitrarily small performance error with increasing context length, data, and approximation refinement. Numerical studies accompany the theory.
\end{abstract}
\begin{keywords}
    Transformer training, next token prediction, finite memory approximation, non-Markovian models, optimal control, expressivity.
\end{keywords}

\section{Introduction}\label{sec:introduction}

Transformer architectures are empirically strong predictors of the next token from a finite context, but how long must that context be? Practice has an answer by means of experiments, and asserts that longer context is often better. 

In this paper, we answer this question in terms of an intrinsic property of the general possibly non-Markovian source by investigating how much such a non-Markovian source can be approximated by an $N$th order finite Markov model, in a suitable sense, together with optimal training loss due to approximation and sample complexity errors. 

A specific application will involve partially observed Markov models (known also as hidden Markov models) for which stronger bounds will be presented in terms of filter stability.

The theory of partially observed Markov process (POMP) tells us that if the source is a latent Markov process, i.e. an unseen Markov process coupled with an observation process obtained via a noisy channel conditioned on the source, then the optimal predictor is a filter. Going one step further, we predict the next observation variable based on the history of previous observation variables, corresponding to the language modeling task of next-token prediction. Using the full history as context is often challenging in practice. Therefore, we truncate the input, that is, initialize it from a (possibly wrong) prior and take the finite window as the context. Then, the predictive performance on this truncated context is a question of filter stability \cite{atar1997exponential, le2004stability, chigansky2004stability,chigansky2009intrinsic}.

On the other hand, practice demonstrates that Transformer architectures are efficient for processing sequential data such as sentences for large language models. \cite{yun2020are} reveal that Transformers are universal approximators of continuous, sequence-to-sequence functions on compact domains. Thus, a natural question arises: {\it Can Transformers approximate finite-window filter processes arising from POMPs?} In this paper, we show that the answer is affirmative and we find quantitative bounds for the approximation error while specifying the class of Transformers achieving such task by explicit width and depth constructions.

On an $N^{\rm th}$ order Markovian model, that is, the law of the next state is determined by the previous $N$ states, we define noisy observation channels in order to make it latent. Then we define the associated prediction process: full history, finite-window and finite-window with quantized observations. There are three major unknowns: (i) Are finitely many quantized observations sufficient statistics? (ii) Can Transformers realize finite-window predictive distribution with quantized observations? (iii) How can we learn the optimal weight trajectory for such Transformer? We answer (i) and (ii) in this paper, whereas (iii) is studied by \cite{akman2026optimal}, where an optimal control theoretic formulation of Transformers is studied with rigorously justified finite model approximations and dynamic programming.

Transformer training amortizes the output cost: paying upfront at training to reduce inference costs compared to an empirical lookup table. This amortization is done by compressing the information from the empirical distribution lookup table to a representation that lives in the model weights, built on the structure of the Transformer architecture. While this compression requires an initial computational cost in the form of training, it saves the user compute and storage by replacing a large input table search with a deterministic equation that approximates the target distribution. This representation has other useful properties as well, such as continuity and generalization, and thus the ability to interpolate sequences absent from the training data set.

We begin by defining a generally non-Markovian process that we segment to create the training data set. In the context of LLMs, this can be thought of as a sequence of tokens on the internet. We then formulate the Transformer dynamics as a control theoretic problem: selecting the optimal model weights (controls), to approximate the empirical next-token distribution (target state) given the input sequence (current state). The sequence flows through the Transformer one layer at a time, each comprised of a feed-forward layer and self-attention block. We employ dynamic programming to train a quantized Transformer model to minimize the cross-entropy between the actual predicted and target empirical distributions. We show that the empirical prediction gets arbitrarily close to the truncated-window quantized prediction. Then, we prove the existence of a Transformer that approximates this truncated-window quantized prediction, which we approximate experimentally with quantized Transformer training.

\subsection{Related Works}\label{sub:related}

\subsubsection*{Control--theoretic approaches to neural networks and, in particular, to Transformers} There is a developing literature focusing on the control--theoretic formulations and properties of Transformers. \cite{weinan2017proposal} suggests to see machine learning problems -in particular, residual neural networks (ResNets)- as optimal control of continuous time dynamical systems governed by ordinary differential equations where the time index corresponds to the layers of a neural network. Around the same time, \cite{haber2018stable} study a similar ODE based analysis of ResNets by proving stability and well-posedness results. They also study discretization of such systems for numerical analysis and implementability. \cite{ruthotto2020deep} categorize convolutional neural networks (CNNs) into PDE inspired classes and authors discuss the regularity of CNNs via the theory of parabolic and hyperbolic PDEs. An alternative approach by seeing neural networks as solutions of suitable dynamical systems, called \emph{neural ordinary differential equations}, is studied by \cite{chen2018neural,lu2018beyond}, where the neural networks structure is replaced by the solution of the governing ODEs, obtained by the Runge-Kutta methods \cite[Chapter~3]{butcher2016numerical}. \cite{ruiz2023neural} study controllability of neural ordinary differential equations while \cite{ruiz2024control} study the controllability of \emph{neural transport equations}, which are continuity equations governed by a neural network's velocity field. 

\cite{geshkovski2025mathematical} study Transformers in a mathematically rigorous way while focusing on clustering properties, building on their earlier work \citep{geshkovski2023emergence}. The training scheme used in this paper is by \cite{akman2026optimal} where Transformers are represented as measure-state Markov decision processes lifted from discrete-time McKean--Vlasov dynamics \citep{mckean1966class} exhibited at the token level, and then, the training is reformulated as a dynamic programming problem. For computational tractability, \cite{akman2026optimal} derive a finite model approximation to the measure-state MDP using the algorithm  of \cite{reznik2011algorithm} and \cite[Chapter~3]{saldi2018finite}. The generalization properties of the quantized training method are studied by \cite{akman2026generalization} from two angles: norm-based excess risk bounds and Wasserstein distributional robustness.

A different use of control theory appears in \cite{chang2025dual}, who construct a Transformer-like architecture for prediction in hidden Markov models through a duality between nonlinear filtering and optimal control. \cite{kudre2026Transformer} extend this to linear-Gaussian models and compare the optimal control weights with the attention weights of a trained Transformer. Our work differs by the usage of control classes; in \cite{chang2025dual}, control describes inference over the full history (and is thus closed-loop or context dependent), whereas we use control for training at the distributional level leading to open-loop (and thus context-independent) parameters consistent with standard Transformer architecture, and quantify the cost of a finite context window via filter stability. 

\subsubsection*{What Transformers can represent}

Transformers are universal approximators of continuous sequence-to-sequence functions on compact domains with fixed width and arbitrary depth \citep{yun2020are} while  \cite{yun2020connections} show that sparse Transformers can preserve such universal approximation capabilities. In contrast, our target, in this paper, is a measure-valued map on a finite quantized domain for which we give explicit depth and width with one head per layer. As another refinement direction, \cite{kajitsuka2024transformers} prove Transformers with one-headed single self-attention having low-rank weight matrices and double feed-forward layers can approximate continuous and permutation-equivariant maps on compact domains. There is a line of work treating the context as a probability measure over tokens; \cite{furuya2025transformers} prove universal approximation for this setup, while \cite{geshkovski2024measure} focuses on Transformer's interpolation ability from measures to measures. \cite{agrachev2025generic} show controllability of general equivariant systems and treat self-attention mechanism as a special case. There are several studies providing explicit approximation rates: \cite{jiang2024approximation} prove Jackson-type rates for sequence-to-sequence matching scenario, while \cite{wang2024understanding} gives approximation rates for Transformers with long but sparse memory, and they characterize how various hyperparameters such as depth, width, and the number of heads contribute.

For $k$-th order fully observable Markov sources, \cite{rajaraman2024Transformers} show that three layers with one head suffice to represent the conditional $k$-gram with an attention-only Transformer. Their constructed network represents a statistic of the prompt, namely, the empirical distribution of successors to the current token with the sequence itself, which is unbiased for a fully observed source and is kernel-agnostic; whereas we represent the predictive distribution of a partially observed source in the weights.

A complementary line bounds what Transformers cannot do. Transformers with hard attention fail to recognize even relatively simple formal languages \citep{hahn2020theoretical}. \cite{merrill2023parallelism} show that a fixed-depth Transformer with limited numerical precision cannot carry out a computation that requires many sequential steps. \cite{strobl2024formal} survey the expressibility of Transformers. 

\subsubsection*{Filter stability and memory in prediction}

The filter stability problem refers to the correction of an incorrectly initialized nonlinear filter for a partially observed stochastic dynamical system (controlled or control-free) with increasing measurements \cite{chigansky2004stability,chigansky2009intrinsic,atar1997exponential,le2004stability}. \cite{mcdonald2020exponential} give an explicit geometric rate for filter stability through Dobrushin coefficient \citep{dobrushin1956central} of the transition kernel and the observation channel of the latent Markov source. Truncating a predictor to a finite window is exactly this situation since discarding the data amounts to replacing the true conditional law of the hidden state by a fixed prior. \cite{kara2022near} use a similar observation to establish near optimality of finite-memory feedback policies in partially observed MDPs, converting the belief-state error into an error of discrepancy in the values; \cite{kara2025near} extend this to continuous spaces. In this paper, we use belief error and propagate it through the one-step prediction kernel instead. Alternative approaches to filter stability also establish exponential filter stability through a different route: \cite{atar1997exponential} and \cite{le2004stability} prove exponential stability rates for the Hilbert projective metric via Birkhoff's contraction coefficient. Alternatively, asymptotic filter stability can also be utilized if one only wishes to have asymptotic performance bounds without explicit rates of convergence \cite{chigansky2009intrinsic,mcdonald2018stability}. An infinite product space approach, where finite memory truncation is viewed as quantization, is presented in \cite{anotherLookPOMDPs}. 

\subsection{Contributions}\label{sub:contributions}

Our main results can be summarized as follows. Given a general non-Markovian model $\{Y_r : r \in \mathbb{Z}_+\}$ taking values from a compact metric space $\mathbb{Y}$, consider its finite approximation $q^{s}(Y_r), r \in \mathbb{Z}_+$ where $|q(\mathbb{Y})| =: R(s) < \infty$ and $q$ is a quantizer. We show that Transformer performance is fundamentally related to finite model with finite memory approximability. 

Such a finite model with finite memory approximability is a highly qualified expression and requires care with regard to its precise presentation. For a general stationary and ergodic model, this term is given by
\[
    \begin{split}
        \sup_{r\geq M}\EE_{Y_{0:r}}\bigg[\|\Pr(Y_{r+1} \in \cdot \mid Y_{0:r}) - \Pr(Y_{r+1} \in \cdot \mid q^s(Y_r), q^s(Y_{r-1}),\ldots,&q^s(Y_{r-M + 1}))\|_\TV\bigg]\\&:= F(M,q^s)
    \end{split}
\]
for the case with stationary and ergodic data. 

We show that as $F(M,q^s) \to 0$ with $M \to \infty, R(s) \to \infty$, Transformers, trained optimally via stochastic control, perform asymptotically well, with explicit convergence bounds. Information theoretic or ergodic theoretic bounds on $F(M,q^s)$ can be utilized in the quantitative expressions.

To make our results stronger and more explicit in a particular context but one with significant applied relevance, we will also consider the special case of hidden Markov models, where there is an underlying Markov process $X_r, r \in \mathbb{Z}_+$ which generates $Y_r$, the observable measurement process, through $Y_r = H(X_r,V_r)$ for some measurable $H$ and i.i.d. noise $V_r$ (whose representation is without loss for such models). In this case, the above approximation bounds can be obtained more explicitly in terms of filter stability; stationarity can be relaxed away and more explicit conditions can be obtained. Without imposing ergodicity, define
\[
    \begin{split}
        \sup_{r\geq M}\EE_{Y_{0:r}}\|\Pr(Y_{r+1} \in \cdot | Y_{0:r}) -  \Pr(Y_{r+1} \in \cdot | q^s(Y_{r}), q^s(Y_{r-1}),\cdots,&q^s(Y_{r-M + 1},\hat{\pi})\|_\TV\\
        &:= \tilde{F}(M,q^s),
    \end{split}
\]
where $\hat{\pi}$ denotes a fixed probability measure serving as an artificial and incorrect prior at time $0$, to facilitate Bayesian updates. 

We show that as $\tilde{F}(M,q^s) \to 0$ with $M \to \infty, R(s) \to \infty$, Transformers, trained via optimal stochastic control \citep{akman2026optimal}, can rigorously be shown to perform prediction and filtering with near lossless performance under filter stability and regularity conditions. We thus address a fundamental question on Transformer architectures and obtain quantified error bounds due to finite model with finite memory approximation for general non-Markovian models. 

Figure \ref{fig:results} summarizes our technical results. Extensive simulations corroborate the mathematical results. We note that our ingredients in the analysis are: (i) approximation via finite memory models (in a suitable, expectation or uniform over priors, sense), (ii) approximation of finite memory models with finite Markov models, (iii) optimal Transformer learning for finite Markov models and their performance, and (iv) a sample complexity analysis. 

\begin{figure}[H]
    \centering
    \includegraphics[width=\linewidth]{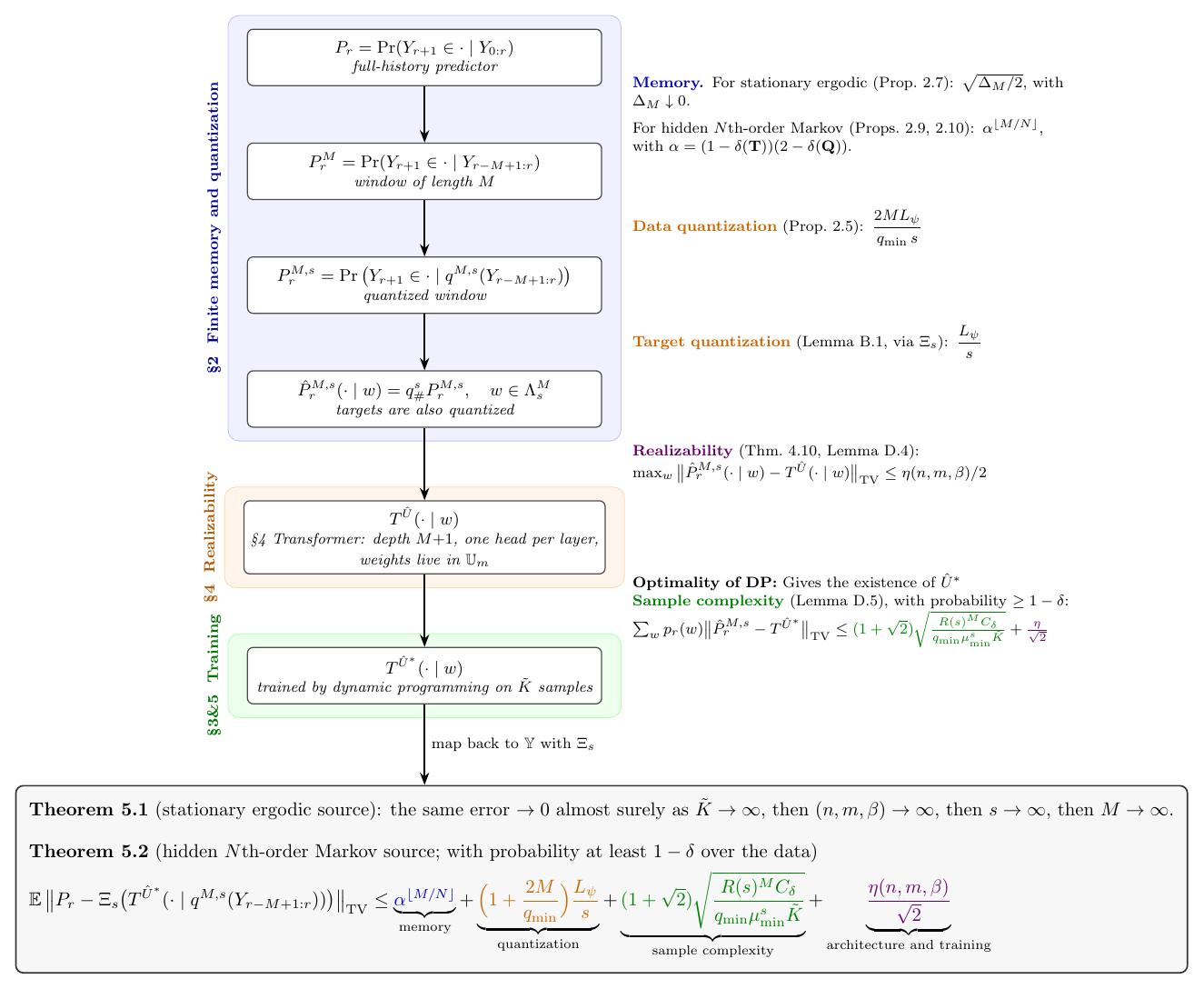}
    \caption{Summary of the results of this paper.}
    \label{fig:results}
\end{figure}

Our analysis also highlights what Transformers can and cannot do: Since our analysis builds on rigorously justified near optimal training via control theory, a corollary of our analysis is that when sequence to sequence approximations is realizable with no error using {\it some} Transformer, an optimally trained scheme can construct a Transformer which attains an arbitrarily small error performance. That is, analogous to standard control theoretic paradigm (see e.g. \cite{anderson2007optimal}), controllability/reachability certifies that under a suitably formulated cost, optimality arrives at arbitrarily small performance error whenever mild regularity conditions hold.

\section{Sufficiency of Finite Memory in Prediction}\label{sec:problem}

Let $S$ be a finite alphabet and $(\YY,\rho)$ be a compact metric space endowed with a reference probability measure $\mu$ defined on the Borel $\sigma$-field generated by $\rho$. Let $\{Z_r\}_{r\in\ZZ_+}$ be a stochastic process taking values in $S$ and $Q\in\cP(\YY\mid S)$ be a stochastic kernel admitting a Radon-Nikodym derivative
\[
    \psi(y\mid z) := \frac{\diff Q(\,\cdot\mid z)}{\diff\mu}(y),\quad y\in\YY,\, z\in S.
\]
Since $S$ is finite, such $\mu$ always exists; we simply take $\mu(\cdot) = \sum_{z\in S}\frac{1}{|S|}Q(\,\cdot\mid z)$. We take observation variables $Y_r \sim Q(\,\cdot\mid Z_r)$ from this channel for all $r\in\ZZ_+$ and we assume that $Y_r$ is independent of $Z_t$ and $Y_t$ whenever $r\neq t$, conditioned on $Z_r$.

\begin{figure}[H]
    \centering
    \begin{tikzpicture}[
      >={Stealth[length=2mm]},
      ent/.style={anchor=west, inner sep=0pt, xshift={-0.5*width("$Z$")}},
      lab/.style={right, font=\scriptsize, inner sep=2pt}
    ]
      \def\dx{2.1}      
      \def\yY{-1.55}    
      \def\ax{0.09}  
 
      \foreach \lbl [count=\i from 0] in {Z_0, Z_1, \cdots, Z_r, Z_{r+1}, \cdots}
     \node[ent] (z\i) at (\i*\dx, 0) {$\lbl$};
 
      \foreach \i/\lbl in {0/Y_0, 1/Y_1, 3/Y_r, 4/Y_{r+1}}
     \node[ent] (y\i) at (\i*\dx, \yY) {$\lbl$};
 
      \foreach \i [evaluate=\i as \j using int(\i+1)] in {0,...,4}
         \draw[->] ($(z\i.east)+(3pt,0)$ |- 0,\ax) -- ($(z\j.west)+(-3pt,0)$ |- 0,\ax);

      \foreach \i/\lbl in {0/Z_0, 1/Z_1, 3/Z_r, 4/Z_{r+1}}
         \draw[->] (\i*\dx, -0.18) -- (\i*\dx, \yY+0.33)
            node[midway, lab] {$Q(\,\cdot\mid \lbl)$};
    \end{tikzpicture}
    \caption{Flows of the latent process and observations.}
    \label{fig:flow}
\end{figure}

As $\YY$ is a compact space, by total boundedness, for any $s\in\NN$, there exists a $1/s$-net $\Lambda_s := \{y_{s,i} : i=1,\ldots,R(s)\}$, where $R(s)\in\NN$, that is
\begin{equation}
    \label{eq:Quantizer}
    \min_{i=1,\ldots,R(s)}\rho(y, y_{s,i}) \leq \frac{1}{s}
\end{equation}
for all $y\in\YY$.

\begin{definition}\label{def:Quantization}
    The {\it quantizer} $q^s:\YY\to\Lambda_s$ is defined as 
    \[
        q^s(\cdot) := \arg\min_{i=1,\ldots,R(s)}\rho(\cdot,y_{s,i}),
    \] 
    where the ties are broken in a measurable way.
\end{definition}

We separate $\YY$ into $R(s)$-cells represented by the elements of $\Lambda_s$. Say $C(s,i)$ be the cell associated to $y_{s,i}$, that is, $C(s,i) := (q^s)^{-1}(y_{s,i})$. Then $\{C(s,i)\}_{i=1}^{R(s)}$ form a disjoint partition of $\YY$, i.e. $\bigsqcup_{i=1}^{R(s)} C(s,i) = \YY$. Note that $\operatorname{diam}(C(s,i))\leq 2/s$ for all $s\in\NN$ and $i=1,\ldots,R(s)$.

We define four objects: First, we have the true predictive distribution which makes a prediction on the next token based on the full history. This is an object with the highest informative value provided that the underlying structure is well-behaved. However, as we progress further into the time horizon, computing it may require vast computation power and memory requirements. To this end, we also introduce the $M$-window predictive distribution which uses only the $M$-many past observations prior to the next time step. After that we define the quantized version of the $M$-window predictive distribution, which uses the $M$-window data from the $\frac{1}{s}$-net approximation of $\YY$. Finally, since Transformers realize maps via finite set of logits, we quantize the target as well. Eventually, one part of our objective is to reveal that, depending on the window size $M$ and quantization level $s$, $M$-window $s$-quantized data are near-sufficient statistics with explicit error rates.

\begin{definition}\label{def:PredictiveDists}
    For any $r\in\NN$, we define
    \begin{enumerate}
        \item The {\it true predictive distribution} 
        \[
            P_r(\cdot) := \Pr(Y_{r+1}\in\cdot\mid Y_{0:r}).
        \]
        \item The {\it $M$-window predictive distribution}
        \[
            P_r^M(\cdot) := \Pr(Y_{r+1}\in\cdot\mid Y_{r-M+1},\ldots,Y_{r}),
        \]
        \item The {\it $M$-window quantized-data predictive distribution}
        \[
            P_r^{M,s}(\cdot) := \Pr(Y_{r+1}\in\cdot\mid q^s(Y_{r-M+1}),\ldots,q^s(Y_{r})),
        \]
        \item The {\it $M$-window fully quantized predictive distribution}
        \[
            \hat{P}_r^{M,s}(\cdot) := q^s_\#P_r^{M,s}(\,\cdot\mid q^s(Y_{r-M+1}),\ldots,q^s(Y_r)).
        \]
    \end{enumerate}
    In 2-4, we take the initial distribution as $\mathrm{Law}(Z_{r-M})$.
\end{definition}

In the theory of partially observed Markov decision processes (POMDPs), one desires to compute the \emph{filter process} $\pi_r(\cdot) = \Pr(Z_r\in\cdot\mid Y_{0:r})$ \citep{kara2024partially}. Hence, the problem we study in this paper is slightly different from the usual nonlinear filtering problem. A filter is an estimator of hidden states, based on the information given by the observation variables. In this paper, we are trying to estimate the next observation given the observations seen so far. For Markovian processes, these two problems are equivalent since one can arrive at $P_r$ by composing the filter $\pi_r$ with the transition and observation kernels of the system (which are non-expansive in the total variation distance), in that order. However, this is not true in general for an ergodic source.

The analysis of this problem requires several assumptions, which we regard as mild as they balance the level of difficulty imposed by the generality.

\begin{assumption}\label{a:BaseAssumptions}
    The following hold throughout:
    \begin{itemize}
        \item[(A1)] The density $\psi$ satisfies the minorization condition $\psi(y\mid z) \geq q_{\rm min} > 0$ for all $y\in\YY$ and all $z\in S$.
        \smallskip
        \item[(A2)] There exists $L_\psi\geq 0$ such that for all $z\in S$ and all $y,y'\in\YY$, $|\psi(y\mid z) - \psi(y'\mid z)| \leq L_\psi \rho(y, y')$.
        \smallskip
        \item[(A3)] For all $s\in\NN$, there is $\mu_{\rm min}^s > 0$ such that, for all $i=1,\ldots,R(s)$, $\mu(C(s, i)) \geq \mu_{\rm min}^s$ holds. 
    \end{itemize}
\end{assumption}

Once we obtain the necessary results, we need to be able to come back from the quantized space to the original space. In particular, in §\ref{sec:Approximation}, we build a Transformer to realize $\hat{P}_r^{M,s}$ in Definition~\ref{def:PredictiveDists}, using which we obtain the final error bounds in §\ref{sec:Main}. Since the Transformer gives a probability distribution over $\Lambda_s$, we have to revert back to a distribution on $\YY$, namely $P_r^{M,s}$.

\begin{definition}\label{def:RevertQuantization}
    For any $s\in\NN$, define $\Xi_s:\mathcal P(\Lambda_s) \to \mathcal P(\YY)$ by
    \[
        \Xi_s(\nu)(A) := \sum_{i=1}^{R(s)}\nu(y_{s,i})\frac{\mu(A \cap C(s,i))}{\mu(C(s,i))},\qquad \nu\in\mathcal P(\Lambda_s).
    \]
\end{definition}

The function $\Xi_s$ is well-defined if Assumption~\ref{a:BaseAssumptions}-(A3) holds: Then the denominator $\mu(C(s,i))$ is positive for all $s\in\NN$ and $i=1,\ldots,R(s)$.

We study this problem under two scenarios: (i) We add the assumption that the latent source $\{Z_r\}_{r\in\ZZ_+}$ is stationary ergodic, in addition to Assumption~\ref{a:BaseAssumptions}. (ii) $\{Z_r\}_{r\in\ZZ_+}$ is $N^{\rm th}$ order Markov, which we initially treat the first order case, then generalize to the $N^{\rm th}$ order case by reducing to the first order case back. This setting allows us to derive explicit rates showing exponentially fast convergence using Dobrushin coefficient-based analyses, following \citep{mcdonald2020exponential}.

\begin{proposition}\label{prop:Quantization}
    Under Assumptions~\ref{a:BaseAssumptions}-(A1-2), for all $r\geq M$ and all $s\in\NN$,
    \[
        \sup_{\mathbf{y}\in\YY^M}\|P_r^M(\,\cdot\mid\mathbf{y}) - P_r^{M,s}(\,\cdot\mid\mathbf{q}^{M,s}(\mathbf{y}))\|_\TV \leq \frac{2ML_\psi}{q_{\rm min}s}
    \]
    where $q^{M,s}$ is the application of $q^s$ to sequences of length $M$.
\end{proposition}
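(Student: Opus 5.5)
Both predictive distributions are mixtures over the latent path, with likelihood weights from the observations. Fix $r\geq M$ and write $\mathbf{z}=(z_{r-M},\ldots,z_{r+1})\in S^{M+2}$ for latent paths. Let $p(\mathbf{z})$ be their joint law under the prior $\mathrm{Law}(Z_{r-M})$ of Definition~\ref{def:PredictiveDists}. By conditional independence of observations given the latent states,
\[
P_r^M(A\mid\mathbf y)=\frac{\sum_{\mathbf z}p(\mathbf z)\prod_{k=1}^{M}\psi(y_k\mid z_{r-M+k})\,Q(A\mid z_{r+1})}{\sum_{\mathbf z}p(\mathbf z)\prod_{k=1}^{M}\psi(y_k\mid z_{r-M+k})}.
\]
For the quantized version, conditioning on the cells $\{q^s(Y_{r-M+k})=y_{s,i_k}\}$ replaces $\psi(y_k\mid z)$ by the cell likelihood $\bar\psi_k(z):=Q(C(s,i_k)\mid z)/\mu(C(s,i_k))=\frac{1}{\mu(C)}\int_C\psi(y\mid z)\,\mu(\diff y)$. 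This is an average of $\psi(\cdot\mid z)$ over the cell containing $y_k$. Normalizing by $\mu(C)$ cancels between numerator and denominator, so (A3) is not needed. For $\mu$-null cells we can take any admissible version of the conditional law, or ignore them since the supremum is effectively over a.e.\ $\mathbf y$. In both cases the output is $\sum_{z}w(z)Q(\cdot\mid z)$ with posterior weights $w$ on $z_{r+1}$. Since $\|\sum_z(w-w')(z)Q(\cdot\mid z)\|_\TV\le\|w-w'\|_\TV$, it suffices to compare the two posterior weight vectors.

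The key pointwise estimate: the cell diameter is at most $2/s$, and by (A2) we have $|\psi(y_k\mid z)-\bar\psi_k(z)|\le L_\psi\sup_{y'\in C}\rho(y_k,y')$. Using (A1), this gives the multiplicative bound
\[
\frac{\bar\psi_k(z)}{\psi(y_k\mid z)}\in[1-\epsilon,1+\epsilon],\qquad \epsilon:=\frac{L_\psi\,\delta}{q_{\min}},
\]
where $\delta$ bounds the distance between $y_k$ and points in its cell. The naive choice is $\delta=2/s$. To land on the stated constant we want $\delta=1/s$, i.e.\ to compare both $\psi(y_k\mid\cdot)$ and the cell average to the value at the center $y_{s,i_k}$. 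Going through the center costs a factor, however, so we will most likely have to accept $2/s$ and absorb it carefully, as described below.

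To convert likelihood ratios into a TV bound on posteriors, we use a telescoping (hybrid) argument. Replace the likelihoods one index at a time, $k=1,\ldots,M$, giving posteriors $w^{(0)}=w,\ldots,w^{(M)}=w'$. For a single replacement, the two unnormalized weights $u$ and $u'$ satisfy $u'=u\,g$ with $g\in[1-\epsilon_k,1+\epsilon_k]$. Hence
\[
\Bigl\|\frac{u}{\sum u}-\frac{u'}{\sum u'}\Bigr\|_\TV\le\frac{\sum_z|u-u'|}{\sum u}\le \epsilon_k ,
\]
using the bound $\|a/|a|-b/|b|\|_1\le 2\|a-b\|_1/|a|$ together with the convention $\|\cdot\|_\TV=\tfrac12\|\cdot\|_1$ or $\|\cdot\|_1$, depending on which the paper uses. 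Summing over the $M$ steps gives $M\epsilon$. With $\delta=2/s$ and the TV convention $\|\cdot\|_\TV=\sup_A|\cdot|$ carrying the factor~2 appropriately, this yields $2ML_\psi/(q_{\min}s)$.

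The main obstacle is bookkeeping the constant: which TV normalization the paper uses and whether the factor $2$ comes from the cell diameter or from the normalization step. I would first do the step estimate with the $\ell_1$ bound $\sum|u-u'|/\sum u\le\epsilon$, which avoids the extra factor 2. I would then use $\delta=2/s$ to produce exactly $2M L_\psi/(q_{\min}s)$.
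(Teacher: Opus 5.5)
Your proposal is correct and is essentially the paper's argument: you pass to the latent posterior using non-expansiveness of the prediction kernel, then replace the likelihood factors one coordinate at a time and control each swap by $\|\hat u-\hat v\|_\TV\le\|u-v\|_1/\|u\|_1$ together with (A1), exactly as in Lemma~\ref{lem:Helper1}. Your version is slightly more careful than the written proof: you represent $P_r^{M,s}$ through the cell-averaged likelihood $Q(C(s,i_k)\mid z)/\mu(C(s,i_k))$, which is what conditioning on $q^s(Y)$ actually produces, whereas the paper plugs the point $q^{M,s}(\mathbf y)$ into Lemma~\ref{lem:Helper1} and attributes the factor $2$ to the cell diameter; also, with the paper's convention $\|\cdot\|_\TV=\sup_A|\cdot|=\tfrac12\|\cdot\|_1$, your choice $\delta=2/s$ gives exactly $2ML_\psi/(q_{\rm min}s)$, so the hedging about needing $\delta=1/s$ is unnecessary.
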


\subsection{The case where $\{Z_r\}_{r\in\ZZ_+}$ is a general stationary and ergodic process}
In this part, we cover the case where the latent source is stationary and ergodic. This setting allows us to derive vanishing error bounds using information--theoretic tools. However, the bounds are of qualitative nature. We commence with stating our base assumption for this subsection, and then we investigate how the regularity of the latent source affects the regularity of the observed variables. Finally, we show that the finite memory approximation yields a entropy gap based bound decreasing to $0$.

\begin{assumption}\label{a:Ergodicity}
    $\{Z_r\}_{r\in\ZZ_+}$ is ergodic with probability one, i.e.
    \[
        \lim_{T\to+\infty} \frac{1}{T} \sum_{r=0}^{T-1} f(Z_r) \overset{a.s.}{=} \sum_{z\in S} \pi_Z(z)\,f(z)
    \]
    for any $f:S\to\RR$. We further assume that $\{Z_r\}_{r\in\ZZ_+}$ is stationary.
\end{assumption}

Let $\mathcal A$ and $\mathcal B$ be measurable spaces, $A$ and $B$ be two continuous random variables with a joint probability density function $f:\mathcal A\times\mathcal B\to[0,1]$, where $A$ is $\mathcal A$-valued and $B$ is $\mathcal B$-valued. Then the \emph{conditional differential entropy is defined as}
\begin{equation}\label{eq:CondEntropy}
    H(A\mid B) := -\int_{\mathcal A\times\mathcal B}f(a,b) \log{f(a\mid b)}\,\de(a,b)
\end{equation}
whenever the integral above exists, where $f(a\mid b)$ is the conditional probability of $A$ given $B$. The \emph{memory gap} is then defined as
\begin{equation}\label{eq:MemoryGap}
    \Delta_{M,r} := H(Y_{r+1}\mid Y_{r-M+1:r}) - H(Y_{r+1}\mid Y_{0:r})
\end{equation}
Note that for fixed $r$, $\Delta_{M,r}\to 0$ monotonically as $M$ increases to $r$ since the conditioning $\sigma$-field grows monotonically to the full history $\sigma$-field generated by $Y_{0:r}$. In addition, for the proof of next lemma, we require the notation $H_j := H(Y_j\mid Y_{0:j-1})$ for all $j\in\NN$ and define $H_\infty := \lim_{j\to\infty} H_j$. Note that Assumptions~\ref{a:BaseAssumptions}-(A2) and (A3) ensure that the conditional entropies in \eqref{eq:MemoryGap} and $H_j$'s exist since $\psi(\,\cdot \mid z)$ is (Lipschitz) continuous on a compact space $\YY$, therefore, it is bounded. Hence, $H_j \in [-\log{\|\psi\|_\infty}, -\log{q_{\rm min}}]$. Note that $\{H_j\}_{j\in\NN}$ forms a non-increasing sequence:
\[
    H_{j+1} := H(Y_{j+1}\mid Y_{0:j}) \leq H(Y_{j+1}\mid Y_{1:j}) = H(Y_j\mid Y_{0:j-1}) =: H_j,\quad j\in\NN
\]
The inequality $H(Y_{j+1}\mid Y_{0:j}) \leq H(Y_{j+1}\mid Y_{1:j})$ holds for all $j\in\NN$ since conditioning on a coarser $\sigma$-field results in higher entropy. The equality $H(Y_{j+1}\mid Y_{1:j}) = H(Y_j\mid Y_{0:j-1})$ is due to the stationarity of $\{Y_r\}_{r\in\ZZ_+}$, established in Lemma~\ref{lem:ErgodicityIsInherited}. Therefore, by monotone convergence theorem for real-valued sequences, $H_\infty$ exists. 

We define the \emph{Kullback-Leibler divergence} between two probability measures $P_1$ and $P_2$ on a measurable space $(\Omega, \cF)$ as follows:
\begin{equation}\label{eq:KL}
    D_{\rm KL}(P_1\parallel P_2) := \begin{cases}
        \int_{\Omega} \de P_1 \log{\frac{\de P_1}{\de P_2}}, & P_1\ll P_2;\\
        +\infty, & \text{otherwise}.
    \end{cases}   
\end{equation}

In addition, recall that the \emph{mutual information} of two random variables $A$ and $B$, defined on the same probability space, is defined as
\begin{equation}\label{eq:MutualInfo}
     I(A;B) := D_{\rm KL}(\mathrm{Law}(A,B)\parallel\mathrm{Law}(A)\otimes\mathrm{Law}(B)).
\end{equation}
An additional third random variable $C$ allows us to define the \emph{conditional mutual information}:
\begin{equation}\label{eq:CondMutualInfo}
     I(A;B\mid C) := \EE_C D_{\rm KL}(\mathrm{Law}(A,B\mid C)\parallel\mathrm{Law}(A\mid C)\otimes\mathrm{Law}(B\mid C)).
\end{equation}
It can be shown that the conditional mutual information satisfies
\begin{equation}\label{eq:CondMutualInfoIdentity}
    I(A;B\mid C) = H(A\mid C) - H(A\mid B,C).
\end{equation}
Let $\zeta_1$ and $\zeta_2$ be probability measures on a measurable space $(\Omega,\cF)$. The {\em total variation distance} is defined as $\|\zeta_1 - \zeta_2\|_{\rm\scriptscriptstyle TV} := \sup_{A\in\cF}|\zeta_1(A) - \zeta_2(A)|$. 

\begin{proposition}\label{prop:Info}
    Under Assumption~\ref{a:Ergodicity}, $\Delta_M := \sup_{r\geq M}\Delta_{M,r} = H_M - H_\infty$, and $\Delta_M\downarrow 0$ as $M\to\infty$. Consequently, $\sup_{r\geq M}\EE_{Y_{0:r}}\|P_r - P_r^M\|_\TV \leq \sqrt{\Delta_M/2} \downarrow 0$.
\end{proposition}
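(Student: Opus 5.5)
The argument has three parts: identify $\Delta_{M,r}$ with a difference of the $H_j$'s using stationarity, read off monotone convergence, and pass from entropy gap to total variation via Pinsker. The stationarity of $\{Y_r\}$ (inherited from $\{Z_r\}$, Lemma~\ref{lem:ErgodicityIsInherited}) is used throughout.

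\textbf{Step 1 (identify the supremum).} Fix $r\geq M$. By stationarity, shifting indices by $r-M+1$ gives
\[
H(Y_{r+1}\mid Y_{r-M+1:r}) = H(Y_M\mid Y_{0:M-1}) = H_M,
\qquad
H(Y_{r+1}\mid Y_{0:r}) = H_{r+1}.
\]
Hence $\Delta_{M,r} = H_M - H_{r+1}$. Since $\{H_j\}$ is non-increasing with limit $H_\infty$, the quantity $H_M - H_{r+1}$ is non-decreasing in $r$. Therefore
\[
\sup_{r\geq M}\Delta_{M,r} = H_M - \lim_{r\to\infty}H_{r+1} = H_M - H_\infty .
\]
Because $H_M \downarrow H_\infty$ and $H_\infty$ is finite (the $H_j$ lie in the bounded interval noted before the statement), it follows that $\Delta_M \downarrow 0$.

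\textbf{Step 2 (entropy gap as expected KL divergence).} Apply identity \eqref{eq:CondMutualInfoIdentity} with $A = Y_{r+1}$, $C = Y_{r-M+1:r}$ and $B = Y_{0:r-M}$:
\[
\Delta_{M,r} = I(Y_{r+1}; Y_{0:r-M}\mid Y_{r-M+1:r}) = \EE_{Y_{0:r}}\, D_{\rm KL}(P_r\parallel P_r^M).
\]
The second equality is the standard chain-rule rewriting of conditional mutual information as the expected divergence between the finer and coarser conditional laws of $Y_{r+1}$. To justify it, I would use that both conditional laws have densities with respect to $\mu$, namely mixtures $\sum_z \Pr(Z_{r+1}=z\mid\cdot)\,\psi(\cdot\mid z)$. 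These densities are bounded between $q_{\rm min}$ and $\|\psi\|_\infty$ by (A1)--(A2), so all integrals are finite and Fubini applies.

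\textbf{Step 3 (Pinsker and Jensen).} Pinsker's inequality gives, pointwise,
\[
\|P_r - P_r^M\|_\TV \leq \sqrt{D_{\rm KL}(P_r\parallel P_r^M)/2}.
\]
Taking expectations and applying Jensen's inequality to the concave square root yields
\[
\EE\|P_r - P_r^M\|_\TV \leq \sqrt{\Delta_{M,r}/2} \leq \sqrt{\Delta_M/2}.
\]
Taking the supremum over $r\geq M$ and invoking Step~1 completes the argument.

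The main obstacle is Step~2. One must verify rigorously, for differential entropies on a general compact $\YY$, that the entropy difference equals the expected KL divergence between the conditional predictive laws, including the existence and measurability of the regular conditional densities. I would handle this by writing everything in terms of densities with respect to $\mu^{\otimes}$, which exist by the channel structure, and using the uniform two-sided bounds on the predictive densities to guarantee integrability.
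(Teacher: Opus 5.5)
Your proposal is correct and follows the paper's proof essentially step for step. Stationarity (Lemma~\ref{lem:ErgodicityIsInherited}) gives $\Delta_{M,r} = H_M - H_{r+1}$, hence $\sup_{r\geq M}\Delta_{M,r} = H_M - H_\infty \downarrow 0$; the total variation bound then follows from Pinsker and Jensen after identifying $\Delta_{M,r}$ with $I(Y_{r+1};Y_{0:r-M}\mid Y_{r-M+1:r}) = \EE_{Y_{0:r}} D_{\rm KL}(P_r\parallel P_r^M)$ via \eqref{eq:CondMutualInfo}--\eqref{eq:CondMutualInfoIdentity}. Your extra remarks on the monotonicity of $H_M - H_{r+1}$ in $r$ and on integrability via the two-sided density bounds fill in details the paper leaves implicit.
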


\subsection{The hidden Markov model setup: $\{Z_r\}_{r\in\ZZ_+}$ is Markov but not necessarily stationary}

\subsubsection{The case where $\{Z_r\}_{r\in\ZZ_+}$ is first order Markov}\label{sub:Markov}
We assume that $\{Z_r\} _ {r \in \ZZ_+}$ is a time-homogeneous Markov chain, such that

\begin{equation}\label{eq:1Order}
    \Pr (Z_{r + 1} \in \cdot \mid Z_{0:r}  = \ z_{0:r}) = \Pr (Z_{r + 1} \in \cdot \mid Z_r= z_r ) =: \kappa(\,\cdot \mid z_r)
\end{equation}
for all $r \in \ZZ_+$ and $z_{0:r} \in S^{r+1}$. We have the observation channel $Q$ and sequence $\{Y_r\}_{r \in \ZZ _+}$ as before.

The filter stability bounds derived by \cite{mcdonald2020exponential} uses Dobrushin's coefficient \citep{dobrushin1956central}, thereby, it is a centerpiece to our analysis as well. In the following, we define Dobrushin coefficient of a stochastic kernel between general measurable spaces in the \emph{minorization form}.

\begin{definition}\label{def:Dobrushin}
    Dobrushin coefficient of a stochastic kernel $K$ on a measurable space $\mathsf{B}$ given another measurable space $\mathsf{A}$ is defined as
    \[
        \delta(K) := \inf \sum_{i=1}^n \min\{K(B_i \mid a), K(B_i \mid a')\}
    \] 
    where the infimum is taken over all $a,a'\in\mathsf{A}$ and all finite measurable partitions $\{B_i\}_{i=1}^n$ of $\mathsf{B}$.
\end{definition}

Informally, the Dobrushin coeffient of a kernel $K$ can be interpreted as the worst-case one-step probability of forgetting where you started. It takes values in $[0,1]$ where $1$ is often interpreted as total forgetting in one-step and $0$ tells us that there is no forced forgetting. For any two probability measures $\zeta_1,\zeta_2\in\mathcal{P}(\mathsf{B})$, one has the Lipschitz property $\|\zeta_1K - \zeta_2K\|_{\rm\scriptscriptstyle TV} \leq (1-\delta(K))\|\zeta_1 - \zeta_2\|_{\rm\scriptscriptstyle TV}$.

In the following proposition, we prove the exponential filter stability bounds for the next token predictive conditional distributions with full and truncated data, using \cite[Corollary~3.7]{mcdonald2020exponential}. We show that the difference is bounded by an exponential rate depending on the Dobrushin coefficients of the transition kernel $\kappa$ and the observation channel $Q$.

\begin{proposition}\label{prop:MarkovDobrushin}
    Suppose Assumption~\ref{a:BaseAssumptions}-(A1) holds. Then, whenever $r \geq M\geq 1$, we have
    \[
        \EE_{Y_{0:r}}\,\|P_r-P_r^M\|_{\rm\scriptscriptstyle TV} \leq \alpha^M
    \]
    with $\alpha = (1-\delta(\kappa))(2-\delta(Q))$.
\end{proposition}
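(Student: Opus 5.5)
Both predictive distributions factor through a filter on the hidden state, so the plan is to reduce the comparison to filter stability and then quote \cite[Corollary~3.7]{mcdonald2020exponential}.

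\emph{Writing the predictors through filters.} Let $\pi_r(\cdot) = \Pr(Z_r\in\cdot\mid Y_{0:r})$ be the true filter, started from $\mathrm{Law}(Z_0)$. Let $\pi^M_r$ be the filter obtained by running the same Bayes recursion (prediction through $\kappa$, then correction with the likelihood $\psi(Y_t\mid\cdot)$) over $t=r-M+1,\ldots,r$ only. Its prior at time $r-M$ is $\mathrm{Law}(Z_{r-M})$, as prescribed in Definition~\ref{def:PredictiveDists}. Both predictors factor as
\[
P_r = \pi_r\kappa Q,\qquad P_r^M = \pi^M_r\kappa Q .
\]
For $P_r^M$ this holds because, conditional on $Z_r$, the future $Y_{r+1}$ is independent of the window. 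Since both $\kappa$ and $Q$ are non-expansive in total variation, the Lipschitz property of the Dobrushin coefficient gives
\[
\|P_r-P_r^M\|_\TV\le (1-\delta(\kappa))\,\|\pi_r-\pi^M_r\|_\TV .
\]
The one-step factor $(1-\delta(\kappa))$ can simply be kept, or dropped by using $1-\delta(\kappa)\le 1$.

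\emph{Recasting as a mismatched-prior filter.} The full filter $\pi_r$ is itself obtained from the same $M$-step recursion on the window, started from the prior $\pi_{r-M}$. So $\pi_r$ and $\pi^M_r$ are two filters fed identical observations $Y_{r-M+1:r}$ but initialized at $\pi_{r-M}$ and $\mathrm{Law}(Z_{r-M})$ respectively.

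The true prior conditioned on $Y_{0:r-M}$ is $\pi_{r-M}$, and the observations in the window are generated under it. Its average is $\EE[\pi_{r-M}]=\mathrm{Law}(Z_{r-M})$. Condition on $Y_{0:r-M}$ and work under the true measure $P^{\pi_{r-M}}$. \cite[Corollary~3.7]{mcdonald2020exponential} states that for any two priors $\mu,\nu$ with $\mu\ll\nu$ (by time homogeneity it applies starting at time $r-M$),
\[
\EE^{\mu}\|\pi^\mu_n-\pi^\nu_n\|_\TV \le 2\,\big((1-\delta(\kappa))(2-\delta(Q))\big)^n .
\]
I will apply this with $n=M$, $\mu=\pi_{r-M}$ and $\nu=\mathrm{Law}(Z_{r-M})$.

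\emph{Main obstacle: absolute continuity.} The key step is verifying $\mu\ll\nu$. By Bayes with mixing over $Y_{0:r-M}$, $\pi_{r-M}\ll\mathrm{Law}(Z_{r-M})$ holds almost surely. Assumption (A1) helps here: $\psi\ge q_{\rm min}>0$ ensures that any state with positive prior mass keeps positive posterior mass.

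\emph{Constants.} I would then take the outer expectation over $Y_{0:r-M}$ via the tower property. One must check whether the corollary's constant is $2$ or $1$, and whether TV is normalized as sup over sets; under the sup-over-sets convention adopted here, the factor $2$ should disappear. If a constant of $2$ remains, the extra factor $(1-\delta(\kappa))$ from the first step can be used to absorb it in the final bound $\alpha^M$. Alternatively, since $\alpha^M$ is only required for $M\ge 1$, I would track the normalization carefully. The final $\alpha$ is exactly the contraction rate in the cited corollary.
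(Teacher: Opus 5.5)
Your proposal is correct and follows the paper's proof essentially step by step. Both arguments write $P_r$ and $P_r^M$ as filters pushed through the one-step prediction kernel and use non-expansiveness; your extra factor $1-\delta(\kappa)$ is a harmless tightening. Both then condition on $Y_{0:r-M}$, so that $\Pr(Z_{r-M}\in\cdot\mid Y_{0:r-M})$ is the true prior and $\mathrm{Law}(Z_{r-M})$ the false one. Both invoke \cite[Corollary~3.7]{mcdonald2020exponential} and take expectations. Your absolute-continuity check $\pi_{r-M}\ll\mathrm{Law}(Z_{r-M})$ a.s., via $\mathrm{Law}(Z_{r-M})=\EE\,\pi_{r-M}$, is the right one.

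On constants, the paper uses the corollary in the multiplicative form $\alpha^M\|\pi_{r-M}-\mathrm{Law}(Z_{r-M})\|_\TV\le\alpha^M$ under the sup-over-sets convention, which is your first resolution. Drop the fallback of absorbing a factor $2$ into $1-\delta(\kappa)$: $2(1-\delta(\kappa))\le 1$ holds only when $\delta(\kappa)\ge 1/2$.
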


Under suitable conditions, for which an example would be $\delta(\kappa) > 1/2$, it can be shown that the rate in Proposition~\ref{prop:MarkovDobrushin} gives an exponential decay. In general, the faster convergence requires both $\delta(\kappa)$ and $\delta(Q)$ to be near $1$. This is in parallel with the intuitive meaning of Dobrushin's coefficient: Whenever it is close to $1$, the system forgets where it started, hence, the truncated data filter converges to the full history filter.

\subsubsection{The case where $\{Z_r\}_{r\in\ZZ_+}$ is $N^{\rm th}$ order Markov}\label{sub:NOrderMarkov}

Now, we assume that $\{Z_r\}_{r\in\ZZ_+}$ is an $N^{\rm th}$ order Markov chain, that is
\begin{equation}\label{eq:Norder}
    \Pr(Z_{r+1}\in\cdot\mid Z_{0:r} = z_{0:r}) = \Pr(Z_{r+1}\in\cdot\mid Z_{r-N+1:r} = z_{r-N+1:r})
\end{equation}
for all $r\in\ZZ_+$ and all $z_{0:r}\in S^{r+1}$. Then we define the quantity on the right hand side in \eqref{eq:Norder}, $\boldsymbol{\kappa}(\,\cdot\mid z_{r-N+1:r})$ which defines a conditional probability distribution on $S$ given $S^N$. We call $\boldsymbol{\kappa}$ the \emph{transition kernel} of $\{Z_r\}_{r\in\ZZ_+}$. As before, $\{Y_r\}_{r\in\ZZ_+}$ is the induced observation variables emitted through the channel $Q$. The objects in Definition~\ref{def:PredictiveDists} transfer directly to here as well. In this setting, $\{(Z_r, Y_r)\}_{r\in\ZZ_+}$ defines a hidden (partially observed) $N^{\rm th}$ order Markov model.

We use a block hidden Markov model that we derive from the original $N^{\rm th}$-order partially observed Markovian process as an apparatus for enabling us to use the machinery provided by \cite{mcdonald2020exponential}. To this end, we fix a time index $r\in\NN$ and a window length $M$ that is larger than $N$ throughout. Moreover, we define $m:=\lfloor M/N\rfloor$ and $\res := r - mN$, where the former serves as the time-index for the block process and the latter is the residual time-index due to block lifting.

For each $i=1,\ldots,m$, we define two processes:
\begin{equation}
    \label{eq:block_state}
    \mathbf X_i := (Z_{\res + (i-1)N + 1},\ldots,Z_{\res + iN})\in S^N
\end{equation}
and
\begin{equation}
    \label{eq:block_obs}
    \mathbf Y_i := (Y_{\res + (i-1)N + 1},\ldots,Y_{\res + iN})\in \YY^N
\end{equation}
with $\mathbf{X}_0 := (Z_{1},\ldots, Z_\res)$. The indices are chosen in a way that $\mathbf X_m = (Z_{r - N + 1}, \ldots, Z_r)$ and $\mathbf Y_{1:m}$ lies inside $\{Y_{r-M+1},\ldots,Y_r\}$. In addition, we have $M-mN < N$ observations as leftover. For these processes, we define three kernels: For any $\mathbf{z}=(z_1,\ldots,z_N), \mathbf{z}'=(z_1',\ldots,z_N')\in S^N$ and $\mathbf{y}=(y_1,\ldots,y_N) \in \YY^N$
\begin{itemize}
    \item[(K1)] The {\bf block transition kernel}
    \[
        \mathbf{T}(\mathbf{z}'\mid\mathbf{z}) := \prod_{j=1}^N\boldsymbol{\kappa}(z_j'\mid z_j,\ldots,z_N,z_1',\ldots,z_{j-1}')\in\mathcal{P}(S^N\mid S^N).
    \]
    \item[(K2)] The {\bf block observation kernel}
    \[
        \mathbf{Q}(\mathbf{y}\mid\mathbf{z}) := \prod_{j=1}^N Q(y_j\mid z_j)\in\mathcal{P}(\YY^N\mid S^N).
    \]
    \item[(K3)] The {\bf one-step prediction kernel}
    \[
        \Pred(y\mid\mathbf{z})=\sum_{z\in S}\psi(y\mid z)\,\boldsymbol{\kappa}(z\mid\mathbf{z})\in\mathcal{P}(\YY\mid S^N),\quad y\in\YY.
    \]
\end{itemize}

Our proof strategy is to reduce the $N^{\rm th}$ order Markov chain to a first order one with the block lifting, which is now a known territory to us.

\begin{figure}[H]
\centering
\begin{tikzpicture}[
  zn/.style  = {draw, rounded corners=2pt, minimum width=1.25cm,
                minimum height=7mm, inner sep=1pt, font=\footnotesize},
  el/.style  = {minimum width=1.25cm, minimum height=7mm, font=\footnotesize},
  brc/.style = {decorate, decoration={brace, amplitude=4pt, mirror}, semithick},
  wlab/.style= {font=\footnotesize},
  cmt/.style = {font=\footnotesize, align=left},
  x=1.45cm
]

\node[font=\small\bfseries, anchor=west] at (0.25,1.80)
  {Sliding lift $\mathbf X_r=(Z_{r-N+1},\dots,Z_r)$};

\node[zn] (a1) at (1,0) {$Z_{1}$};
\node[zn] (a2) at (2,0) {$Z_{2}$};
\node[el] (a3) at (3,0) {$\cdots$};
\node[zn] (a4) at (4,0) {$Z_{N}$};
\node[zn] (a5) at (5,0) {$Z_{N+1}$};

\begin{scope}[on background layer]
  \fill[black!12, rounded corners=3pt]
    ($(a2.north west)+(-3pt,3pt)$) rectangle ($(a4.south east)+(3pt,-3pt)$);
\end{scope}
\node[wlab, anchor=south] at ($(a3.north)+(0,0.28)$) {$N-1$ shared coordinates};

\draw[brc] ($(a1.south west)+(-2pt,-0.20)$) -- ($(a4.south east)+(2pt,-0.20)$)
  node[midway, below=7pt, wlab] {$\mathbf X_1$};
\draw[brc] ($(a2.south west)+(-2pt,-1.05)$) -- ($(a5.south east)+(2pt,-1.05)$)
  node[midway, below=7pt, wlab] {$\mathbf X_2$};

\begin{scope}[yshift=-4.9cm]

\node[font=\small\bfseries, anchor=west] at (0.25,1.75)
  {Block lift $\mathbf X_i=(Z_{\res + (i-1)N+1},\dots,Z_{\res+iN})$};

\node[zn] (b1) at (1,0) {$Z_{1}$};
\node[el] (b2) at (2,0) {$\cdots$};
\node[zn] (b3) at (3,0) {$Z_{N}$};
\node[zn] (b4) at (4,0) {$Z_{N+1}$};
\node[el] (b5) at (5,0) {$\cdots$};
\node[zn] (b6) at (6,0) {$Z_{2N}$};
\node[el] (b7) at (7,0) {$\cdots$};

\draw[brc] ($(b1.south west)+(-2pt,-0.20)$) -- ($(b3.south east)+(2pt,-0.20)$)
  node[midway, below=7pt, wlab] {$\mathbf X_{1}$};
\draw[brc] ($(b4.south west)+(-2pt,-0.20)$) -- ($(b6.south east)+(2pt,-0.20)$)
  node[midway, below=7pt, wlab] {$\mathbf X_{2}$};

\draw[-latex, semithick] ($(b2.north)+(0,0.32)$) to[bend left=20]
  node[above, wlab] {$\mathbf T$} ($(b5.north)+(0,0.32)$);
\end{scope}
\end{tikzpicture}

\caption{Two ways of lifting an $N^{\rm th}$ order Markov chain to a first order one. Sliding lift makes the process share coordinates as it evolves, therefore, resulting in zero Dobrushin coefficient. The block lifting solves this issue.}
\label{fig:lifts}
\end{figure}

The following proof is a reiteration of the proof of Proposition~\ref{prop:MarkovDobrushin} for the block Markov reduction of the $N^{\rm th}$ order Markov model with additional worked out details.

\begin{proposition}\label{prop:NMarkovDobrushin}
    Suppose Assumptions~\ref{a:BaseAssumptions}-(A1, A2) hold. Then, whenever $r \geq M \geq N \geq 1$, we have
    \[
        \EE_{Y_{0:r}}\,\|P_r-P_r^M\|_{\rm\scriptscriptstyle TV} \leq \boldsymbol{\alpha}^{\lfloor M/N\rfloor}
    \]
    with $\boldsymbol{\alpha} := (1-\delta(\mathbf{T}))(2-\delta(\mathbf{Q}))$.
\end{proposition}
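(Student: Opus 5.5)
We reduce to the first order case by the block lifting $(\mathbf X_i,\mathbf Y_i)_{i=0}^m$ of \eqref{eq:block_state}--\eqref{eq:block_obs} and then rerun the argument of Proposition~\ref{prop:MarkovDobrushin}. The first step is to check that $\{(\mathbf X_i,\mathbf Y_i)\}$ is a (time-homogeneous) hidden Markov model. The transition kernel is $\mathbf T$ from (K1): since $\boldsymbol\kappa$ depends only on the last $N$ states and a block has exactly $N$ coordinates, $\Pr(\mathbf X_{i+1}\mid\mathbf X_{0:i})$ factorizes by the chain rule into the product defining $\mathbf T(\cdot\mid\mathbf X_i)$. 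Because the $Y$'s are conditionally independent given the $Z$'s, the observation channel is $\mathbf Q$ from (K2). Assumption (A1) transfers: $\mathbf Q$ has density $\prod_j\psi(y_j\mid z_j)\geq q_{\min}^N>0$ with respect to $\mu^{\otimes N}$, which is what \cite[Corollary~3.7]{mcdonald2020exponential} needs.

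Next we express both predictors through block filters. With $\mathbf X_m=(Z_{r-N+1},\dots,Z_r)$ we have
\[
P_r(\cdot)=\sum_{\mathbf z}\Pred(\cdot\mid\mathbf z)\,\Pr(\mathbf X_m=\mathbf z\mid Y_{0:r}).
\]
For $P_r^M$ we use the window $Y_{r-M+1:r}$ with prior $\mathrm{Law}(Z_{r-M})$ (Definition~\ref{def:PredictiveDists}). Here we have to handle the leftover $M-mN<N$ observations and the prior block. Conditioning on the leftover observations $Y_{r-M+1:\res}$ only changes the prior on $\mathbf X_0$ (equivalently, on the initial block state), and a prior on $\mathbf X_0$ induces, through $\mathbf T$, a prior on $\mathbf X_1$. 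So both the full-history filter and the windowed filter of $\mathbf X_m$ are obtained by running the same block filter recursion on $\mathbf Y_{1:m}$, started from two different priors on $\mathbf X_1$ (or $\mathbf X_0$). The first prior is the true conditional law given $Y_{0:\res}$; the second is the one induced by the truncated prior and the leftover data. Since $\Pred$ is a stochastic kernel it is non-expansive in total variation, so
\[
\|P_r-P_r^M\|_\TV\leq \|\pi_m^{\rm true}-\pi_m^{\rm trunc}\|_\TV ,
\]
where both are filters of $\mathbf X_m$ given $\mathbf Y_{1:m}$.

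Finally, we take expectation over $Y_{0:r}$ and apply \cite[Corollary~3.7]{mcdonald2020exponential} to the block HMM. Its expected merging bound for two filters started from priors $\mu_0,\nu_0$ after $m$ observations is $\boldsymbol\alpha^m\|\mu_0-\nu_0\|_\TV$ with $\boldsymbol\alpha=(1-\delta(\mathbf T))(2-\delta(\mathbf Q))$. This yields $\EE\|P_r-P_r^M\|_\TV\leq\boldsymbol\alpha^{m}\cdot 1=\boldsymbol\alpha^{\lfloor M/N\rfloor}$, using $\|\cdot\|_\TV\leq1$ for the priors.

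The main obstacle is this final step. The corollary is stated for the filter under the true measure with a mismatched prior, where the expectation is taken with respect to the true law of the observations. Here, however, the "true" prior is itself random: it depends on $Y_{0:\res}$. We plan to handle this by conditioning on $Y_{0:\res}$, or on the leftover data, applying the bound conditionally with that realized prior as the true initial law, and then integrating out by the tower property. In doing so we must also verify that the truncated predictor of Definition~\ref{def:PredictiveDists} coincides with the filter under the incorrect prior, which amounts to a Bayes-rule identity. A secondary check is that the corollary's rate is stated using exactly one filter-update step per block, so that the exponent is $m$ rather than $m-1$; if it is $m-1$, we would absorb the extra step by the choice of $\mathbf X_0$.
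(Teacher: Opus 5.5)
Your proposal is correct and is essentially the paper's own proof. The paper also block-lifts and writes $P_r=\pi_m\Pred$ and $P_r^M=\pi_m'\Pred$, where $\pi_m$ and $\pi_m'$ are block filters run on $\mathbf Y_{1:m}$ from the priors $\Pr(\mathbf X_0\in\cdot\mid Y_{0:\res})$ and $\Pr(\mathbf X_0\in\cdot\mid Y_{r-M+1:\res})$, justified by conditional independence given $\mathbf X_0$. It then uses non-expansiveness of $\Pred$, conditions on $Y_{0:\res}$, applies \cite[Corollary~3.7]{mcdonald2020exponential} over the $m=\lfloor M/N\rfloor$ block updates starting at $\mathbf X_0$ (so the exponent is $m$, not $m-1$), and integrates out.

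One hypothesis should be stated explicitly when you invoke the corollary conditionally: the true prior must be absolutely continuous with respect to the truncated one. This holds almost surely because $\Pr(\mathbf X_0\in\cdot\mid Y_{r-M+1:\res})=\EE[\Pr(\mathbf X_0\in\cdot\mid Y_{0:\res})\mid Y_{r-M+1:\res}]$. It also follows directly from (A1), since both posteriors have the same support as $\mathrm{Law}(\mathbf X_0)$.
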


\begin{remark}
    It is also possible to block by taking the {\it sliding lift} $\tilde{\mathbf{X}}_r = (Z_{r-N+1},\ldots, Z_r)$ for each $r\in\ZZ_+$. However, the transition kernel is slightly {\it degenerate} in this case: For all $r\in\ZZ_+$, $\tilde{\mathbf{X}}_{r}$ contains all information in $\tilde{\mathbf{X}}_{r+1}$ except the last component. Therefore, the target supports are disjoint whenever we move on from two states differing in a non-terminal coordinate, resulting in $0$ Dobrushin coefficient for the respective kernel. In this case, we lose the contraction property of the kernel. Non-overlapping blocks of length exactly $N$ are the smallest block lift possible with positive Dobrushin coefficient. This is the reason why we have the exponent $\lfloor M/N\rfloor$ instead of $M$ as in the estimates of \cite{mcdonald2020exponential,kara2022near,kara2025near}.
\end{remark}

\subsubsection{Filter stability and sufficiency of finite memory}\label{sub:merger} We finally merge the findings of §\ref{sub:Markov} and §\ref{sub:NOrderMarkov} to fully characterize the effect of memory on next token prediction.

\begin{theorem}\label{thm:Main}
    Under Assumptions~\ref{a:BaseAssumptions}-(A1, A2), for all $M\geq N$ and all $s\in\NN$, we have
    \begin{itemize}
        \item[(i)] with $\alpha = (1-\delta(\kappa))(2-\delta(Q))$,
        \[
            \sup_{r\geq M}\EE_{Y_{0:r}}\|P_r - P_r^{M,s}\|_\TV \leq \alpha^M + \frac{2ML_\psi}{q_{\rm min}s}.
        \]

        \item[(ii)] with $\boldsymbol{\alpha} = (1-\delta(\mathbf{T}))(2-\delta(\mathbf{Q}))$,
        \[
            \sup_{r\geq M}\EE_{Y_{0:r}}\|P_r - P_r^{M,s}\|_\TV \leq\boldsymbol{\alpha}^{\lfloor M/N\rfloor} + \frac{2ML_\psi}{q_{\rm min}s}.
        \]
        
    \end{itemize}
    
\end{theorem}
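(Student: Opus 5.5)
The plan is to split the error with the triangle inequality in total variation, pointwise in the realization $Y_{0:r}$, inserting the $M$-window predictive distribution $P_r^M$ between the two terms:
\[
    \|P_r - P_r^{M,s}\|_\TV \leq \|P_r - P_r^M\|_\TV + \|P_r^M - P_r^{M,s}\|_\TV .
\]
Here $P_r^M$ is evaluated at $\mathbf{y} = Y_{r-M+1:r}$, and $P_r^{M,s}$ at $\mathbf{q}^{M,s}(\mathbf{y})$. Taking $\EE_{Y_{0:r}}$ of both sides and then $\sup_{r\geq M}$ leaves two terms, a memory (filter stability) term and a quantization term. I will bound each by an earlier result.

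For the quantization term I will not average at all. Proposition~\ref{prop:Quantization} gives, for every $r\geq M$, a bound that is uniform over $\mathbf{y}\in\YY^M$. Evaluating it at $\mathbf{y}=Y_{r-M+1:r}$ bounds the second term pathwise by $2ML_\psi/(q_{\rm min}s)$. Hence its expectation obeys the same bound, uniformly in $r$. This needs only (A1) and (A2).

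For the memory term, in case (i) I invoke Proposition~\ref{prop:MarkovDobrushin}, which gives $\EE_{Y_{0:r}}\|P_r-P_r^M\|_\TV\leq\alpha^M$ for every $r\geq M\geq 1$. Since this bound does not depend on $r$, it survives the supremum. In case (ii) I invoke Proposition~\ref{prop:NMarkovDobrushin} in the same way to get $\boldsymbol{\alpha}^{\lfloor M/N\rfloor}$; this is valid because $r\geq M\geq N$ under the standing hypothesis $M\geq N$. Adding the two bounds gives both displayed inequalities.

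The main point needing care is measurability and consistency of conventions, not any estimate. Both Proposition~\ref{prop:Quantization} and the Dobrushin propositions must use the same $P_r^M$, namely the one with initial law $\mathrm{Law}(Z_{r-M})$ as in Definition~\ref{def:PredictiveDists}. Only then is the inserted intermediate term literally the same object in both bounds. I would check this explicitly, along with the fact that the version of $P_r^{M,s}$ used is the one appearing in Proposition~\ref{prop:Quantization}, viewed as a function of $Y_{r-M+1:r}$ through $q^s$. With that checked, the remaining step is the elementary fact that $\EE\sup$-free pathwise bounds pass through expectation.
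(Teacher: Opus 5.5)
Your proposal is correct and is essentially the paper's own proof: insert $P_r^M$ via the triangle inequality, bound the memory term by Proposition~\ref{prop:MarkovDobrushin} (respectively Proposition~\ref{prop:NMarkovDobrushin}), and bound the quantization term pathwise by the uniform-in-$\mathbf{y}$ estimate of Proposition~\ref{prop:Quantization}, with both bounds independent of $r$ so the supremum passes through. Your consistency check on the prior also matches the paper, which fixes $\mathrm{Law}(Z_{r-M})$ for all windowed objects in Definition~\ref{def:PredictiveDists}.
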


Theorem~\ref{thm:Main}'s findings raise two naturally arising questions: (i) How much memory is sufficient for next token estimation? (ii) How fine must the quantization be?

\begin{corollary}\label{cor:MemoryAndQuantization}
    Under Assumptions~\ref{a:BaseAssumptions}-(A1, A2), for any precision $\epsilon>0$ if $s \geq 4ML_\psi/(q_{\rm min}\epsilon)$ and, in addition
    \begin{itemize}
        \item[(i)]  $\alpha < 1$ holds, and whenever
        \[
            M \geq \frac{\log{(2/\epsilon)}}{\log(1/\alpha)},
        \]
        holds, where $\alpha$ as in Proposition~\ref{prop:MarkovDobrushin},
        \item[(ii)] $\boldsymbol{\alpha} < 1$ holds, and whenever
        \[
           M \geq N\bigg\lceil\frac{\log{(2/\epsilon)}}{\log(1/\boldsymbol{\alpha})}\bigg\rceil,
        \]
        holds, where $\boldsymbol{\alpha}$ as in Proposition~\ref{prop:NMarkovDobrushin},
    \end{itemize}
    we get
    \[
        \sup_{r\geq M}\EE_{Y_{0:r}}\|P_r - P_r^{M,s}\|_\TV \leq \epsilon.
    \]
\end{corollary}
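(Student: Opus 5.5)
The plan is to apply Theorem~\ref{thm:Main} directly and make each of its two error terms at most $\epsilon/2$, then add. The quantization term is the same in both cases, so the argument reduces to two elementary calculations.

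\emph{Quantization term.} If $s \geq 4ML_\psi/(q_{\rm min}\epsilon)$, then
\[
\frac{2ML_\psi}{q_{\rm min}s} \leq \frac{2ML_\psi}{q_{\rm min}}\cdot\frac{q_{\rm min}\epsilon}{4ML_\psi} = \frac{\epsilon}{2}.
\]
If $L_\psi = 0$ the term vanishes and there is nothing to prove.

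\emph{Memory term, case (i).} Here $\alpha<1$. If $\alpha = 0$ the term is zero. Otherwise $\log(1/\alpha)>0$, and the condition $M \geq \log(2/\epsilon)/\log(1/\alpha)$ rearranges to $M\log(1/\alpha) \geq \log(2/\epsilon)$, which is exactly $\alpha^M \leq \epsilon/2$. Theorem~\ref{thm:Main}(i) then gives
\[
\sup_{r\geq M}\EE_{Y_{0:r}}\|P_r - P_r^{M,s}\|_\TV \leq \frac{\epsilon}{2}+\frac{\epsilon}{2}=\epsilon.
\]

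\emph{Memory term, case (ii).} Set $k := \lceil \log(2/\epsilon)/\log(1/\boldsymbol{\alpha})\rceil$. The hypothesis $M \geq Nk$ gives $\lfloor M/N\rfloor \geq k$, since $k$ is an integer with $k \leq M/N$. Because $0<\boldsymbol{\alpha}<1$, the map $t \mapsto \boldsymbol{\alpha}^t$ is nonincreasing, so
\[
\boldsymbol{\alpha}^{\lfloor M/N\rfloor} \leq \boldsymbol{\alpha}^{k} \leq \boldsymbol{\alpha}^{\log(2/\epsilon)/\log(1/\boldsymbol{\alpha})} = \frac{\epsilon}{2}.
\]
This is where the ceiling in the hypothesis is used. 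Combining with the quantization bound through Theorem~\ref{thm:Main}(ii) gives the claim.

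\emph{Side conditions.} Theorem~\ref{thm:Main} requires $M \geq N$. In case (ii), if $\epsilon\geq 2$ the statement is trivial because total variation is at most $1$; otherwise $k\geq 1$, so $M \geq Nk \geq N$ holds automatically. In case (i), with $N=1$, $M\geq 1$ is implicit in the theorem's range of $M$. The only real subtlety is this bookkeeping: the sign of the logarithms when $\epsilon\geq 2$, and the degenerate cases $\alpha=0$ and $L_\psi=0$.
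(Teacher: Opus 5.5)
Your proposal is correct and follows the paper's proof: split $\epsilon$ equally between the memory and quantization terms of Theorem~\ref{thm:Main} and solve each resulting inequality. Your treatment of the floor, using the integer $k$ to get $\lfloor M/N\rfloor\ge k$, is more careful than the paper's. The paper solves $\boldsymbol{\alpha}^{\lfloor M/N\rfloor}\le\epsilon/2$ as if there were no floor, which yields a condition without the ceiling, and so relies only implicitly on the ceiling that appears in the statement.
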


\begin{remark}
    All results in this section use predictive distributions in Definition~\ref{def:PredictiveDists} with the prior $\mathrm{Law}(Z_{r - M})$. However, it is possible to take an arbitrary prior $\tilde\pi_0 \in \cP(S)$; Assumption~\ref{a:BaseAssumptions} ensures that the conditions of \cite[Theorem~3.6]{mcdonald2020exponential} hold for $\tilde\pi_0$, and the same analysis applies for the predictive distributions initialized from $\tilde\pi_0$.
\end{remark}

\section{The Optimal Control Formulation of Transformers}\label{sec:OCFormulation}

We follow \cite{akman2026optimal}'s formulation; however, with a divergence: we do not consider positional encoding separately in the first lift, but rather encode it in the token embedding itself. This depicts how position is used in early Transformers (now it is done at the attention matrix level, see \citep{barbero2024round}, which requires generating a token from a static 'vocabulary' and then embedding it with positional information.

\subsection{Transformers as Discrete-Time Dynamical Systems with Open-Loop Control} We recall the definitions and conventions of \cite{akman2026optimal} for a self-contained discussion. Let $\XX$ be the token embedding space and $\UU$ be the space of network weights in one Transformer layer. We define the particle flow function $f: \XX  \times \PX \times \UU\to \XX$ for all $x \in \XX$, $\mu \in \PX$, and $U = (W_W, W_A, W_b, W_Q, W_K, W_V, E_E, E_U)\in\UU$ by
\[
    f(x, \mu, U) := W_W \sigma(W_Ax + W_b) + \dfrac{\displaystyle\int _ \XX \mu(\mathrm{d}z)\,e^{\beta \langle W_Qx, W_Kz \rangle}\, W_Vz}{\displaystyle\int _ \XX \mu(\mathrm{d}z)\,e^{\beta \langle W_Qx, W_Kz \rangle} }
\]
for a sharpening parameter $\beta \geq 0$. The feed-forward block is the left term, and the attention block is the right term. The number of layers is denoted by $T\in\NN$ where $\cT := \{0,\ldots, T-1\}$ and $\overline{\cT} := \cT \cup \{T\}$.

We encode the position $p \in \mathrm{PE}_{L}$ into $s \in S$ via $E_E:S\times \mathrm{PE}_{L} \to \XX$ to produce $x \in \XX$ where $L\in\NN$ is the input sequence length. Henceforth, we assume that positional information is embedded and retrievable in $x^i$ via $E_E$ for $i \in \mathcal L := \{1,\ldots,L\}$ including under transitions. That is, we assume the following:

\begin{assumption} \label{assumption:B1} There exists an unembedding function $E_U : \XX \to \RR ^ {|\Lambda_s|} \times \mathrm{PE}_{L}$ such that $E_U \circ E_E  = {\text{id}_{S\times \mathrm{PE}_{L}}}$ and $E_U(f(x, \mu, U)) = (., p) \iff E_U(x) = (., p)$.
\end{assumption}

In practice, this may correspond to a positional encoder and decoder where the particle flow function sufficiently preserves proximity to the original positional encoding such that projecting it onto the image of the embedded vocabulary $E_E(S\times \mathrm{PE}_{L})$ in $\XX$ retrieves the decoded position.

For our realization, for any $t\in\overline{\cT}$, we take $x^i_t \in \XX$, $U_t \in \UU$, and define $\mu_t := \frac{1}{L} \sum_{i=1}^L \delta_{x_t^i} \in \PX$ for $i \in \mathcal L$ and $t \in \mathcal T$. The weights of $U_t$ are specified as $W_{W,t} \in \RR^{d \times d_1}, W_{A,t} \in \RR^{d_1 \times d}, W_{b,t} \in \RR ^{d_1}, W_{Q,t}, W_{K,t} \in \RR ^ {d_2 \times d}, W_{V,t} \in \RR^{d \times d}$ for $d_1, d_2 \in \NN$. The first layer $U_0$ additionally includes the embedding matrix $E_E \in \RR ^ {d \times |\Lambda_s|}$ and the last layer $U_T$ includes the unembedding matrix $E_U \in \RR ^ {|\Lambda_s| \times d}$. The particle flow function represents the evolution of a single token embedding, such that
\begin{align*}
f(x_t^i, \mu_t, U_t) &= W_t \sigma(W_{A,t} x_t^i + W_{b,t}) + \frac{\int _\XX \mu_t(\diff z)e^{\beta \langle W_{Q,t} x_t^i, W_{K,t} z \rangle} W_{V,t} z}{\int _ \XX \mu_t(\diff z)e^{\beta \langle W_{Q,t} x_t^i, W_{K,t} z \rangle} }\\
&=W_t \sigma(W_{A,t} x_t^i + W_{b,t}) + \frac{\sum _{j = 1}^L e^{\beta \langle W_{Q,t} x_t^i, W_{K,t} x^{j, k}_t \rangle } W_{V,t} x^{j, k}_t}{\sum _{l = 1}^{L} e^{\beta \langle W_{Q,t} x_t^i, W_{K,t} x^{l, k}_t \rangle }} = x^i_{t + 1}.
\end{align*}

We define the \emph{measure flow map} $\Phi : \PX \times \UU \to \PX$ such that
\begin{equation}\label{eq:MeasureFlowMap}
    \Phi(\mu, U) := \mu \circ T^{-1}_{\mu, U}
\end{equation}
where $T_{\mu, U} : \XX \to \XX$ is given by $T_{\mu, U}(x) = f(x, \mu, U)$ for all $x \in \XX$. A change-of-measures argument yields that $\Phi(\mu_t, U) = \mu_{t+1}$ for all $t\in\mathcal{T}$ and $k\in\mathcal{K}$. This shows that $\Phi$ is indeed the right tool to describe measure-level Transformer dynamics. Under compactness assumptions that can be realizable via practical tools, $\Phi$ is jointly continuous on its domain. Moreover, $\{\mu_t\}_{t\in\overline{\cT}}$ defines a weak Feller Markov decision process, that is, the process $\{(\mu_t, U_t) : t\in\cT\}\cup\{\mu_T\}$ is controlled Markov chain with state space $\PX$, action space $\UU$, and with transition kernel
\[
    \eta(\,\cdot\mid\mu, U) := \delta_{\Phi(\mu, U)}(\cdot)
\]
which satisfies: For any continuous and bounded $c:\PX\to\RR$, the function
\[
    \PX\times\UU\ni(\mu, U)\mapsto\int_{\PX} \eta(\mathrm{d}\mu' \mid \mu, U)\,c(\mu')
\]
is jointly continuous on $\PX\times\UU$, where $\PX$ is given the weak$^*$-topology, metrized by the $2$-Wasserstein loss
\begin{equation}\label{eq:2Wasserstein}
    W_2(P, Q) = \inf _{\pi \in \Pi (P, Q)} \int _{S \times S}\|x-y\|_2^2\,\pi(\mathrm{d}x, \mathrm{d}y)
\end{equation}
where $\Pi (P, Q) := \{\zeta \in \mathcal{P}(S \times S) : \zeta(S \times\cdot) = P(\cdot) \text{ and } \zeta(\cdot\times S) = Q(\cdot)\}$.

Note that the Transformer dynamics define a deterministic Markov chain. The noise in Transformer training, or in general, neural network training is due to mini-batching process.

\subsection{Data Set Construction} We generate $\tilde K$ sequences of length $L+1$ from the quantized process $\{q^s(Y_r)\}_{r\in\ZZ_+}$. We split each sequence into a training pair by taking $\bm{x}^k = (x^{1, k}, \dots, x^{L, k})$ as the first $L$ elements of the sequence and $\tilde y^k$ as the last element, for $k \in \tilde {\mathcal{K}} = \{1, \dots , \tilde K\}$. We aggregate these prompts to an initial single state-labeled data set, given by
$$\tilde D = \{(\bm{x}^k, \tilde y^k):k \in \tilde {\mathcal{K}} \} \in (\Lambda_s^L \times \Lambda_s)^{\tilde K}$$
where $\bm{x}^k$ is the feature vector and $\tilde{y}^k \in \Lambda_s$ is the target token embedding. The initial data set $\tilde D$ has training pairs indexed by $\tilde {\mathcal{K}}$, with possibly some duplicate feature vectors, i.e., $\bm{x}^j = \bm{x}^l$ for $j\neq l$, and $j, l \in \tilde {\mathcal{K}}$.

We construct a new data set $D$ from $\tilde D$ with unique feature vectors and labels as probability measures instead of individual token embeddings.
We index the unique feature vectors with $\mathcal{K} = \{1, \dots, K\}$, picking one representative index from $\tilde {\mathcal{K}}$ for each unique feature vector. For $k \in \mathcal{K}$, we define the empirical measure $y^k \in \mathcal P(\Lambda_s)$ conditioned on $\bm{x}^k$ as
\[
    y^k = \frac{\sum_{l=1}^{\tilde {K}} \delta_{\tilde y^l} \mathbf{1}_{\{\bm{x}^l = \bm{x}^k\}}}{\sum_{l=1}^{\tilde {K}} \mathbf{1}_{\{\bm{x}^l = \bm{x}^k\}}}
\]

Then $D = \{(\bm{x}^k, y^k):k \in \mathcal{K}\} \in (\Lambda_s^L \times \mathcal P(\Lambda_s))^K$ has unique training samples $\bm{x}^k$, with labels $y^k$ as empirical probability measures. Note that the denominator is non-zero since it is the count of samples in $\hat D$ with feature vector $\bm{x}^k$.

A distinction between this data set construction and that of most large language models (LLMs) is that here we exclude elements of generated process appearing after the target label before training at the data set level, whereas most current pre-trained LLMs use the technique of masking of \cite{devlin2019bertpretrainingdeepbidirectional} during training, see \citep{zhao2024analysing}.

We then construct $\mu_0^k := \frac{1}{L}\sum_{i=1}^L \delta_{x^{i,k}}$ and push it through the time horizon via lifted Transformer dynamics $\Phi$ for all $k\in\cK := \{1,\ldots, K\}$. We write $\{\mu_t^k\}_{\overline{\cT}}$ for the resulting process. If we concatenate all of these processes over the data set, then we obtain another MDP with state process $\{\boldsymbol{\mu}_t\}_{t\in\overline{\cT}} = \{(\mu_t^1,\ldots,\mu_t^K)\}_{t\in\overline{\cT}}$ for which the state space is $\cP(\XX)^K$, the action space is $\UU$ and the transition kernel is
\begin{equation}\label{eq:EtaKernel}
    \boldsymbol{\eta}^K(\,\cdot\mid \boldsymbol{\mu}_t, U) := \delta_{\boldsymbol{\Phi}(\boldsymbol{\mu}_t, U)}(\cdot), \quad t\in\cT.
\end{equation}
This can be regarded as an ensemble control problem \citep{li2009ensemble}.

\subsection{The Loss Structure}
Having defined the particle dynamics and the data set $D$, we now specify how to train the policy $\gamma$. The Transformer outputs an unembedding score vector $E_Ux^{k, L}_{T, \gamma} \in \RR ^{|\Lambda_s|}$ for each prompt $\bm x^k$. We convert this into a probability measure and penalize its distance from the empirical label $y^k \in \mathcal P (\Lambda_s)$.

We employ the softmax function to obtain probability distributions from our predicted unembedding scores.  For a vector $z \in \RR^{|\Lambda_s|}$, $\text{softmax}: \RR^{|\Lambda_s|} \to \mathcal{P}(\Lambda_s)$ maps $z$ onto the probability simplex as
\[
    \mathrm{softmax}(z) = \left(\frac{e^{z_1}}{\sum _ {j = 1} ^ {|\Lambda_s|}e^{z_j}}, \dots ,  \frac{e^{z_{|\Lambda_s|}}}{\sum _ {j = 1} ^ {|\Lambda_s|}e^{z_{|\Lambda_s|}}}\right).
\]
This grants us $\text{softmax}(z)_j \in (0, 1)$ and $\sum_{j = 1}^{|\Lambda_s|}\text{softmax}(z)_j= 1$, providing us a valid probability measure in $\mathcal P(S)$. A useful property of softmax is shifting invariance, where $\text{softmax}(z) = \text{softmax}(z + c\mathbf{1})$, for $c\in \RR$ applied component-wise to $z$.

We define $\nu^k := \text{softmax}(E_U {x_{T, \gamma}^{k, L}}) \in \PS$, where $x^{k, L}_{T, \gamma} \in \XX$ is initialized at $x^{k, L}_{0, \gamma} \in \XX$ with its empirical measure $\mu^k_{0, \gamma}$ evolving according to the pushforward $\Phi$ with actions determined by a policy $\gamma : \PX^K \to \UU$. We aim to find a policy $\gamma \in \Gamma$ that minimizes the following loss structure.

Cross-Entropy loss is commonly used for machine learning classification tasks and autoregressive generative models \citep{losses}. It is well-suited for next-token prediction, which can be seen as classification over the entire vocabulary.
For a true probability measure $P_1$ and a predicted measure $P_2$ in $\mathcal P(\Lambda_s)$, with $P_2$ being absolutely continuous with respect to some reference measure $\lambda _ o$, the cross entropy is given by
$$
\mathrm{CE}(P_1, P_2) = \mathbb E_{P_1}[-\log p_2(x)] = - \int _{\Lambda_s} P_1(\diff v)\,\log p_2(v)
$$
where the density of $P_2$, $p_2 = \frac{\mathrm{d}P_2}{\mathrm{d}\lambda_o}$, is given by the Radon-Nikodym derivative. The continuous case has undesirable properties, such as potentially negative entropy, so we use the following discrete case, which is applicable for next-token prediction over a finite vocabulary. For discrete $P_1,P_2 \in \mathcal P(\Lambda_s)$, the cross entropy is given by
$$
\mathrm{CE}(P_1, P_2) = - \sum _{v \in \Lambda_s} P_1(v) \log P_2(v).
$$
For our formulation, we sum the prompt-wise losses over the entire data set to get our final cross entropy loss functional $V_{\rm CE} : (\Lambda_s^L \times \mathcal P (\Lambda_s))^K \times \Gamma \to \RR$ given by

\begin{equation}
    \label{eq:cross_entropy_loss}
    V_{\rm CE}(D, \gamma) = \sum_{k = 1} ^ K \frac{n_k}{\tilde K}\mathrm{CE}(y^k, \nu ^k) = -\sum _{k = 1} ^ K \frac{n_k}{\tilde K}\sum _ {v \in \Lambda_s} y^k(v) \log \nu ^k (v)
\end{equation}
where $n_k = \sum_{l = 1}^{\tilde{K}} \chi_{\mathbf{x}^k}(\mathbf{x}^{l})$ for all $k=1,\ldots,K$.

\eqref{eq:cross_entropy_loss} is equivalent to the empirical risk of the original sample:
\[
    \frac{1}{\tilde{K}}\sum_{l = 1}^{\Tilde{K}} -\log\nu(\tilde{y}^{l}\mid\mathbf{x}^l) = \sum_{k=1}^K\frac{n_k}{\tilde{K}}\bigg(\frac{1}{n_k}\sum_{\substack{l=1\\\mathbf{x}^l=\mathbf{x}^k}}^{\tilde{K}}-\log{\nu(\tilde{y}^l\mid\mathbf{x}^l)}\bigg) = \sum_{k = 1} ^ K \frac{n_k}{\tilde K}\mathrm{CE}(y^k, \nu ^k).
\]

The cross entropy loss has no notion of the underlying geometry of the state space. It depends only on where $p$ assigns the same probability mass as $q$, not the proximity of non-overlapping masses for $p, q \in \PX$. This property is useful for vocabularies where the token geometry doesn't reflect semantic information; adjacent tokens can have unrelated meanings.

We only compare the probability measure of the last position ($p_L = L \in \mathrm{PE}_{L}$) in the output vectors to our label. This is also a trick used in practice in Transformer fine-tuning, where features and labels are separated in this way, see \cite[Equation~3]{radford2018improving}.

\subsection{Finite Model Approximations to Transformers}\label{sub:approximation}

We select parameters $l, m, n \in \mathbb Z _+$ to quantize the above MDP into MDP$^{(l, m, n)}$ as follows: we quantize the values of the probability measure on the state space with $l$, the state space itself with $n$, and the action space with $m$.
We quantize the token embedding space $\XX$ as $\XX_n$. By the compactness of $\XX$, there exist some $M_\XX \in \RR_+$ that tightly bounds all coordinates of points $x = (x_1, x_2, \dots, x_d)$ in the embedding space $\XX \subset \RR^d$, i.e.
$$
M_\XX := \max _{x \in \XX, {j \in \{0, \dots, d\}}} ||x_j||.
$$
So $\XX \subset [-M_\XX, M_\XX] ^d$. We place a grid $\XX_n$ of $n^d$ many points uniformly over $[-M_\XX, M_\XX] ^d$ with spacing $\frac{2M_\XX}{n-1}$, which is finite for $n > 1$. Then, given $x \in \XX$, we can find some $\hat x \in \XX_n$ that satisfies
$$
\|x - \hat x\| \leq \frac{1}{2} \frac{2M_\XX}{n-1} \sqrt d =  \frac{M_\XX\sqrt d}{n-1}
$$
where $\sqrt{d}$ comes from taking the L2 norm in $d$ dimensions. We can arrive at such a grid arrangement by taking
$$
\XX_n = \{ (-M_\XX + \frac{2M_\XX \hat k_1}{n-1}, \dots , -M_\XX + \frac{2M_\XX\hat k _d}{n-1}) : \hat k _j \in \{0, \dots, n-1\} \}.
$$
Quantizing $\XX$ in this way ensures that the number of points is always $n^d$ which lends itself nicely for implementation.


We quantize the action space, imposing a grid on the entries of the parameter matrices. Due to the compactness of $\UU$, there exists some $M_\UU < \infty$ that is the tightest bound for all elements of the weights $(W, A, b, Q, K, V, E_E, E_U )$ of $U\in \UU$, i.e.
$$
M_{\UU} := \max _ {U \in \UU, \theta \in U; i,j} |\theta_{i, j}|
$$
where $\theta _ {i,j}$ are matrix entries of $\theta$. Thus, $ -M_\UU \leq \theta _ {i,j} \leq M_\UU $, and we place $m$ points uniformly over each parameter's matrix entries with spacing $\frac{2M_\UU}{m-1}$, which is finite for $m > 1$.

Then, given $U=(W_W, W_A, W_b, W_Q, W_K, W_V, E_E, E_U) \in \UU$, we can find some $\hat U = (\hat{W_W}, \hat{W_A}, \hat{W_b},$ $\hat{W_Q}, \hat{W_K}, \hat{W_V}, \hat E_E, \hat E_U)$ such that
\begin{align*}
|| U - \hat U||_F &\leq \frac{1}{2}\frac{2M_\UU}{m-1} \bigg (\sqrt{d \cdot d_1} +\sqrt{d_1 \cdot d} +\sqrt{d_1} +\sqrt{d_2 \cdot d}+\sqrt{d_2 \cdot d}\\
&\hspace{12em}+\sqrt{d \cdot d} + \sqrt {|\Lambda_s| \cdot d}  + \sqrt{ d \cdot |\Lambda_s|}\bigg)\\
&= \frac{M_\UU}{m-1} \bigg(2\sqrt{d \cdot d_1} + \sqrt{d_1} + 2\sqrt{d_2 \cdot d} + d + 2\sqrt{|\Lambda_s| \cdot d} \bigg )
\end{align*}
where the coefficient is determined from the dimensions of the parameter matrices and $||.||_F$ is the product-wise Frobenius norm (as squared):
\[
    \begin{split}
        \|U\|_F^2 &= \mathrm{Tr}(W W ^T) + \mathrm{Tr}(A A ^T) + \mathrm{Tr}(b b ^T)\\
        &\qquad+ \mathrm{Tr}(Q Q ^T) + \mathrm{Tr}(K K ^T) + \mathrm{Tr}(V V ^T) + \mathrm{Tr}(E_E E_E ^T) + \mathrm{Tr}(E_U E_U ^T).
    \end{split}
\]
Note that Transformers with $T>1$ layers do not have actions containing both $E_E$ and $E_U$; however, we keep them here to provide an upper bound. A quantization that adheres to the above conditions can be obtained by taking

\begin{align*}
    \UU_m = \big \{ (\hat{W_W}, \hat{W_A}, \hat{W_b},\hat{W_Q}, \hat{W_K}, \hat{W_V}, \hat E_E, \hat E_U) :&\,\theta _{i,j} = -M _\UU + \frac{2M_\UU \hat k_m}{m-1}, \\
    &\hspace{-12em}\theta \in \{\hat{W_W}, \hat{W_A}, \hat{W_b},\hat{W_Q}, \hat{W_K}, \hat{W_V}, \hat E_E, \hat E_U\}, \hat k_m \in \{0, \dots, m-1\} \big\}.
\end{align*}

Finally, we quantize $\PXn$ by taking the set of quantized probability measures on $\XX_n$ as
$$
\mathcal P_l(\XX_n) := \{ (p_\alpha)_{\alpha = 1}^ {|\XX_n|} : p_{\alpha} \in \frac{1}{l-1}\{0, \dots, l-1\}, \sum _{\alpha = 1} ^ {|\XX_n|} p_\alpha = 1\}.
$$

We define the quantizers $Q_n : \XX \to \XX_n$ mapping $x \in \XX$ to the closest point in $\XX_n$, and $R_l : \PXn \to \mathcal P_l(\XX _n)$ mapping $\hat \mu \in \PXn$ to the closest element in $\mathcal P_{l}(\XX _n)$ using \cite{reznik2011algorithm}'s algorithm. Quantizing the spaces in this fashion ensures a deterministic grid that is straightforward for implementation while guaranteeing the same $O(1/n)$ asymptotic error rate. 

\begin{figure}[H]
    \centering
    \begin{tikzcd}[row sep=4em, column sep=8em]
    & \PX \arrow[r, "\text{pushforward via } Q_n"] & \PXn \arrow[r, "R_l"] & \mathcal{P}^{(l)}(\XX_n) \\
    S \arrow[r, "\text{embed with } E_{E}"'] & \XX \arrow[u, "\text{lift to a probability measure}"] \arrow[r, "Q_n"'] & \XX_n \arrow[u, "\text{lift to a probability measure}"']
    \end{tikzcd}
    \caption{Two-stage State Quantization}
    \label{fig:quantization}
\end{figure}

\subsection{Dynamic Programming Based Training}\label{sub:dp}

To guarantee that the quantized Transformer is trained optimally, we solve the finite state and action space $\text{MDP}^{(l, n, m)}$ exactly using backward dynamic programming rather than using gradient-based or other training schemes that lack such a guarantee. We define $\Phi ^ {(n)} : \PXn \times \UU \to \PXn$ as the pushforward on the quantized embedding space: $\Phi ^ {(n)} (\hat \mu, U) := \hat \mu \circ T_{\hat \mu, U}^{-1}$, where $T$ is the same as in the definition for $\Phi$, $\hat \mu \in \PXn$ and $U \in \UU$. The quantized deterministic ensemble pushforward $\bm \Phi^{(l, n, m)}$ maps $\PXn^K\times \UU_m \to \PXn^K$ by quantizing and applying the single-measure pushforward $\Phi ^{(n)} : \PXn \times \UU\to \mathcal P (\XX_n)$ point-wise and quantizing with $R_l$, that is, $\bm \Phi ^{(l, n, m)}(\hat {\bm \mu}, U):= (R_l(\Phi ^{(n)}(\hat \mu ^ 1, U), \dots , R_l(\Phi ^{(n)}(\hat \mu ^ K, U) $. It induces the transition kernel $\eta ^{(l, n, m)} : \PXn\times \UU_m \to \mathcal P(\PXn)$. This characterizes $\text{MDP}^{(l, n, m)}$ with a state space $\PXn^K$, action space $\UU_m$, transition kernel $\eta^{(l, n, m)}$ and the discrete cross-entropy loss $V_{CE}$ from ~\ref{eq:cross_entropy_loss}.

Note that this structure leverages all our predictive information, but does not necessarily translate into model performance in the following case: taking a small $l$ may artificially reduce the loss without improving performance due to coarseness.

For a given data set $D$, we pose the dynamic programming equations for $\text{MDP}^{(l, n, m)}$ as follows
\begin{equation}
    \label{eq:DP}
    \begin{split}
        \mathfrak{C}_{l, n, m, T}(\hat{\boldsymbol{\mu}}) &:= \frac{1}{K}\sum_{k = 1}^K \hat{\mathrm{CE}}(\hat \mu^{k, N}_T ,\hat y^k)\\
        \mathfrak C _ {l, n, m, t} (\hat {\bm \mu}) &:= \min_{U \in \UU_m} \sum _{\hat{\bm \mu}_{t +1} \in P^{(l)}(\XX_n)^K} \eta^{(l, n, m)}(\hat{\bm \mu}_{t + 1}\mid\hat{\bm \mu}_t, U)\,\mathfrak C _ {l, n, m, t + 1} (\hat{\bm \mu}_{t + 1})\\
        &\;=\min_{U \in \UU_m} \mathfrak{C}_{l, n, m, t+1} (\boldsymbol{\Phi}^{(l, n, m)}(\hat {\bm \mu}, U))
    \end{split}
\end{equation}

for all $\hat{\bm \mu} = (\hat \mu ^1, \dots \hat \mu ^K)\in \mathcal P^{(l)}(\XX _n)^K$ and $t \in \mathcal T$. The last equality is due to the deterministic nature of $\bm \Phi ^{(l, n, m)}$. Following \cite[Theorem~17]{akman2026optimal}, it follows that, under the compactness of $\XX$ and $\UU$, due to the weak continuity property of the constructed MDP, the quantized training procedure with the dynamic programming equations~\eqref{eq:DP} is asymptotically globally optimal for the original problem as quantization gets finer, which then results in a near-optimal open-loop controls under fine enough quantization.

\section{Construction of an Approximating Transformer}\label{sec:Approximation}

With the machinery of Section~\ref{sec:problem}, we relax the next token prediction problem for observations generated by latent ergodic stochastic processes into a filtering problem with finite-window and quantized data. The question then becomes: \emph{Does there exist a Transformer that can realize such filter?} The answer is affirmative and mostly expected since such predictive distribution is a lookup table. In this part, we give a step-by-step recipe for how to construct such Transformer.
 
Our construction in this section follows a strategy similar to the one used in \citep{yun2020are} to show that fixed-width Transformers with positional encodings are universal approximators of continuous sequence-to-sequence functions on compact domains, see also \citep{yun2020connections, geshkovski2024measure, jiang2024approximation, kajitsuka2024transformers, wang2024understanding, furuya2025transformers}. Their argument quantizes the input, computes a \emph{contextual mapping} with attention layers (which is a distinct identifier for each quantized input), and maps identifiers to outputs with feed-forward layers. Our inputs are already quantized, the gatherer layers of §\ref{sub:Gatherers} compute and collect the contextual mapping at the readout token, and the ReLU network of §\ref{sub:Interpolator} performs the value mapping by interpolation.

\begin{figure}[H]
    \centering
    \includegraphics[width=\linewidth]{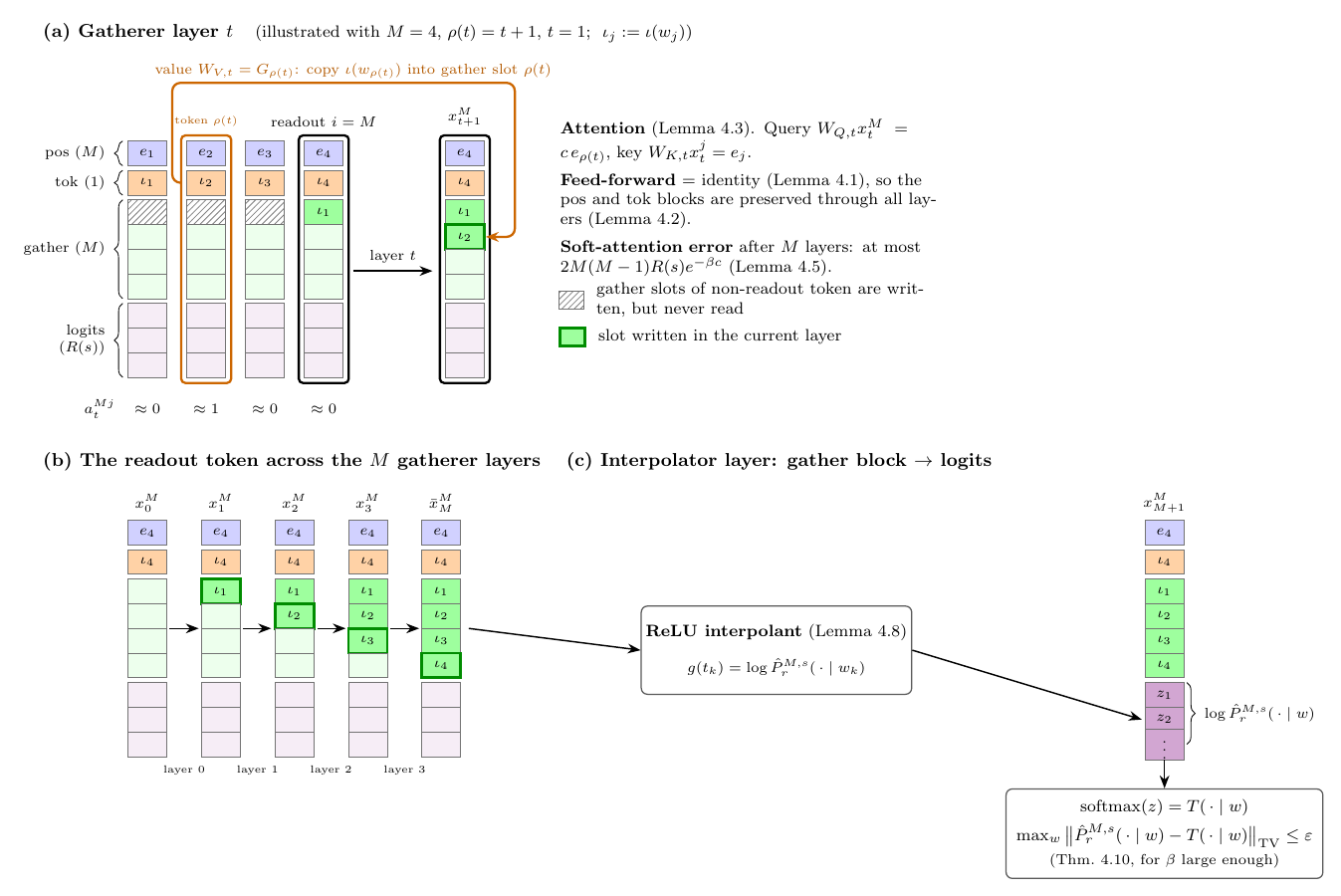}
    \caption{Description of the Transformer built in this section.}
    \label{fig:architecture}
\end{figure}

\subsection{Transformer Layers and Token Embeddings}

\subsubsection{Token Embeddings}\label{sub:TokenEmbeddings}

Throughout this section, fix the window length $M$ and the quantization level $s$. We fix an enumeration $\iota$ of the $1/s$-net $\Lambda_s$, meaning that $\iota:\Lambda_s\to\{1,\ldots,R(s)\}$ is a bijective map. Recall that $R(s)$ is the cardinality of $\Lambda_s$. This makes each window $w = (w_1,\ldots,w_M)$ an element of $\Lambda_s^M$, which has $R(s)^M$ elements. We set the token embedding dimension to $d = 2M + R(s) + 1$ with the following decomposition in mind
\begin{equation}
    \label{eq:decomp}
     d = \underbrace{M}_{\rm positional}\quad+\quad\underbrace{1}_{\rm token\,value}\quad+\quad\underbrace{M}_{\rm gather}\quad+\quad\underbrace{R(s)}_{\rm logits}.
\end{equation}
According to decomposition~\eqref{eq:decomp}, we split each token embedding into four parts: $x = (x^{\rm pos}, x^{\rm tok}, x^{\rm gat}, x^{\rm log})\in\RR^M\times\RR^1\times\RR^M\times\RR^{R(s)} = \RR^d$. Let $\Pi_{\bullet}$ be the coordinate projections from $\RR^d$ to $\bullet$ space where $\bullet\in\{{\rm pos}, {\rm tok}, {\rm gat}, {\rm log}\}$. Lastly, $\{e_i\}_{i=1}^M$ form the standard basis for $\RR^M$, which we use for positional encodings and \emph{gathering tokens}.

For any $v\in\Lambda_s$ at position $i\in\{1,\ldots,M\}$, we define
\begin{equation}
    \label{eq:embedding}
    E_E(v, p_i) := (e_i, \iota(v), 0_M, 0_R)\in\RR^d
\end{equation}
where $0_M$ and $0_R$ are the zero vectors of dimensions $M$ and $R(s)$, respectively. The map $E_E$ gives rise to an unembedding function satisfying the requirements of Assumption~\ref{assumption:B1} since $\Pi_{\rm pos}$ recovers the positional encodings. Using the embedding operator, for any window $w\in\Lambda_s^M$, we define $x_0^{i,(w)} = E_E(w_i, p_i)$ for all $i\in\{1,\ldots,M\}$ and $\mu_0^{(w)} := \frac{1}{M}\sum_{i=1}^M\delta_{x_0^{i,(w)}}$.

\subsubsection{Gatherer Layers}\label{sub:Gatherers}

Let us fix a bijection $\rho:\{0,\ldots,M-1\}\to\{1,\ldots,M\}$ sending each layer to a position. We need it to copy the token at position $\rho(t)$ into the gather coordinate of the readout token at each layer $t\in\{0,\ldots,M-1\}$. We form the first $M$ layers as\emph{gatherer layers}, which are responsible for collecting all the information in the window to the readout position at the $M-1^{\rm th}$ layer. As Transformers have feed-forward parts at each layer, we need to give role to them:

\begin{lemma}\label{lem:RoleOfFeedForward}
    Let $\sigma$ be {\tt ReLU} and assign
    \[
        W_A := \begin{bmatrix}
            I_d \\
            -I_d
        \end{bmatrix}\qquad
        W_b := 0_{2d}\qquad
        W_W := \begin{bmatrix}
            I_d & -I_d
        \end{bmatrix}.
    \]
    Then, $W_W\sigma(W_Ax+W_b) = x$ for all $x\in\RR^d$.
\end{lemma}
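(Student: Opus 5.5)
The plan is a direct computation resting on one scalar identity: for every real $a$, $\mathrm{ReLU}(a) - \mathrm{ReLU}(-a) = a$. This holds because if $a\geq 0$ the left side is $a - 0$, and if $a<0$ it is $0-(-a)=a$. The lemma is this identity applied coordinatewise, packaged by the stated block matrices.

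First I compute the pre-activation. Since $W_b = 0_{2d}$, we have
\[
    W_Ax + W_b = \begin{bmatrix} x \\ -x\end{bmatrix}\in\RR^{2d}.
\]
Because $\sigma$ acts componentwise, this gives $\sigma(W_Ax+W_b) = \begin{bmatrix}\sigma(x)\\ \sigma(-x)\end{bmatrix}$, where $\sigma(x)$ is understood coordinatewise.

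Next I multiply by $W_W = \begin{bmatrix} I_d & -I_d\end{bmatrix}$, which yields $\sigma(x) - \sigma(-x)$. Its $j$th coordinate is $\mathrm{ReLU}(x_j) - \mathrm{ReLU}(-x_j) = x_j$ by the scalar identity. Hence $W_W\sigma(W_Ax+W_b)=x$ for all $x\in\RR^d$.

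I also record the dimension bookkeeping, which matches the conventions of Section~\ref{sec:OCFormulation} with hidden width $d_1 = 2d$: $W_A\in\RR^{2d\times d}$, $W_b\in\RR^{2d}$, and $W_W\in\RR^{d\times 2d}$. The argument has no genuine obstacle; the only thing to be careful about is that $\sigma$ is applied componentwise, so that the block structure separates cleanly.
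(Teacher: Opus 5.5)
Your proof is correct and is essentially the paper's argument: the paper also computes $\sigma(W_Ax+W_b)$ as the stacked vector of positive and negative parts $x_+ = \sigma(x)$, $x_- = \sigma(-x)$, and then applies $W_W$ to get $x_+ - x_- = x$. Your explicit scalar case check and the dimension bookkeeping (hidden width $d_1 = 2d$) are small additions that the paper leaves implicit.
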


We apply $(W_W, W_A, W_b)$ in Lemma~\ref{lem:RoleOfFeedForward} for each gatherer layer, that is 
\[
    (W_{W,t}, W_{A,t}, W_{b,t}) = (W_W, W_A, W_b)
\]
for each $t\in\{0,\ldots,M-1\}$, where the right hand side as in Lemma~\ref{lem:RoleOfFeedForward}. Consequently, layer-wise updates take the form
\begin{equation}
    \label{eq:Updates}
    x_{t+1}^i = x_t^i + \sum_{j=1}^M\frac{\mathrm{e}^{\beta\langle W_{Q,t} x_t^i, W_{K,t}x_t^j\rangle}}{\sum_{j'=1}^M \mathrm{e}^{\beta\langle W_{Q,t} x_t^i, W_{K,t}x_t^{j'}\rangle}}W_{V,t}\,x_t^j
\end{equation}
for each $i\in\{1,\ldots, M\}$ and $t\in\{0,\ldots,M-1\}$.

We now specify the roles of the query, key and value matrices. For keys: $W_{K,t} = \Pi_{\rm pos}$ so that $W_{K,t}x_t^i = (x_t^i)^{\rm pos} = e_i$ for all $i\in\{1,\ldots, M\}$ and $t\in\{0,\ldots,M-1\}$. The task we assign to queries is a bit more specialized: For each $t\in\{0,\ldots,M-1\}$, let $A_t\in\RR^{M\times M}$ be a matrix defined on the standard basis $\{e_i\}_{i=1}^M$ of $\RR^M$, as described in §\ref{sub:TokenEmbeddings}, whose action is defined by
\[
    A_te_M = e_{\rho(t)} \qquad A_te_i := e_i\;(i\neq M).
\]
Then we set $W_{Q,t} := cA_t\Pi_{\rm pos}$ for some constant $c>0$ satisfying
\begin{equation}\label{eq:ConstantC}
    c = \max_{\substack{c'\|A_t\| \leq M_{\UU}\\0\leq t \leq M-1}} c'
\end{equation}
so that query matrix is admissible in $\UU$. For the value matrices: Let
\begin{equation}\label{eq:G_j}
    G_j :=  \begin{bmatrix}
                0_M & 0 & e_j & 0_{R(s)}
            \end{bmatrix}^T\Pi_{\rm tok}
\end{equation}
for each $j\in\{1,\ldots,M\}$, which is a rank one matrix (since $\Pi_{\rm tok}$ projects onto a $1$-dimensional space) reading the token-value coordinate and then writing it into the gather coordinate $j$. Then we set the value matrices $W_{V,t} := G_{\rho(t)}$ for each $j\in\{1,\ldots,M\}$.

To sum up, we separate the dynamics into two parts: the feed-forward layers are responsible for carrying the token value and the positional information across the layers. Since each token except the one at the readout position ($i=M$) has null gatherers and logits, the feed-forward flow does not interfere with gatherer and logit positions. On the other hand, by construction of the self-attention layer, it is only concerned with writing the correct information to the gatherer positions of the readout token at each layer by a prescribed manner. Note that $W_{V,t}$ reads the token information and writes it to the $\rho(t)^{\rm th}$ position of the $M^{\rm th}$ token. As a result, the self-attention layers do not change the positional and token values. Therefore, both layer types act as counterparts of each other. In short, the feed-forward layers carry the relevant information through the Transformer dynamics, while the self-attention layers use that information in a sequential manner. In the following, we make this high-level discussion rigorous.

\begin{lemma}\label{lem:Invariance}
    For all $w\in\Lambda_s^M$, $i\in\{1,\ldots,M\}$ and $t\in\{0,\ldots,M\}$, we have
    \[
        (x_t^i)^{\rm pos} = e_i \qquad \text{and} \qquad (x_t^i)^{\rm tok} = \iota(w_i).
    \]
\end{lemma}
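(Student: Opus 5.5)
We argue by induction on the layer index $t\in\{0,\ldots,M\}$, with the window $w\in\Lambda_s^M$ and the token index $i$ fixed but arbitrary. For $t=0$ there is nothing to prove: by the embedding \eqref{eq:embedding}, $x_0^{i,(w)} = E_E(w_i,p_i) = (e_i,\iota(w_i),0_M,0_R)$, so $(x_0^i)^{\rm pos}=e_i$ and $(x_0^i)^{\rm tok}=\iota(w_i)$.

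For the inductive step, suppose the two identities hold at some layer $t\le M-1$. We use the update rule \eqref{eq:Updates}, which is valid because each gatherer layer uses the feed-forward weights of Lemma~\ref{lem:RoleOfFeedForward}, so the feed-forward block returns exactly $x_t^i$. Hence
\[
x_{t+1}^i = x_t^i + \sum_{j=1}^M a_{t}^{ij}\, W_{V,t}x_t^j,
\]
where $a_t^{ij}\ge 0$ are the softmax weights and satisfy $\sum_j a_t^{ij}=1$. It is enough to show that $\Pi_{\rm pos}W_{V,t}=0$ and $\Pi_{\rm tok}W_{V,t}=0$.

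This follows directly from the definition \eqref{eq:G_j}. We have $W_{V,t}=G_{\rho(t)}$, and the range of $G_{\rho(t)}$ is spanned by the vector $(0_M,0,e_{\rho(t)},0_{R(s)})$, which lies entirely in the gather block. Its positional and token-value blocks are both zero, so every attention contribution vanishes after applying $\Pi_{\rm pos}$ or $\Pi_{\rm tok}$. Therefore
\[
(x_{t+1}^i)^{\rm pos}=(x_t^i)^{\rm pos}=e_i, \qquad (x_{t+1}^i)^{\rm tok}=(x_t^i)^{\rm tok}=\iota(w_i),
\]
which closes the induction.

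The argument does not depend on the attention weights at all. The only facts used are that the weights are finite, which holds because the softmax is well defined on the finite empirical measure, and that they are multiplied by vectors with zero positional and token components. So the values of the query scaling $c$ and of $\beta$ play no role in this lemma.

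The only point that needs care is the index range. The lemma is claimed for $t\in\{0,\ldots,M\}$, while the update rule \eqref{eq:Updates} is used only for $t\le M-1$. Since the step $t\to t+1$ is applied only for $t\le M-1$, the induction never steps past a gatherer layer and so never touches the later interpolation layers, whose feed-forward blocks differ.
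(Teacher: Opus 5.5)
Your proof is correct and is essentially the paper's own argument. The paper also inducts on $t$, takes the base case from the embedding \eqref{eq:embedding}, and in the inductive step uses that the feed-forward block is the identity (Lemma~\ref{lem:RoleOfFeedForward}) while $W_{V,t}=G_{\rho(t)}$ writes only into the gather block. Hence the attention term leaves the positional and token-value coordinates unchanged, whatever the attention scores are.
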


Because of Lemma~\ref{lem:Invariance}, the decomposition in~\eqref{eq:decomp} and choices of network weights are almost clear. To support, this design choice also makes the error propagation analysis simpler: The attention scores depend only on $x^{\rm pos}$ while the values are determined by $x^{\rm tok}$ in an exact way at each layer. Before proving the complete characterization of the self-attention block under our construction, we define a helper function
\[
    \mathrm{sel}_t(i) := \begin{cases}
        \rho(t), & i = M,\\
        i, & i\neq M.
    \end{cases}
\]
which is defined for all $t\in\{0,\ldots,M-1\}$ and $i\in\{1,\ldots,M\}$.

\begin{lemma}\label{lem:Attention}
    For every $t\leq M-1$ and all $i,j\in\{1,\ldots,M\}$, we have
    \[
        \langle W_{Q,t} x_t^i, W_{K,t}x_t^j\rangle = c\,\chi_{j}(\mathrm{sel}_t(i)).
    \]
    Hence,
    \[
        a_t^{i\mathrm{sel}_t(i)} = \frac{e^{\beta c}}{e^{\beta c} + M - 1} \qquad\text{and}\qquad a_t^{ij} = \frac{1}{e^{\beta c} + M - 1}\;\;(j\neq\mathrm{sel}_t(i)),
    \]
    where $a_t^{ij}$'s are exactly as in~\eqref{eq:AttentionWeights} and $c$ is as in~\eqref{eq:ConstantC}.
\end{lemma}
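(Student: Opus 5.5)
The plan is to compute the inner product directly from the construction and then substitute it into the softmax in \eqref{eq:Updates}. By Lemma~\ref{lem:Invariance}, at every layer $t\le M-1$ we have $(x_t^i)^{\rm pos}=e_i$ for all $i$. The argument does not depend on what the gather or logit coordinates hold at layer $t$, because both $W_{Q,t}$ and $W_{K,t}$ factor through $\Pi_{\rm pos}$. This gives
\[
W_{K,t}x_t^j=\Pi_{\rm pos}x_t^j=e_j,\qquad W_{Q,t}x_t^i=cA_t\Pi_{\rm pos}x_t^i=cA_te_i .
\]

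Next I split into the two cases in the definition of $\mathrm{sel}_t$. If $i\neq M$, then $A_te_i=e_i=e_{\mathrm{sel}_t(i)}$. If $i=M$, then $A_te_M=e_{\rho(t)}=e_{\mathrm{sel}_t(i)}$. In both cases $W_{Q,t}x_t^i=c\,e_{\mathrm{sel}_t(i)}$. Orthonormality of the standard basis then gives $\langle c\,e_{\mathrm{sel}_t(i)},e_j\rangle=c\,\chi_j(\mathrm{sel}_t(i))$, which is the first claim.

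For the attention weights, I write $a_t^{ij}$ as the softmax coefficient in \eqref{eq:Updates}, namely
\[
a_t^{ij}=\frac{e^{\beta\langle W_{Q,t}x_t^i,W_{K,t}x_t^j\rangle}}{\sum_{j'}e^{\beta\langle W_{Q,t}x_t^i,W_{K,t}x_t^{j'}\rangle}} .
\]
By the first part, exactly one index $j'=\mathrm{sel}_t(i)$ contributes $e^{\beta c}$ to the denominator, and each of the remaining $M-1$ indices contributes $e^0=1$. The denominator is therefore $e^{\beta c}+M-1$. The numerator is $e^{\beta c}$ when $j=\mathrm{sel}_t(i)$ and $1$ otherwise, which gives both displayed formulas.

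The main point needing care is that $\mathrm{sel}_t(i)$ is always a single well-defined index in $\{1,\ldots,M\}$, so that exactly one term in the denominator equals $e^{\beta c}$. This holds because $\rho(t)\in\{1,\ldots,M\}$. The case $\rho(t)=M$, where $A_t$ acts as the identity, is consistent with this. One also needs $c>0$ to be well defined by \eqref{eq:ConstantC}. That requires $\|A_t\|$ to be finite and nonzero, which holds because $A_t$ maps basis vectors to basis vectors. The rest is direct substitution.
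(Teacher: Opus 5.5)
Your proposal is correct and follows the paper's own argument. You reduce query and key to the positional block via Lemma~\ref{lem:Invariance}, use $A_te_i=e_{\mathrm{sel}_t(i)}$ casewise to obtain $c\,\chi_j(\mathrm{sel}_t(i))$, and count the single $e^{\beta c}$ term in the softmax denominator; your off-diagonal weight $1/(e^{\beta c}+M-1)$ is the correct one, whereas the paper's written proof has a typo in that case, with $\exp(\beta c)$ in the numerator.
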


\begin{remark}
    Note that $c$ is a design parameter, which is determined by the action space diameter $M_\UU$ per \eqref{eq:ConstantC}. Therefore, the attention scores are calculated using only system intrinsic values given an input. Also, we remind that attention scores are determined solely by the positional encodings by our construction.
\end{remark}

From Lemma~\ref{lem:Attention}, it is clear that we cannot recover the token information without any errors. Ideally, we want to read the token value from $\rho(t)^{\rm th}$ position and write it to the corresponding gatherer coordinate. However, the self-attention yields a weighted average of all tokens. The next lemma quantifies this error while providing computed attention scores across all particles.

\begin{lemma}\label{lem:Error}
    Let $\bar{x}_t^M$ be the readout token under (hard) selection, that is $\bar{x}_{t+1}^M = \bar{x}_t^M + G_{\rho(t)}\bar{x}_t^{\rho(t)}$ for all $t\in\{0,\ldots, M-1\}$ with $\bar{x}_0^M = x_0^M$. Then, the final error between the hard choice and the soft choice\footnote{See Remark~\ref{rem:Choice} for the distinction and what this buys for us in the implementation phase.} is quantified as follows:
    \[
        \|x_M^M - \bar{x}_M^M\|_2 \leq 2M(M-1)R(s)e^{-\beta c}.
    \]
    Moreover, $\bar{x}_M^M := (e_M, \iota(w_M), (\iota(w_1),\ldots,\iota(w_M)), 0_R)$. 
\end{lemma}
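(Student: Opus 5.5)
We track the soft dynamics~\eqref{eq:Updates} against the hard-selection recursion layer by layer, using Lemma~\ref{lem:Invariance} and Lemma~\ref{lem:Attention}. By Lemma~\ref{lem:Invariance}, for every $t\le M$ and every $j$ we have $(x_t^j)^{\rm tok}=\iota(w_j)$ exactly. Since $W_{V,t}=G_{\rho(t)}$ only reads the token coordinate, this gives
\[
W_{V,t}x_t^j=\iota(w_j)\,\tilde e_{\rho(t)}, \qquad \tilde e_{\rho(t)}:=(0_M,0,e_{\rho(t)},0_R)^T .
\]
This vector depends only on the initial window and not on the accumulated gather coordinates. Consequently the attention increment at the readout token is
\[
x_{t+1}^M-x_t^M=\Big(\sum_{j=1}^M a_t^{Mj}\iota(w_j)\Big)\tilde e_{\rho(t)}.
\]
The hard increment is $\iota(w_{\rho(t)})\,\tilde e_{\rho(t)}$. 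Both recursions start from $x_0^M=\bar x_0^M$ and their increments are exact in every other coordinate, so no error is propagated or amplified through later layers. We can therefore telescope:
\[
x_M^M-\bar x_M^M=\sum_{t=0}^{M-1}\Big(\sum_{j}a_t^{Mj}\iota(w_j)-\iota(w_{\rho(t)})\Big)\tilde e_{\rho(t)}.
\]

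Next we bound each scalar term. Since $\mathrm{sel}_t(M)=\rho(t)$, Lemma~\ref{lem:Attention} gives $1-a_t^{M\rho(t)}=\frac{M-1}{e^{\beta c}+M-1}$ and $a_t^{Mj}=\frac{1}{e^{\beta c}+M-1}$ for $j\ne\rho(t)$. Writing $\sum_j a_t^{Mj}\iota(w_j)-\iota(w_{\rho(t)})=\sum_{j\neq\rho(t)}a_t^{Mj}(\iota(w_j)-\iota(w_{\rho(t)}))$ and using $|\iota(w_j)-\iota(w_{\rho(t)})|\le R(s)$, each term is bounded by
\[
\frac{(M-1)R(s)}{e^{\beta c}+M-1}\le (M-1)R(s)e^{-\beta c}.
\]
A cruder bound that does not use the cancellation, $\sum_{j\ne\rho(t)}a^{Mj}_t|\iota(w_j)|+(1-a_t^{M\rho(t)})|\iota(w_{\rho(t)})|$, gives $2(M-1)R(s)e^{-\beta c}$, which is enough for the stated constant. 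Summing over the $M$ layers with the triangle inequality (the $\tilde e_{\rho(t)}$ are in fact orthogonal since $\rho$ is a bijection, but we do not need this) yields $2M(M-1)R(s)e^{-\beta c}$.

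Finally, the closed form of $\bar x_M^M$ follows by unrolling the hard recursion. Since $\rho$ is a bijection, each gather coordinate $\rho(t)$ is written exactly once, with the value $\iota(w_{\rho(t)})$. Starting from $x_0^M=(e_M,\iota(w_M),0_M,0_R)$, this gives gather block $(\iota(w_1),\ldots,\iota(w_M))$. The positional and token parts are unchanged because $G_j$ writes only to gather coordinates, and the logits remain zero. The main point requiring care is the first step: confirming via Lemma~\ref{lem:Invariance} that the value vectors are exactly window-determined at every layer, which rules out error compounding. That lemma rests on the feed-forward identity of Lemma~\ref{lem:RoleOfFeedForward}, together with the fact that attention writes only to gather coordinates, so non-readout tokens also stay exact in their pos and tok parts.
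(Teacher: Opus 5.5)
Your proof is correct and matches the paper's argument: Lemma~\ref{lem:Invariance} fixes the value vectors $G_{\rho(t)}x_t^j=\iota(w_j)\tilde e_{\rho(t)}$ by the window, Lemma~\ref{lem:Attention} bounds the per-layer discrepancy at the readout token by $2(M-1)R(s)e^{-\beta c}$, and the additive errors are then summed over the $M$ layers. Your cancellation remark, which halves the constant, and your explicit unrolling of the hard recursion for the closed form of $\bar x_M^M$ (left implicit in the paper) are both valid additions.
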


Remaining part of our construction is based on \emph{distinguishability} of different tokens parametrized by different initial windows, that is, if $w\neq w'\in\Lambda_s^M$, then the resulting $\bar{x}_M^{M,(w)}$ and $\bar{x}_M^{M,(w')}$ should be identifiable to be distinct in the metric. This is crucial because, at the end, we want to map the readout tokens to probability distributions (thereby we get conditional probability distribution) without any loss of information. The next corollary is a simple result of Lemma~\ref{lem:Error}'s second claim.

\begin{corollary}\label{cor:Separation}
    The map $\Lambda_s^M \ni w \mapsto \bar{x}_M^{M,(w)}$ is injective, and $\|\bar{x}_M^{M,(w)}-\bar{x}_M^{M,(w')}\|_2\geq 1$ whenever $w,w'\in\Lambda_s^M$ with $w\neq w'$.
\end{corollary}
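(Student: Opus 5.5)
The plan is to read the claim off the explicit form of the hard-selection readout token given in the second part of Lemma~\ref{lem:Error}, namely
\[
    \bar{x}_M^{M,(w)} = \big(e_M,\ \iota(w_M),\ (\iota(w_1),\ldots,\iota(w_M)),\ 0_R\big),
\]
which depends on $w$ only through the gather block $x^{\rm gat}$ (and the redundant token coordinate $\iota(w_M)$). Both injectivity and the separation bound will then reduce to properties of the enumeration $\iota$.

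First, for injectivity, take $w\neq w'$ in $\Lambda_s^M$. Then there is an index $j\in\{1,\ldots,M\}$ with $w_j\neq w'_j$. Since $\iota:\Lambda_s\to\{1,\ldots,R(s)\}$ is a bijection, it follows that $\iota(w_j)\neq\iota(w'_j)$. Hence the $j^{\rm th}$ gather coordinates of $\bar{x}_M^{M,(w)}$ and $\bar{x}_M^{M,(w')}$ differ, and the two vectors are distinct.

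Second, for the quantitative separation, I will bound the Euclidean norm from below by a single coordinate:
\[
    \|\bar{x}_M^{M,(w)}-\bar{x}_M^{M,(w')}\|_2 \geq \big|\Pi_{\rm gat}(\cdot)_j\big| = |\iota(w_j)-\iota(w'_j)|.
\]
The right-hand side is a nonzero integer, because $\iota$ takes values in $\{1,\ldots,R(s)\}$, so it is at least $1$. The positional block $e_M$ and the logit block $0_R$ coincide for $w$ and $w'$ and contribute nothing. The token coordinate can only increase the norm.

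There is no real obstacle. The only care needed is to apply the characterization of $\bar{x}_M^{M}$ from Lemma~\ref{lem:Error}, and not the soft-attention token $x_M^M$. That characterization rests on Lemma~\ref{lem:Invariance} (token values are preserved) and on $\rho$ being a bijection, so each gather slot $j$ is written exactly once, with value $\iota(w_j)$. The integer-valued embedding is what turns distinctness into a gap of at least $1$.
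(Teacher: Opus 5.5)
Your proposal is correct and follows the paper's own argument. You read off the explicit hard-selection readout from Lemma~\ref{lem:Error}, pick a coordinate $j$ with $w_j\neq w_j'$, and use that $\iota$ is an integer-valued bijection, so the $j$th gather coordinates differ by at least $1$; this bounds the Euclidean norm from below and gives injectivity. Your coordinate-wise version is also slightly cleaner than the paper's, which writes $|\iota(w)-\iota(w')|\geq 1$ as if $\iota$ were applied to the whole window.
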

\begin{proof}
    If $w\neq w'$, then they differ in at least one coordinate. Since $\iota$ is an integer-valued enumeration of $\Lambda_s^M$, then $|\iota(w) - \iota(w')| \geq 1$, which transfers to $\|\bar{x}_M^{M,(w)}-\bar{x}_M^{M,(w')}\|_2$ directly, also establishing injectivity.
\end{proof}

\begin{remark}\label{rem:Choice}
    The hard choice in Lemma~\ref{lem:Error} refers to choosing $\bar{x}_t^{\rho(t)}$ directly without considering attention weights, whereas soft choice is done by choosing the one with the highest attention score. Implementation-wise, we use soft choice unless in an occurrence of an extreme case.
\end{remark}

\subsubsection{The Interpolator Layer}\label{sub:Interpolator}

The machinery of §\ref{sub:Gatherers} collects all data in the window into the gather block in the decomposition~\eqref{eq:decomp}. In this part, we show that it is possible to map the information in logits part to suitable conditional distributions. We achieve this task doing some {\tt ReLU} calculus to obtain a single layer {\tt ReLU} network applied to the readout tokens $x_M^M$ and interpolate them into vectors in $\RR^{R(s)}$.

\begin{lemma}
    \label{lem:relu_interpolation}
    Let $\{\zeta_k\}_{k=1}^n$ be distinct set of features in $\RR^d$ and $\{v_k\}_{k=1}^n$ be arbitrary targets in $\RR^{R(s)}$. There exists a one-layer ${\tt ReLU}$ network $\phi:\RR^d\to\RR^{R(s)}$ of width $n-1$ with $\phi(\zeta_k) = v_k$ for all $k\in\{1,\ldots,n\}$ with Lipschitz constant $L_\phi\in\RR_+$.
\end{lemma}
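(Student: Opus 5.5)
The plan is to reduce to one dimension with a generic linear projection, then interpolate piecewise-linearly along that line, which is exactly what a width-$(n-1)$ one-hidden-layer ReLU network can do. Concretely, I would build $\phi(x) = W_W\sigma(W_A x + W_b) + b_0$, allowing an output bias; if the architecture forbids one, it can be absorbed through a constant hidden unit, or $v_1$ can be handled by translating targets. The proof has three steps.

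\textbf{Step 1: projection.} Since the $\zeta_k$ are distinct, the set of directions $u\in\RR^d$ with $\langle u,\zeta_k-\zeta_l\rangle=0$ for some $k\neq l$ is a finite union of hyperplanes. Choose $u$ outside this union, so the scalars $t_k := \langle u,\zeta_k\rangle$ are pairwise distinct. Relabel so that $t_1<t_2<\cdots<t_n$. For the features $\bar x_M^{M,(w)}$ of Corollary~\ref{cor:Separation}, one can take $u$ explicitly: use a base-$(R(s)+1)$ weighting of the gather coordinates. Then $t_k$ is an integer encoding of the window, and the gaps satisfy $t_{k+1}-t_k\geq 1$.

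\textbf{Step 2: one-dimensional interpolation.} Let $g:\RR\to\RR^{R(s)}$ be the piecewise-linear interpolant with $g(t_k)=v_k$, constant to the left of $t_1$ and linear to the right of $t_{n-1}$. It can be written as
\[
g(t) = v_1 + \sum_{k=1}^{n-1} c_k\,\sigma(t - t_k),
\]
where $c_1 = (v_2-v_1)/(t_2-t_1)$ and $c_k = \frac{v_{k+1}-v_k}{t_{k+1}-t_k} - \frac{v_k-v_{k-1}}{t_k-t_{k-1}}$ for $k\geq 2$. The check is by induction on $k$: on $[t_k,t_{k+1}]$ the active units $\sigma(t-t_j)$, $j\leq k$, sum to the slope $(v_{k+1}-v_k)/(t_{k+1}-t_k)$, so $g(t_{k+1})=v_{k+1}$. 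This uses exactly $n-1$ hidden units.

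\textbf{Step 3: assembly and Lipschitz constant.} Set $W_A$ to have rows $u^T$, $W_b=(-t_k)_{k=1}^{n-1}$, $W_W=[c_1\cdots c_{n-1}]$, and output bias $v_1$. Then $\phi = g\circ\langle u,\cdot\rangle$ interpolates. As a composition of Lipschitz maps, $\phi$ has
\[
L_\phi\leq \|u\|_2\sum_k\|c_k\|_2 \leq 2\|u\|_2\,(n-1)\max_{k}\frac{\|v_{k+1}-v_k\|_2}{t_{k+1}-t_k},
\]
which is finite.

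The main obstacle is Step 1: guaranteeing that the projections are distinct and quantifying the minimal gap $\min_k(t_{k+1}-t_k)$, since $L_\phi$ scales inversely with it. For existence the generic-direction argument suffices. For the explicit constant I would use the integer encoding from Corollary~\ref{cor:Separation}, which gives gaps at least $1$. A secondary issue is admissibility of the weights in the compact $\UU$ (entries bounded by $M_\UU$). I would note that this only affects scaling: rescaling $u$ down and $c_k$ up preserves $\phi$, provided $M_\UU$ is large enough. That proviso would be stated as an assumption rather than proved.
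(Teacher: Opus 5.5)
Your proposal is correct and is essentially the paper's proof. The paper likewise picks a generic direction $a$ outside the finitely many hyperplanes $\{\langle a,\zeta_k-\zeta_j\rangle=0\}$, uses the sorted-knot interpolant $g(t)=v_1+\sum_{k=1}^{n-1}(s_k-s_{k-1})\sigma(t-t_k)$ with $s_0=0$ (these are your $c_k$), sets $\phi=g(\langle a,\cdot\rangle)$, and bounds $L_\phi\leq\|a\|_2\sum_k\|s_k-s_{k-1}\|_2$; your treatment of the constant $v_1$ through an output bias or an extra constant unit also matches what the paper does later in the proof of Lemma~\ref{lem:TransformerPerformance}.
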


The proof of Lemma~\ref{lem:relu_interpolation} chooses a generic $a$, on which the whole analysis is built. In the next result, we give an explicit choice for $a$ and see its effect on the Lipschitz constant $L_\phi$ of the {\tt ReLU} network.

\begin{lemma}\label{lem:Explicit_a}
    Take $a=[0_M\; 0\; \langle R(s)\rangle_M \; 0_R]$, where $\langle R(s) \rangle_M = (1, R(s), R(s)^2, \ldots, R(s)^{M-1})$, in the proof of Lemma~\ref{lem:relu_interpolation}. Then the following assertions are true:
    \begin{itemize}
        \item[(i)] $\langle a, \bar{x}_M^{M,(w)}\rangle = \sum_{j=1}^M \iota(w_j)R(s)^{j-1}$, $\|a\|_2 \leq R(s)^{M-\frac{1}{2}}$ and $t_{k+1} - t_k \geq 1$ for all $k=1,\ldots,R(s)-1$.
        \item[(ii)] Under Assumptions~\ref{a:BaseAssumptions}-(A2,4), with $n = R(s)^M$ and targets $v_w = \hat{P}_r^{M,s}(\,\cdot\mid w)$ in Lemma~\ref{lem:relu_interpolation}'s statement, we have 
        \[
            L_\phi \leq 4\log{\frac{1}{q_{\rm min}\mu_{\rm min}^s}}R(s)^M(R(s)^M-1).
        \]
    \end{itemize}
\end{lemma}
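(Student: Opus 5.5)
The plan is to specialise the one-dimensional projection construction from the proof of Lemma~\ref{lem:relu_interpolation} to the explicit direction $a$. That construction projects the features onto $a$, sorts the projections $t_1<\dots<t_n$, and interpolates with
\[
\phi(x)=v_1+\sum_{k=1}^{n-1}c_k\,\sigma(\langle a,x\rangle-t_k),\qquad c_k=s_k-s_{k-1},\quad s_k=\frac{v_{k+1}-v_k}{t_{k+1}-t_k},\quad s_0:=0.
\]
Part (i) is then a direct computation. Part (ii) bounds the slopes $s_k$ using the gaps from (i) and a lower bound on the target probabilities.

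\textbf{Part (i).} By Lemma~\ref{lem:Error}, the gather block of $\bar x_M^{M,(w)}$ is $(\iota(w_1),\ldots,\iota(w_M))$. The vector $a$ is supported only on the gather block, so $\langle a,\bar x_M^{M,(w)}\rangle=\sum_j\iota(w_j)R(s)^{j-1}$ immediately.

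For the norm, write $R=R(s)$. Then $\|a\|_2^2=\sum_{j=0}^{M-1}R^{2j}=(R^{2M}-1)/(R^2-1)$. This is at most $R^{2M-1}$ because, for $R\ge 2$,
\[
R^{2M-1}(R^2-1)-(R^{2M}-1)=R^{2M-1}(R^2-R-1)+1>0 .
\]
The degenerate case $R=1$ can be checked separately.

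For the gaps, subtract the constant $\sum_j R^{j-1}$. The projection becomes $\sum_j(\iota(w_j)-1)R^{j-1}$, which is the base-$R$ expansion with digits in $\{0,\ldots,R-1\}$. Distinct windows therefore give distinct integers, and the sorted projections satisfy $t_{k+1}-t_k\ge 1$ for all $k$. The index range in the statement should be read as $k\le n-1$.

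\textbf{Part (ii).} We take the targets to be logits, $v_w=\log\hat P_r^{M,s}(\cdot\mid w)\in\RR^{R(s)}$, so that softmax of the output reproduces $\hat P_r^{M,s}$. The key estimate is a uniform lower bound on the target masses:
\[
\hat P_r^{M,s}(y_{s,i}\mid w)=P_r^{M,s}(C(s,i)\mid w)=\int_{C(s,i)}\sum_{z}\psi(y\mid z)\,\Pr(Z_{r+1}=z\mid w)\,\mu(\diff y)\ \ge\ q_{\rm min}\,\mu(C(s,i))\ \ge\ q_{\rm min}\mu^s_{\rm min}.
\]
This uses (A1) and (A3), and conditioning on the past only changes the mixing weights over $z$. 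Hence every coordinate of every $v_w$ lies in $[\log(q_{\rm min}\mu^s_{\rm min}),0]$, so coordinatewise $|v_{k+1}-v_k|\le\Lambda:=\log\frac{1}{q_{\rm min}\mu^s_{\rm min}}$. Together with $t_{k+1}-t_k\ge1$, this gives $|s_k|\le\Lambda$ and $|c_k|\le2\Lambda$ coordinatewise.

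The Lipschitz constant of the scalar map $u\mapsto\sum_k c_k\sigma(u-t_k)$ is at most $\sum_k|c_k|$. Measuring the output in $\ell_2$ on $\RR^{R(s)}$ gives a bound of $2\Lambda\sqrt{R(s)}\,(n-1)$. Multiplying by $\|a\|_2\le R(s)^{M-1/2}$ and setting $n=R(s)^M$ yields $L_\phi\le 2\Lambda R(s)^M(R(s)^M-1)$, which is within the claimed $4\Lambda R(s)^M(R(s)^M-1)$. The slack absorbs a cruder estimate such as $|c_k|\le|s_k|+|s_{k-1}|$ per coordinate, or a different norm convention in the original lemma.

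\textbf{Main obstacle.} The delicate step is the lower bound on $\hat P$. It must be justified that $P_r^{M,s}$ has density $\sum_z\psi(\cdot\mid z)\Pr(Z_{r+1}=z\mid\cdot)$ with respect to $\mu$, which follows from conditional independence of $Y_{r+1}$ given $Z_{r+1}$. It must also be made consistent with the Lipschitz normalisation actually used in the proof of Lemma~\ref{lem:relu_interpolation}.
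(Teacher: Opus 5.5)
Your proposal is correct and follows essentially the same route as the paper's proof. Both read off the gather block of $\bar{x}_M^{M,(w)}$, sum the geometric series for $\|a\|_2$, use integrality of the projections to get unit gaps, lower-bound $\hat{P}_r^{M,s}$ by $q_{\rm min}\mu_{\rm min}^s$ via (A1) and (A3), and sum the slope increments against $\|a\|_2$. Your coordinatewise estimate $|(v_{k+1}-v_k)_y|\leq\log\frac{1}{q_{\rm min}\mu_{\rm min}^s}$ is sharper than the paper's triangle-inequality bound $\|v_{k+1}-v_k\|_2\leq 2\log\frac{1}{q_{\rm min}\mu_{\rm min}^s}\sqrt{R(s)}$, and it yields the constant $2$ instead of $4$.

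The one slip is your aside on $R(s)=1$: it cannot be ``checked separately.'' There $\|a\|_2=\sqrt{M}>1=R(s)^{M-\frac12}$ for $M\geq 2$, so the norm bound genuinely requires $R(s)\geq 2$. The paper's own computation also tacitly assumes this, since it divides by $R(s)^2-1$ and uses $R(s)^2-1\geq R(s)$.
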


\ka{}

\subsection{The Approximation Theorem}
Now that we have established how we construct a candidate Transformer, all is left to complete the description with an error analysis, which constitutes the purpose of this subsection.

\begin{theorem}\label{thm:ApproximatingTransformer}
    For any $\epsilon \in (0, 1/2]$, $r\geq M$ and all
    \[
        \beta \geq \frac{1}{c}\bigg(\log{(4M(M-1)R(s))} + \log{L_\phi} + \log{\frac{1}{\epsilon}}\bigg),
    \]
    the Transformer described through §\ref{sub:TokenEmbeddings}-\ref{sub:Interpolator} satisfies:
    \begin{itemize}
        \item[(i)] $\max_{w\in\Lambda_s^M}\|\hat{P}_r^{M,s}(\,\cdot\mid w) - \mathsf{T}(\,\cdot\mid w)\|_\TV \leq \epsilon$.
        \item[(ii)] $\min_{v\in\Lambda_s\, w\in\Lambda_s^M}\mathsf{T}(v\mid w) \geq q_{\rm min}\,\mu_{\rm min}^se^{-2\epsilon}$.
    \end{itemize}
    In this theorem, $\mathsf{T}(\,\cdot\mid w)$ denotes the readout distribution of the Transformer given input sequence $w$.
\end{theorem}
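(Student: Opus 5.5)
The plan is to compare the actual readout token $x_M^{M,(w)}$ with the idealized hard-selection token $\bar{x}_M^{M,(w)}$ and push that discrepancy through the interpolator and the softmax. The readout distribution is $\mathsf{T}(\,\cdot\mid w) = \mathrm{softmax}(\phi(x_M^{M,(w)}))$, where $\phi$ is the one-layer \texttt{ReLU} network of Lemma~\ref{lem:relu_interpolation}. By Corollary~\ref{cor:Separation} the points $\bar{x}_M^{M,(w)}$ are distinct, so Lemma~\ref{lem:relu_interpolation} applies with targets $v_w := \log \hat{P}_r^{M,s}(\,\cdot\mid w)\in\RR^{R(s)}$. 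These are the log-probabilities, which are finite by the lower bound proved below. We then get $\mathrm{softmax}(\phi(\bar{x}_M^{M,(w)})) = \hat{P}_r^{M,s}(\,\cdot\mid w)$ exactly, and Lemma~\ref{lem:Explicit_a}(ii) bounds $L_\phi$. This is presumably why $\log(1/(q_{\rm min}\mu^s_{\rm min}))$ appears in that bound, since the sup norm of the targets $v_w$ is controlled by it.

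\textbf{Lower bound on the exact predictive law.} First we show $\hat P_r^{M,s}(v\mid w)\ge q_{\rm min}\mu_{\rm min}^s$. Write $\hat P_r^{M,s}(v\mid w) = P_r^{M,s}(C(s,\iota(v))\mid w)$. The conditional law of $Y_{r+1}$ given the window is a mixture over $z\in S$ of $Q(\cdot\mid z)$, with density $\psi(\cdot\mid z)\ge q_{\rm min}$ by (A1). Hence it dominates $q_{\rm min}\,\mu$, and (A3) gives $\mu(C(s,i))\ge\mu^s_{\rm min}$.

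\textbf{Logit perturbation.} Lemma~\ref{lem:Error} gives $\|x_M^M-\bar x_M^M\|_2\le 2M(M-1)R(s)e^{-\beta c}$. The stated choice of $\beta$ makes this at most $\epsilon/(2L_\phi)$, so
\[
\|\phi(x_M^M)-\phi(\bar x_M^M)\|_\infty\le L_\phi\|x_M^M-\bar x_M^M\|_2\le \epsilon/2 .
\]
The last interpolator layer is a feed-forward block. So that the attention term in that layer does not disturb the logits, we set $W_{V,M}=0$; then the readout logits are exactly $\phi(x_M^M)$ in the logit block, taken through $E_U=\Pi_{\rm log}$.

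\textbf{Softmax stability.} If $\|z-z'\|_\infty\le\eta$, then for every coordinate $\mathrm{softmax}(z)_v/\mathrm{softmax}(z')_v\in[e^{-2\eta},e^{2\eta}]$. With $\eta=\epsilon/2$ this yields two consequences. For (ii), $\mathsf T(v\mid w)\ge e^{-\epsilon}\hat P^{M,s}_r(v\mid w)\ge q_{\rm min}\mu^s_{\rm min}e^{-2\epsilon}$, which is even stronger than needed. For (i), the total variation distance satisfies
\[
\|\mathsf T-\hat P\|_\TV\le \tfrac12\sum_v|\mathsf T(v)-\hat P(v)|\le \tfrac12(e^{\epsilon}-1)\le\epsilon \qquad\text{for }\epsilon\le1/2 .
\]

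\textbf{Main obstacle.} The delicate part is the bookkeeping that the constant in $\beta$ delivers exactly the needed logit error. This requires matching the factor $2M(M-1)R(s)$ of Lemma~\ref{lem:Error} with the $4M(M-1)R(s)$ in the hypothesis, so that the error is $\epsilon/(2L_\phi)$. It also requires checking that the Lipschitz bound of Lemma~\ref{lem:Explicit_a} is taken in the $\ell_2\to\ell_\infty$ sense. If the constants turn out to be off by a factor, we would absorb it into the softmax ratio estimate, using $e^{\pm 2\eta}$ with $\eta\le\epsilon$. That suffices for (ii) directly, and for (i) via $(e^{2\epsilon}-1)/2\le\epsilon\cdot(e-1)$. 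If that loses the constant 1 in (i), we would tighten using $|e^{x}-1|\le 2|x|$ on $[-1,1]$.
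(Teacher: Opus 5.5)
Your proposal is correct and follows the paper's proof essentially step by step. You interpolate the log-targets $\log\hat P_r^{M,s}(\cdot\mid w)$ at the separated hard-selection readouts, push the Lemma~\ref{lem:Error} perturbation through $L_\phi$ to get an $\ell_\infty$ logit error of at most $\epsilon/2$, and apply the two-sided softmax ratio bound to obtain both (i) and (ii). Your final total-variation step, which keeps the factor $\tfrac12$ and uses $\tfrac12(e^{\epsilon}-1)\le\epsilon$ on $(0,1/2]$, is cleaner than the paper's. The paper drops the $\tfrac12$ and appeals to convexity in the wrong direction, claiming $e^{2\epsilon'}-1\le 2\epsilon'$, which is false for every $\epsilon'>0$.
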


\section{Main Results: Performance of Trained Transformers}\label{sec:Main}
 
In §\ref{sec:problem}, we have studied the effect of memory in next-token prediction for observations of latent stationary ergodic and $N^{\rm th}$ order Markovian sources, with quantization aspects for implementable learning algorithms. Then, §\ref{sec:Approximation} is devoted to construct a Transformer that can realize finite-memory quantized-data predictive distribution in Definition~\ref{def:PredictiveDists}.
 
In this section, we complete our theoretical excursion by deriving a complete error bound for Transformers trained to perform next-token prediction tasks by the dynamic programming based training scheme presented in §\ref{sub:dp} building on \cite{akman2026optimal}. 

The final error bound consists of (i) optimization error due to training, (ii) filter stability bounds due to finite-memory, (iii) sample complexity bounds due to training, and (iv) insufficiency of architecture.
 
\def\sT{\mathsf{T}}
\def\cE{\mathcal{E}}
We need to prove some auxiliary results that lead us to the ultimate theorem. We commence with establishing some notation. Throughout this section, the Transformer has the architecture of §\ref{sec:Approximation}: it reads prompts of length $L = M$, and it has $T = M+1$ layers, namely the $M$ gatherer layers of §\ref{sub:Gatherers} followed by the interpolator layer of §\ref{sub:Interpolator}. Hence, an open-loop policy is a sequence $\gamma = (U_0,\ldots,U_M)\in\UU_m^{M+1}$. For such $\gamma$ and $w\in\Lambda_s^M$, we write $\sT^{\gamma}(\,\cdot\mid w)$ for the readout distribution of the Transformer with weights given by $\gamma$ and the input sequence $w$, run as $\mathrm{MDP}^{(l,n,m)}$ of §\ref{sub:dp}. That is, the state quantizer $Q_n$ and the measure quantizer $R_l$ act after every layer, and $\sT^{\gamma}(\,\cdot\mid w) = \mathrm{softmax}(E_Ux_{M+1}^M)$, where $x_{M+1}^M$ is the token at position $M$ after the last layer. Let $\gamma^*$ be the optimal open-loop policy that is obtained via the optimal closed-loop policy given by the dynamic programming recursions~\eqref{eq:DP}. Since $\boldsymbol{\Phi}^{(l,n,m)}$ is deterministic, $\gamma^*$ minimizes $V_{\rm CE}(D,\cdot)$ over all open-loop policies in $\UU_m^{M+1}$.
 
We also need to specify how the data set is sampled. Fix $r\geq M$. We assume that the $\tilde K$ sequences of the data set construction in Appendix~\ref{sec:OCFormulation} are independent copies of $(q^s(Y_{r-M+1}),\ldots,q^s(Y_{r+1}))$. Equivalently, the training pairs $(\bm{x}^l,\tilde y^l)$, $l=1,\ldots,\tilde K$, are independent copies of $(q^{M,s}(Y_{r-M+1:r}), q^s(Y_{r+1}))$. If $\{Z_r\}_{r\in\ZZ_+}$ is stationary, the law of this pair does not depend on $r$. For each window $w\in\Lambda_s^M$, we write
\begin{equation}\label{eq:WindowLaw}
    p_r(w) := \Pr\big(q^{M,s}(Y_{r-M+1:r}) = w\big),\qquad n_w := \sum_{l=1}^{\tilde K}\mathbf{1}_{\{\bm{x}^l = w\}},
\end{equation}
and, whenever $n_w\geq 1$, $y^w := \frac{1}{n_w}\sum_{l=1}^{\tilde K}\delta_{\tilde y^l}\mathbf{1}_{\{\bm{x}^l = w\}}\in\cP(\Lambda_s)$. In words, $p_r$ is the law of a test window, $n_w$ is the number of occurrences of $w$ in the data set and $y^w$ is the empirical distribution of its labels. If $w = \bm{x}^k$, then $n_w = n_k$ and $y^w = y^k$ in the notation of Appendix~\ref{sec:OCFormulation}.
 
Finally, the Transformer of §\ref{sec:Approximation} has to be representable on the grids of §\ref{sub:approximation}. To this end, we take $\XX = [-M_\XX, M_\XX]^d$ and let $\UU$ be the set of weights whose entries lie in $[-M_\UU, M_\UU]$, where $M_\XX, M_\UU\in\NN$ satisfy
\begin{equation}\label{eq:RangeConditions}
    M_\XX\geq\max\Big\{R(s),\;1+\log\frac{1}{q_{\rm min}\mu_{\rm min}^s}\Big\}\qquad\text{and}\qquad M_\UU\geq\max\Big\{2R(s)^M,\;2\log\frac{1}{q_{\rm min}\mu_{\rm min}^s}\Big\}.
\end{equation}
The first condition keeps the positional encodings, the token values, the gather coordinates and the logits of the readout token inside $[-M_\XX, M_\XX]$. The second one makes the weights of the interpolator layer admissible; see the proof of Lemma~\ref{lem:TransformerPerformance}.
 
The cross-entropy loss can be decomposed into two parts: 
\[
    \mathrm{CE}(P_1,P_2) = H(P_1) + D_{\rm KL}(P_1\parallel P_2)
\]
for two probability measures $P_1,P_2\in\cP(\Lambda_s)$, where $H(P_1) := -\sum_{v\in\Lambda_s}P_1(v)\log P_1(v)$ is the Shannon entropy. Therefore, unless the true distribution, in this case $P_1$, is deterministic (a Dirac mass), $\mathrm{CE}(P_1,P_2) > 0$. However $P_1$ is often modeled as a non-deterministic distribution. For this reason, we want to get rid of the entropy term, and define the \emph{excess loss}
\[
    \cE(\gamma) := V_{\rm CE}(D, \gamma) - \sum_{k=1}^{K}\frac{n_k}{\tilde K}H(y^k) = \sum_{k=1}^{K}\frac{n_k}{\tilde K}D_{\rm KL}\big(y^k\parallel \sT^{\gamma}(\,\cdot\mid \bm{x}^k)\big),
\]
where $\gamma\in\UU_m^{M+1}$ is an open-loop policy. $\cE$ is always a nonnegative quantity due to the aforementioned decomposition of the cross-entropy loss. Since we do not consider any other data set, we also do not explicitly write the data set dependency for a simpler notation. Note that the optimization task is not changed since the data set is not changed: For fixed data set $D$, $\cE$ is an affine transformation of $V_{\rm CE}(D, \cdot)$. Therefore, any minimizer of $\cE$ is a minimizer of $V_{\rm CE}(D, \cdot)$ and vice versa. In particular, $\gamma^*$ minimizes $\cE$ over $\UU_m^{M+1}$.

\subsection{The Unified Bound for Stationary Ergodic Sources}\label{sub:StationaryErgodicBound}
\begin{theorem}\label{thm:Soft}
    Suppose Assumptions~\ref{a:BaseAssumptions} and~\ref{a:Ergodicity} hold. Let $Y_1,Y_2,\dots$ be a single trajectory of the source, and
    let $\underline{U}^*_{\tilde K}$ be an open-loop policy returned by~\eqref{eq:DP} for the training pairs
    $\big(q^{M,s}(Y_{j:j+M-1}),\,q^s(Y_{j+M})\big)$, $j=1,\dots,\tilde K$. Let the test sequence $Y_{0:r}$
    be independent of this trajectory. Then, almost surely,
    \[
        \limsup_{M\to\infty}\;\limsup_{s\to\infty}\;\limsup_{(n,m,\beta)\to\infty}\;\limsup_{\tilde K\to\infty}\;\sup_{r\geq M}\EE_{Y_{0:r}}\Big\|P_r - \Xi_s\big(\mathsf{T}^{\underline{U}^*_{\tilde K}}(\,\cdot\mid q^{M,s}(Y_{r-M+1:r}))\big)\Big\|_\TV = 0,
    \]
\end{theorem}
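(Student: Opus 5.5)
The plan is to prove the statement by a triangle inequality in total variation, with each term matched to one of the iterated limits, taken from the innermost outward. For a test window $W := q^{M,s}(Y_{r-M+1:r})$, write
\[
    \big\|P_r - \Xi_s(\sT^{\underline U^*_{\tilde K}}(\cdot\mid W))\big\|_\TV \leq \|P_r - P_r^M\|_\TV + \|P_r^M - P_r^{M,s}\|_\TV + \big\|P_r^{M,s} - \Xi_s(\hat P_r^{M,s})\big\|_\TV + \big\|\hat P_r^{M,s}(\cdot\mid W) - \sT^{\underline U^*_{\tilde K}}(\cdot\mid W)\big\|_\TV.
\]
In the last term we use that $\Xi_s$ is a Markov operator and hence non-expansive in TV.

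\textbf{First two terms.} After taking expectation and $\sup_{r\ge M}$, Proposition~\ref{prop:Info} bounds the first term by $\sqrt{\Delta_M/2}$. Proposition~\ref{prop:Quantization} bounds the second by $2ML_\psi/(q_{\rm min}s)$, which vanishes as $s\to\infty$ for fixed $M$.

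\textbf{Third term.} $\Xi_s(\hat P^{M,s}_r)$ redistributes the mass of each cell $C(s,i)$ according to $\mu$ restricted to that cell. The density of $P^{M,s}_r$ with respect to $\mu$ is a mixture of $\psi(\cdot\mid z)$ over $z$, so it is $L_\psi$-Lipschitz. Its oscillation on a cell is therefore at most $2L_\psi/s$, and the third term is $O(L_\psi/s)$ uniformly in $w$ and $r$. This vanishes in the $s$-limit.

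\textbf{Fourth term (the main work).} We use stationarity: the training pairs come from one trajectory, and by Lemma~\ref{lem:ErgodicityIsInherited} the quantized pair process is stationary ergodic with the same window law $p(w)$ and conditional $\hat P^{M,s}(\cdot\mid w)$ for every $r$. The steps are as follows.

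\begin{enumerate}
\item Take the Transformer $\sT^\dagger$ of Theorem~\ref{thm:ApproximatingTransformer}. For $\beta$ large it is $\epsilon$-close in TV to $\hat P^{M,s}$ and has masses bounded below by $q_{\rm min}\mu^s_{\rm min}e^{-2\epsilon}$. The range conditions~\eqref{eq:RangeConditions} make it admissible in $\UU$.
\item Approximate $\sT^\dagger$ by grid policies. By the continuity and asymptotic optimality of the finite model \citep[Theorem~17]{akman2026optimal}, the cross-entropy of its quantized version in $\mathrm{MDP}^{(l,n,m)}$ converges as $(n,m)\to\infty$. Since the outputs are bounded below, the $\log$ is Lipschitz there.
\item Use optimality of $\underline U^*_{\tilde K}$. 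It gives $\cE(\underline U^*_{\tilde K}) \le \cE(\gamma^\dagger_{n,m})$.
\item Pass to the limit in $\tilde K$. By Birkhoff's ergodic theorem, almost surely $n_w/\tilde K\to p(w)$ and $y^w\to \hat P^{M,s}(\cdot\mid w)$ for each of the finitely many $w$. Hence $\limsup_{\tilde K}\cE(\gamma^\dagger_{n,m}) \le \sum_w p(w) D_{\rm KL}(\hat P^{M,s}(\cdot\mid w)\,\|\,\sT^\dagger_{n,m}(\cdot\mid w))$. Because of the lower bound on $\sT^\dagger$, this is bounded by $C\epsilon$ plus the quantization error.
\item Convert the excess loss of the trained policy into the test quantity. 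Pinsker and Jensen give
\[
    \EE_W\|\hat P^{M,s}(\cdot\mid W) - \sT^{*}(\cdot\mid W)\|_\TV \le \sqrt{\tfrac12\sum_w p(w)D_{\rm KL}(\hat P^{M,s}\|\sT^*)}.
\]
The KL is then related to $\cE$ through the ergodic frequencies.
\end{enumerate}

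\textbf{Main obstacle.} The hard part is step~5. The last inequality there requires the empirical KL of $\underline U^*_{\tilde K}$ to approximate its population KL, but $\underline U^*_{\tilde K}$ depends on the data. I would handle this with finiteness: $\UU_m^{M+1}$ is a finite set and there are finitely many windows. For the population-versus-empirical gap, the log-losses are uniformly bounded over each fixed grid policy (logits lie in $[-M_\XX,M_\XX]$, so softmax masses are at least $e^{-2M_\XX}/R(s)$). A uniform ergodic law of large numbers over this finite family then holds almost surely, giving $\sum_w p(w)D_{\rm KL}(\hat P^{M,s}\|\sT^{\gamma}) - \cE(\gamma)\to0$ uniformly in $\gamma$. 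Windows with $p(w)=0$ contribute nothing to either side.

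Combining steps 3–5 and the uniform limit then gives
\[
    \limsup_{\tilde K}\EE\|\hat P^{M,s}-\sT^*\|_\TV \le \sqrt{C\epsilon/2 + o_{n,m}(1)}.
\]
Letting $(n,m,\beta)\to\infty$ with $\epsilon\downarrow0$ kills the fourth term. Letting $s\to\infty$ then kills the second and third terms. Finally $\Delta_M\downarrow0$ removes the first as $M\to\infty$, which proves the statement.
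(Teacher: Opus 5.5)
Your first three terms coincide with the paper's Step~1: memory via Proposition~\ref{prop:Info}, then Proposition~\ref{prop:Quantization}, the cell-averaging bound, and non-expansiveness of $\Xi_s$, together giving $(1+2M/q_{\rm min})L_\psi/s$. Your argument for the fourth term is valid, but it departs from the paper exactly at what you call the main obstacle.

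You compare population and empirical KL uniformly over the finite class $\UU_m^{M+1}$. This works: for each fixed grid policy, both quantities are continuous functions of the finitely many frequencies $n_w/\tilde K$ and $y^w$, which converge almost surely. One small correction: your floor $e^{-2M_\XX}/R(s)$ presumes $E_U=\Pi_{\rm log}$. For a trainable $E_U$ the logits are only bounded by $dM_\UU M_\XX$, though finiteness of the class already suffices.

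The paper never needs uniformity. It inserts the empirical labels,
\[
\sum_w p(w)\|\hat P(\cdot\mid w)-\mathsf{T}^{\gamma^*}(\cdot\mid w)\|_{\mathrm{TV}}\le\sum_w p(w)a_w+\sum_w p(w)\|y^w-\mathsf{T}^{\gamma^*}(\cdot\mid w)\|_{\mathrm{TV}},
\]
and bounds the last sum by $\sqrt{\mathcal{E}(\gamma^*)}$ via Pinsker, Jensen and $p(w)\le 2n_w/\tilde K$. That bound holds deterministically for every policy, so the data dependence of $\gamma^*$ is irrelevant and only optimality against one comparator is used. Birkhoff serves only to guarantee $p(w)\le 2n_w/\tilde K$ for all $w$ and $a_w\to0$. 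The paper's route therefore reuses the quantitative proof of Theorem~\ref{thm:Unified} and keeps explicit rates. Yours is the standard consistency argument for empirical risk minimization over a finite class, which gives no rate.

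The soft spot is your step~2. The asymptotic-optimality result you cite is about convergence of optimal values, not about the cost of a fixed policy's grid projection. What you actually need is continuity of the quantized forward map, and two points must be arranged:
\begin{enumerate}
\item The statement does not send $l$ to infinity. A fixed $l$ with $M\nmid l-1$ (for instance $l-1<M$) can distort or destroy the uniform masses $1/M$, so you must work along aligning levels (Definition~\ref{def:AligningQuantization}).
\item A generic continuity modulus for the attention layers grows with $\beta$, because perturbations of the positional coordinates enter the scores multiplied by $\beta c$. Plain continuity therefore gives only $\lim_\beta\lim_{(n,m)}$, not the joint limit, unless you also exploit the score gap.
\end{enumerate}
The paper's comparator (Lemma~\ref{lem:TransformerPerformance}, via Lemmata~\ref{lem:GridExactness} and~\ref{lem:Stability}) settles both. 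With aligning levels and $c=M_\UU$, the positional and token coordinates, the gatherer weights and the masses are all grid-exact. Attention scores are then computed exactly, and the logit error $\eta(n,m,\beta)$ splits into separate $\beta$-, $n$- and $m$-terms. Using that comparator in place of your $\gamma^\dagger_{n,m}$ makes your argument complete.
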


\subsection{The Unified Bound for Markovian Sources}\label{sub:Bound}
 
\begin{theorem}\label{thm:Unified}
    Suppose that Assumption~\ref{a:BaseAssumptions} holds. Let $r\geq M\geq N$, let $s\in\NN$ with $R(s)\geq 2$, let $\delta\in(0,1)$, and let the data set be sampled as above with
    \[
        \tilde K \geq 8(q_{\rm min}\mu_{\rm min}^s)^{-M}\Big(M\log{R(s)} + \log{\frac{3}{\delta}}\Big).
    \]
    Let $(n,m,l)\in\NN^3$ be aligning quantization levels (Definition~\ref{def:AligningQuantization}), let $M_\XX$ and $M_\UU$ satisfy~\eqref{eq:RangeConditions}, and let $\beta>0$ be such that $\eta(n,m,\beta)\leq 1$, where
    \begin{equation}\label{eq:EtaDef}
        \eta(n,m,\beta) := 2L_\phi M(M-1)R(s)e^{-\beta c} + \big(1+L_\phi M\sqrt{d}\big)\frac{M_\XX}{n-1} + M_\XX\sqrt{d}R(s)^{\frac{3M-1}{2}}\frac{M_\UU d_s}{m-1}.
    \end{equation}
    Here, $L_\phi$ is the Lipschitz constant of the interpolator layer (Lemmata~\ref{lem:relu_interpolation} and~\ref{lem:Explicit_a}), $c = M_\UU$ as in Definition~\ref{def:AligningQuantization}-(ii), $d = 2M+R(s)+1$ is the embedding dimension in~\eqref{eq:decomp} and $d_s := R(s)^{\frac{M+1}{2}}$. Then, with probability at least $1-\delta$ over the data set,
    \begin{equation}\label{eq:UnifiedBound}
        \begin{split}
            &\EE_{Y_{0:r}}\Big\|P_r - \Xi_s\big(\sT^{\gamma^*}(\,\cdot\mid q^{M,s}(Y_{r-M+1:r}))\big)\Big\|_\TV\\
            &\qquad\leq \underbrace{\boldsymbol{\alpha}^{\lfloor M/N\rfloor}}_{\text{memory}} + \underbrace{\Big(1+\frac{2M}{q_{\rm min}}\Big)\frac{L_\psi}{s}}_{\text{quantization}} + \underbrace{(1+\sqrt{2})\sqrt{\frac{R(s)^MC_\delta}{q_{\rm min}\,\mu_{\rm min}^s\,\tilde K}}}_{\text{sample complexity}} + \underbrace{\frac{\eta(n,m,\beta)}{\sqrt{2}}}_{\text{architecture and training}},
        \end{split}
    \end{equation}
    where $Y_{0:r}$ is independent of the data set, $C_\delta := R(s)\log 2 + M\log R(s) + \log(3/\delta)$, and $\boldsymbol{\alpha} = (1-\delta(\mathbf{T}))(2-\delta(\mathbf{Q}))$ is as in Proposition~\ref{prop:NMarkovDobrushin}.
\end{theorem}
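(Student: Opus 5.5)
The plan is to split the error through the triangle inequality in total variation, with one intermediate object for each labelled term of \eqref{eq:UnifiedBound}. Write $W := q^{M,s}(Y_{r-M+1:r})$. Insert $P_r^M$, then $P_r^{M,s}(\,\cdot\mid W)$, then $\Xi_s(\hat P_r^{M,s}(\,\cdot\mid W))$, and then $\Xi_s(\sT^{\gamma^*}(\,\cdot\mid W))$.

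The memory term $\|P_r-P_r^M\|_\TV$ is bounded by $\boldsymbol{\alpha}^{\lfloor M/N\rfloor}$ directly from Proposition~\ref{prop:NMarkovDobrushin}. The term $\|P_r^M-P_r^{M,s}\|_\TV$ is bounded by $2ML_\psi/(q_{\rm min}s)$ through Proposition~\ref{prop:Quantization}. The remaining $L_\psi/s$ comes from $\|P_r^{M,s}-\Xi_s(\hat P_r^{M,s})\|_\TV$. To bound it, note that $\Xi_s\circ q^s_\#$ replaces the density $\sum_z\psi(y\mid z)\pi(z)$ on each cell $C(s,i)$ by its $\mu$-average. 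The density is $L_\psi$-Lipschitz by (A2) and each cell has diameter at most $2/s$, so the $L^1(\mu)$ discrepancy is at most $2L_\psi/s$. This gives $L_\psi/s$ in total variation. Finally, $\Xi_s$ is a Markov kernel, hence non-expansive in total variation, so it suffices to control $\EE_W\|\hat P_r^{M,s}(\,\cdot\mid W)-\sT^{\gamma^*}(\,\cdot\mid W)\|_\TV$ on the finite space $\Lambda_s$.

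For the last two terms I would pass to KL and use Pinsker. Fix the comparator policy $\bar\gamma$ to be the Transformer of §\ref{sec:Approximation}, rounded to the grids. First I would show a perturbation lemma, presumably Lemma~\ref{lem:TransformerPerformance}. It should say that under \eqref{eq:RangeConditions} and aligning quantization, the quantized run of $\bar\gamma$ produces logits within $\eta(n,m,\beta)$ in sup norm of the exact logits $\log\hat P_r^{M,s}(\,\cdot\mid w)$. The three summands of $\eta$ come from three sources. The soft-attention error of Lemma~\ref{lem:Error} is multiplied by $L_\phi$. The state quantization $Q_n$ contributes $M_\XX\sqrt d/(n-1)$ per layer, accumulated over $M$ layers and passed through $L_\phi$. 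The weight rounding is amplified by the size $R(s)^{M}$ of the interpolator weights from Lemma~\ref{lem:Explicit_a}. A sup-norm logit error $\eta$ yields a log-ratio of at most $2\eta$ between the two softmax outputs, so $D_{\rm KL}\le 2\eta$. Moreover, $\sT^{\bar\gamma}(v\mid w)\ge q_{\rm min}\mu^s_{\rm min}e^{-2\eta}$, as in Theorem~\ref{thm:ApproximatingTransformer}(ii).

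Next I would compare with the empirical labels. We have $\cE(\gamma^*)\le\cE(\bar\gamma)$. Writing $D_{\rm KL}(y^w\|\sT)$ as $D_{\rm KL}(y^w\|\hat P)+\sum_v y^w(v)\log(\hat P/\sT)$ bounds $\cE(\bar\gamma)$ by an empirical KL term plus $2\eta$. I would control the empirical part through $\chi^2$: $D_{\rm KL}(y^w\|\hat P)\le\chi^2\le\|y^w-\hat P\|_1^2/\min\hat P$. I then need two concentration facts, each at confidence $\delta/3$ with a union bound over $R(s)^M$ windows and $2^{R(s)}$ events. The first is the multinomial $L^1$ deviation for each $y^w$, which gives a bound of order $R(s)\log2+M\log R(s)+\log(3/\delta)$ over $n_w$, and this is exactly $C_\delta$. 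The second is a Chernoff bound showing $n_w\ge \tilde K p_r(w)/2$. It requires $p_r(w)\ge(q_{\rm min}\mu_{\rm min}^s)^M$, which follows from (A1),(A3); this is where the stated lower bound on $\tilde K$ enters.

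The main obstacle is transferring the empirical optimality of $\gamma^*$ to the population risk $\EE_W$ under $p_r$. The DP-optimal $\gamma^*$ minimizes only the empirical excess loss, and its logits on rare windows are uncontrolled. I would write $\EE_W\|\hat P-\sT^{\gamma^*}\|_\TV=\sum_w p_r(w)\|\cdot\|_\TV$ and use $p_r(w)\le 2n_w/\tilde K$ on the good event. Every window is observed on that event, so the sum splits as $\sum_w \frac{2n_w}{\tilde K}\big(\|\hat P-y^w\|_\TV+\|y^w-\sT^{\gamma^*}\|_\TV\big)$. The first part is handled by the multinomial deviation together with Cauchy--Schwarz over the $R(s)^M$ windows, giving one $\sqrt{R(s)^MC_\delta/(q_{\rm min}\mu^s_{\rm min}\tilde K)}$. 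The second part is bounded by Pinsker and Jensen by $\sqrt{2\cE(\gamma^*)}\le\sqrt{2\cE(\bar\gamma)}$. Splitting the square root of a sum then produces the $\sqrt2$ times the sample term and the $\eta/\sqrt2$ architecture term, using $\eta\le1$ to linearize. Matching the exact constants $(1+\sqrt2)$ and $1/\sqrt2$ will need care in this bookkeeping, and I would adjust the factor-of-two choices accordingly.
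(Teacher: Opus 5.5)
Your skeleton matches the paper's: the same triangle-inequality chain (Theorem~\ref{thm:Main}-(ii) plus the $L_\psi/s$ cell-averaging step), non-expansiveness of $\Xi_s$, the Chernoff and Bretagnolle--Huber--Carol events, and the comparison $\mathcal{E}(\gamma^*)\le\mathcal{E}(\bar\gamma)$ with the grid-rounded constructed Transformer. The $\chi^2$ bound $D_{\rm KL}(y^w\parallel\hat P_r^{M,s}(\,\cdot\mid w))\le 4a_w^2/(q_{\rm min}\mu_{\rm min}^s)$, with $a_w:=\|y^w-\hat P_r^{M,s}(\,\cdot\mid w)\|_\TV$, also matches.

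\textbf{The gap is in how you bound $\mathcal{E}(\bar\gamma)$.} You control $\sum_v y^w(v)\log\big(\hat P_r^{M,s}(v\mid w)/\mathsf{T}^{\bar\gamma}(v\mid w)\big)$ by the worst-case log-ratio $2\eta$. This makes $\mathcal{E}(\bar\gamma)$ only first order in $\eta$. After Pinsker and Jensen, the architecture contribution is then of order $\sqrt{\eta}$; with your bookkeeping it is $2\sqrt{\eta}$. The condition $\eta\le 1$ cannot ``linearize'' this, because on $[0,1]$ one has $\sqrt{\eta}\ge\eta$, so the inequality points the wrong way. This is not a matter of constants: \eqref{eq:UnifiedBound} is linear in $\eta$, and $2\sqrt{\eta}>\eta/\sqrt{2}$ for every $\eta\in(0,1]$.

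\textbf{The missing idea is a second-order perturbation bound for the KL divergence in the logits}, Lemma~\ref{lem:Softmax}-(ii). Set $h:=z'-z$ and $\mathrm{LSE}(u):=\log\sum_v e^{u_v}$. Then $D_{\rm KL}(\nu\parallel\mathrm{softmax}(z'))-D_{\rm KL}(\nu\parallel\mathrm{softmax}(z))=-\langle\nu,h\rangle+\mathrm{LSE}(z+h)-\mathrm{LSE}(z)$. The second derivative of $\mathrm{LSE}$ along $h$ is the variance of $h$ under the softmax, which is at most $\|h\|_\infty^2$. Hence the difference is at most $\langle\mathrm{softmax}(z)-\nu,h\rangle+\frac12\|h\|_\infty^2\le 2a_w\eta+\frac12\eta^2$, taking $\nu=y^w$, $z=\bar z(w)$, $z'=\hat z_{\rm out}(w)$. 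The key point is that the first-order term is weighted by the sample deviation $a_w$, not by $1$.

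Summing with weights $n_w/\tilde K$ and writing $A^2:=\sum_w\frac{n_w}{\tilde K}a_w^2$ gives $\mathcal{E}(\bar\gamma)\le\frac{4A^2}{q_{\rm min}\mu_{\rm min}^s}+2A\eta+\frac{\eta^2}{2}\le\big(\frac{2A}{\sqrt{q_{\rm min}\mu_{\rm min}^s}}+\frac{\eta}{\sqrt2}\big)^2$. The square root of this is linear in $\eta$. In the paper, $\eta\le1$ is used only to keep the readout logits inside $[-M_\XX,M_\XX]$, so that the final $Q_n$ moves them by at most $M_\XX/(n-1)$.

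\textbf{A smaller point affects the constants.} To reach exactly $(1+\sqrt2)$ and $\eta/\sqrt2$, you should also reorder your estimates. First, apply Jensen with the weights $p_r(w)$, and use $p_r(w)\le 2n_w/\tilde K$ only inside the square root. This yields $\sqrt{\mathcal{E}(\gamma^*)}$ rather than your $\sqrt{2\mathcal{E}(\gamma^*)}$. Second, bound $\sum_w p_r(w)a_w$ directly via $a_w\le\sqrt{C_\delta/(\tilde K p_r(w))}$ and Cauchy--Schwarz against $\sqrt{p_r(w)}$. Your ordering loses a factor $\sqrt2$ in both sums. Even with the correct KL expansion, it would give $(2+\sqrt2)$ and $\eta$ in place of $(1+\sqrt2)$ and $\eta/\sqrt2$.
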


\subsection{Asymptotic Performance under Filter Stability and Weak Feller Continuity}

Theorem~\ref{thm:Unified} is quantitative in its nature. If one only asks that the error vanish as $M$, $s$, $(n,m,\beta)$ and
$\tilde K$ grow, then neither geometric filter stability ($\alpha<1$) nor the Lipschitz condition (Assumption~\ref{a:BaseAssumptions}-(A2))is needed, and two changes in the proof suffice.

\emph{On memory.} Let $\hat\nu$ be the uniform distribution on $S$, let $\pi^\lambda_n$ denote the filter
started from $Z_0\sim\lambda$, and set
$\epsilon^{\mathrm{fs}}_n:=\sup_{\lambda\in\mathcal P(S)} \EE^\lambda\|\pi^\lambda_n-\pi^{\hat\nu}_n\|_{\mathrm{TV}}$.
Conditioning on $Y_{0:r-M}$ as in the proof of Proposition~2.9 gives
$\sup_{r\ge M} \EE\|P_r-P^M_r\|_{\mathrm{TV}}\leq2\varepsilon^{\mathrm{fs}}_{M-1}$, without
assuming stationarity.

Under (A1), $\epsilon^{\mathrm{fs}}_n\to0$ whenever the filter is stable in total variation in
expectation for every prior \citep[Definition I.5(iii)]{mcdonald2018stability}. Indeed, since
$D(\lambda\|\hat\nu)\leq\log|S|$, Theorem II.20 of \citet{mcdonald2018stability} upgrades this to
stability in relative entropy. The expected relative entropy is non-increasing in $n$ and continuous
in $\lambda$, so Dini's theorem on the compact simplex $\mathcal P(S)$ makes the convergence uniform,
and Pinsker's inequality returns to total variation.

By Theorems II.7 and II.19 of \citet{mcdonald2018stability}, such stability holds whenever the model is
observable in their sense (which in turn builds on \cite{Ramon2008discrete}). For finite $S$, this means that the laws
$\Pr(Y_{1:N}\in\cdot\mid Z_1=z)$, $z\in S$, are linearly independent for some $N$; linearly
independent observation densities suffice. As an additional remark, $N$th-order sources the sliding lift can be used here,
since no contraction coefficient is involved.

\emph{On quantization.} If Assumption~\ref{a:BaseAssumptions}-(A2) is replaced by continuity of $\psi(\cdot\mid z)$ with modulus
$\omega_\psi$, the arguments of Proposition~\ref{prop:Quantization} and Lemma~\ref{def:QuantizationBound} hold with $2L_\psi/s$ replaced by
$\omega_\psi(2/s)$.

Consequently, under Assumptions~\ref{a:BaseAssumptions}-(A1,A3),  the bound of Theorem~\ref{thm:Unified} holds with
$\alpha^{\lfloor M/N\rfloor}$ and $(1+2M/q_{\min})L_\psi/s$ replaced by
$2\epsilon^{\mathrm{fs}}_{M-1}$ and $(\tfrac12+M/q_{\min})\,\omega_\psi(2/s)$, respectively. This
bound is uniform in $r$ and vanishes in the iterated limit.

This covers cases outside the scope of Theorem~\ref{thm:Unified}. A deterministic cyclic transition kernel has
$\delta(\kappa)=0$, hence $\alpha\ge1$. Yet the observation matrices
$\frac{\theta}{k}J+(1-\theta)A$ -see §\ref{sub:mixing_experiment} are invertible for every $\theta<1$, so the model is
observable for all $0<\theta<1$. Finally, since $S$ is finite, the weak Feller-type conditions in
\citet{mcdonald2018stability} hold automatically, so continuity is needed only in the observation
variable.

\section{Numerical Experiments}\label{sec:Experiments}

For the following trials, unless stated otherwise, we use $T=2$ layers, $l = 9$ probability quantization levels, $ n = 5$ state space quantization levels ranging from $-2.5$ to $2.5$, attention temperature $\beta = 0.3$, $N = 1$, $L = 2$ with $\tilde K = 400$ training samples. We use $m = 5$ action quantization levels, ranging entries in $W, A, b, Q, V$ from $-2.5$ to $2.5$, keeping $K = I$ as done by \cite{geshkovski2025mathematical}, since we are using scalars in $\RR$ where $Q$ absorbs $K$ by commutativity. We take $E_E = I$ and $E_U = e_U \cdot I$ for a trainable parameter $e_U$. This ensures that Assumption~\ref{assumption:B1} holds and allows us to vary the divergence of the predictive distribution via $e_U$ to approximate the empirical distribution. We therefore have $m^6$ trainable parameters per layer. We use cross-entropy with a floor of $\epsilon = 10^{-3}$ to cap the maximum prompt-wise loss at $-\log(\epsilon) \approx 6.91$ nats, to avoid taking $\log(0)$. First, we test the effect of memory on prediction by varying the observation channel noise, which affects the decay rate of $\Delta_M$. For the stability tests, we run three trials: varying the sources mixing rate $\alpha = (1-\delta(T))(2 - \delta (Q))$ by sweeping $T, Q$ values, varying the granularity of the action space via $n$, and varying the granularity of the state space via $m$.

\subsection{Memory of data in Transformer training}
We conduct three trials with varying Markov characteristics of the input-generating processes by setting a garbling rate $\xi = 0, 0.5, 1$.

For each trial, we test the model on a distinct data set from the training data set, with the same number of data points and quantization levels. The testing data generation uses the same transition kernel as the training data, which the model is expected to learn. We plot the last index input (before), predicted (after) and label states on the $y$-axis for each input state on the x-axis, which can be seen in the top row of Figure \ref{fig:xi}. Note that the $x$-axis has fewer than $\tilde K = 98$ ticks in all cases when we combine identical input labels by converting state labels into distribution labels. We also plot the difference between the label state and the last index input state (before), and the label state and the predicted state (after), which can be seen in the bottom row of Figure \ref{fig:xi}.

\begin{figure}[H]
    \centering
    \includegraphics[width=0.37\linewidth]{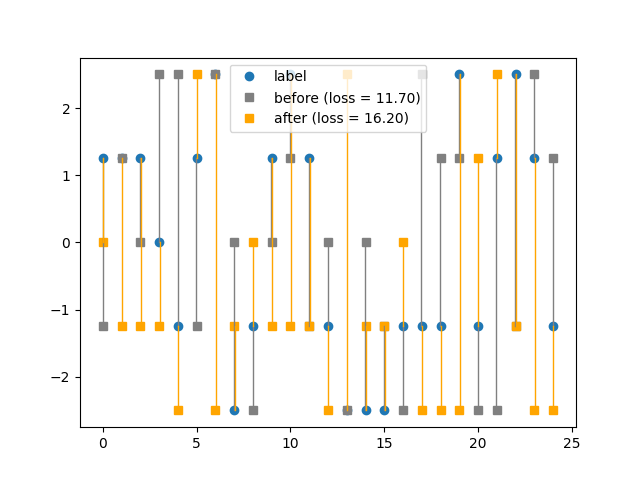}
    \includegraphics[width=0.37\linewidth]{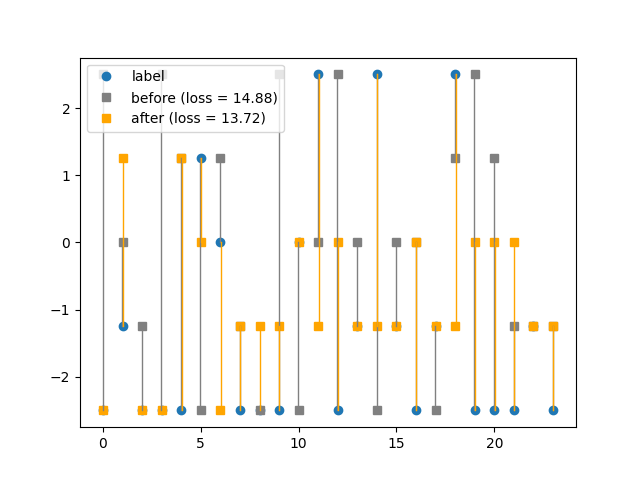}
    \includegraphics[width=0.37\linewidth]{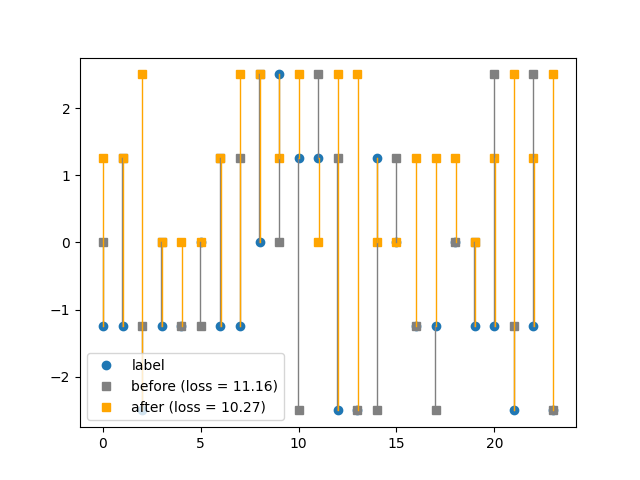}
\end{figure}
\begin{figure}[H]
    \centering
    \includegraphics[width=0.37\linewidth]{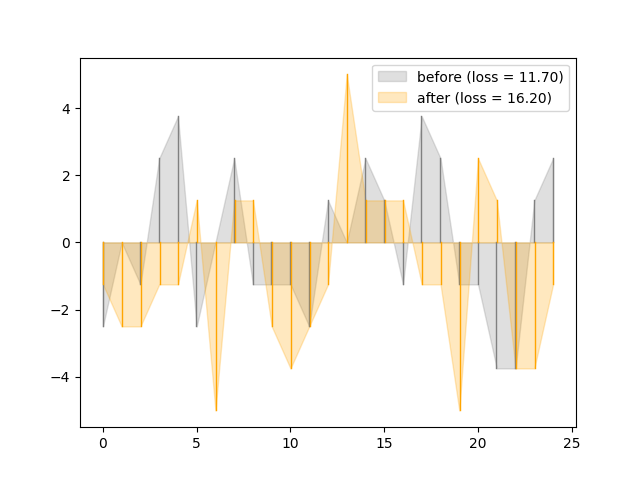}
    \includegraphics[width=0.37\linewidth]{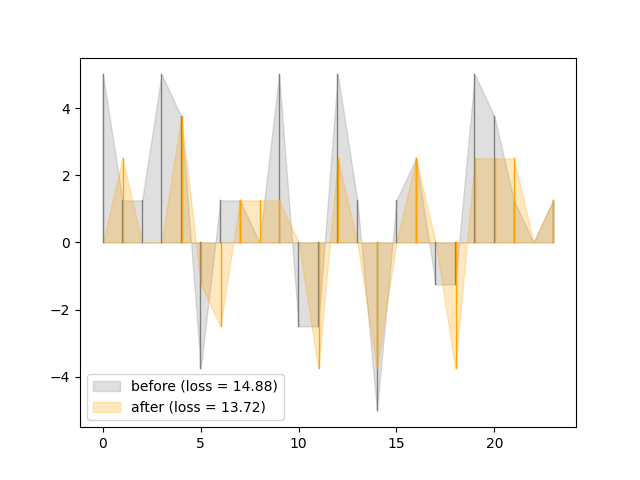}
    \includegraphics[width=0.37\linewidth]{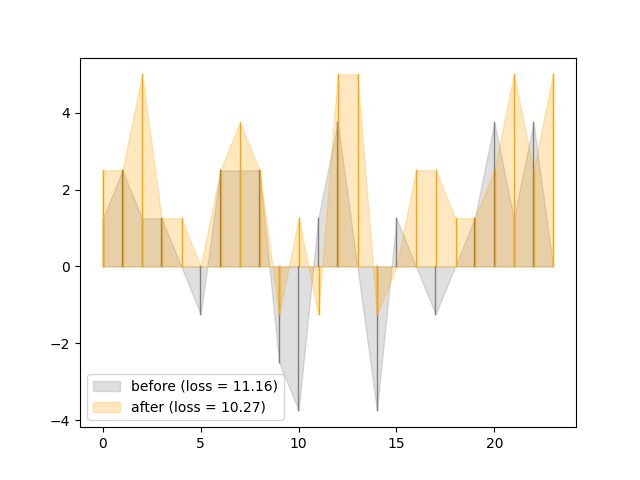}
    \caption{The first group shows the value of the input, prediction, and the label. The second group illustrates the losses calculated from discrepancy between the predicted states and the labels.}
    \label{fig:xi}
\end{figure}

Final loss decreases as the source becomes more Markovian. However, the variation between the initial data set losses show that the difference between the original loss and the final loss is highest (even increases after training) in the completely non-Markovian case, but remains relatively unchanging between the partially non-Markovian and completely Markovian case. 

\begin{figure}[H]
    \centering
    \includegraphics[width=0.5\linewidth]{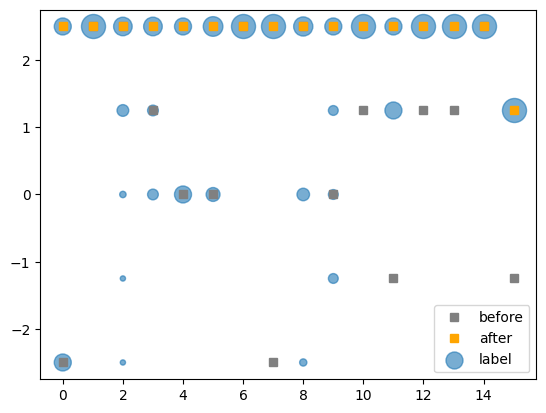}
    \caption{The radius of the label circle corresponds to occurrences in the distribution}
    \label{fig:placeholder}
\end{figure}

\subsection{Filtering via Transformers Trained with Optimal Control}

\subsubsection{Effect of mixing rate} \label{sub:mixing_experiment}
Our goal is to control the Markovianness of our generated input sequence $\{Y_t\}_{t = 0}^{\tilde K}$ via $\alpha$ described by \cite{mcdonald2020exponential}. $\alpha = (1-\delta(T))(2-\delta(Q))$ signifies the rate of agreement between two posterior estimates beginning at different starting state distributions given observations $Y_{[0,t]}$, where $\alpha \approx 1$ has slow or no convergence and $\alpha << 1$ has fast convergence to a single estimate. 

We hand-pick matrices for transition and measure kernels $T$ and $Q$ on finite spaces to obtain the desired Dobrushin coefficients $\delta(T)$ and $\delta(Q)$, determining $\alpha$. We took $T, Q \in \RR^{3\times 3}$, mapping between $n=3$ quantized states. Several examples for $T$ are shown below:

$$\delta \left(\begin{bmatrix} 0 & 1 & 0 \\ 0 & 0 & 1 \\ 1 & 0 & 0 \\ \end{bmatrix} \right) = 0,\qquad
\delta \left (\begin{bmatrix} 1/6 & 2/3 & 1/6 \\ 1/6 & 1/6 & 2/3 \\ 2/3 & 1/6 & 1/6 \\ \end{bmatrix} \right) = \frac{1}{2},\qquad
\delta \left (\begin{bmatrix} 1/3 & 1/3 & 1/3 \\ 1/3 & 1/3 & 1/3 \\ 1/3 & 1/3 & 1/3 \\ \end{bmatrix} \right) = 1
$$
with the matrices used for different Dobrushin coefficients $\delta(T)$ following a similar pattern of uniformly distributing mass outside the single-step transition matrix, which is shown above on the left. We construct the $Q$ matrix entry values to obtain the desired Dobrushin coefficients $\delta(Q)$ with a similar matrix pattern but different row ordering.

This setup allows us to vary $\delta(T)$ and $\delta(Q)$ independently by selecting the corresponding matrix to observe the independent effect of the transition and observation kernel on the Transformer's performance. We test six different values for each $\delta(T), \delta(Q)$, for 36 total trials. 

We generate an input sequence by taking observation $Y_t \sim Q \delta_{X_t}$ of the underlying Markov process $(X_t)_{t = 0}^{\tilde K}$, beginning at $X_0$ and evolving as $X_{t+1} \sim T \delta _ {X_t}$. 
The numerical experiment ran for 90 hours. Each training set was comprised of $\tilde K = 98$ (prompt, label) pairs. The sweep varied the transition and observation kernel independently over the irregularly spaced Dobrushin coefficient set $\{0, \frac{1}{6}, \frac{1}{3}, \frac{1}{2}, \frac{2}{3}, 1\}$ selected to show interesting behaviour at small Dobrushin coefficients. The matrices ranged from fully deterministic cyclic to fully uniform/mixing. We conduct 3 trials and average the loss for each value of $\delta(T)$ and $\delta(Q)$. In doing so, we obtain the losses for Dobrushin coefficients ranging from 0 to 1, as shown in Figure \ref{fig:colourmap}.

\begin{figure}[h]
    \centering
    \includegraphics[width=0.5\linewidth]{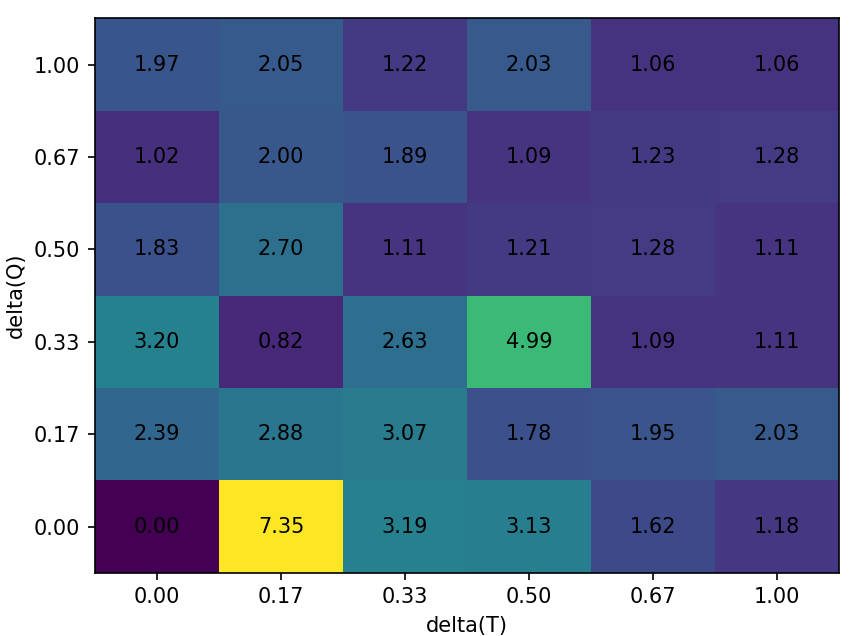}
    \caption{Test prediction loss for varying values of $\delta(Q)$ and $\delta(T)$}
    \label{fig:colourmap}
\end{figure}

Note that the right-most column corresponds to $\alpha = 0$, the bottom-left cell to $\alpha = 2$, and other values can be found via the equation for $\alpha$. The loss is the lowest at purely non-mixing, deterministic kernels at $(\delta(T), \delta(Q)) = (0, 0)$. At $\delta(T) \geq 0.67$ the loss is consistently low regardless of $Q$. 

The near-deterministic region is prone to occasional confident incorrect predictions, which is strongly punished by cross-entropy loss. This is seen at $(0.17, 0.00)$ and $(0.50, 0.33) $ with loss 7.35 and 4.99, respectively. These occasional errors may be due to the relatively low sample size of 3; thus, regional trends may be more indicative of Transformer performance.   

Overall, the bottom-left quadrant (high $\alpha$) attains the highest average loss (2.84), while the top-right quadrant (low $\alpha$) attains the lowest (1.26), suggesting a positive relationship between stability and Transformer performance. While this trend is partially driven by the outlier at $(\delta(T), \delta(Q)) = (0.17, 0)$, if we exclude it, the bottom left quadrant average is still elevated (2.42), so the overall direction still holds. 

Thus, as $\delta(T), \delta(Q)$ increase and $\alpha$ decreases, loss tends to decrease while stability increases. This experiment suggests that stability is positively related to Transformer performance.

\subsubsection{Effect of action space quantization}

For this trial, we keep $\delta(Q)=\delta(T) = \frac{1}{3}$ to attain $\alpha = \frac{10}{9}$, and sweep $m$, the quantization level of the action space parameters, through $\{1, 2, 3, 4, 5\}$. We keep the action space constant with $n = 3$ for all trials. The resulting loss is displayed in Figure \ref{fig:m-loss}. 

\begin{figure}[H]
    \centering
    \includegraphics[width=0.45\linewidth]{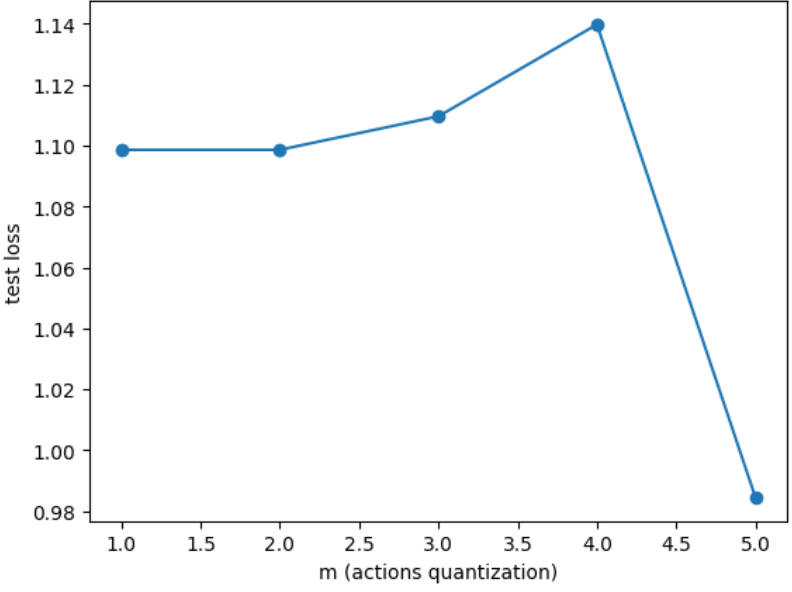}
    \caption{Test prediction loss for varying values of m}
    \label{fig:m-loss}
\end{figure}

There is a nonlinear dependence on the granularity of the action space for smaller values of $m$. Loss is higher for very coarse action spaces, and actually increases as we refine the action space, only to decrease for the highest tested value of $m = 5$. Note that due to computational challenges of using dynamic programming, we only tested relatively coarse action spaces. 

A possible explanation for the non-monotonic pattern is the following: for coarse action spaces ($m \in \{1, \dots, 4\}$), the model struggles to cleanly fit to the training data set, and distributes the output for uniformly to hedge against badly punished incorrect guesses. All coarse-spaced losses cluster around $1.12 \pm 0.02$, so the difference is small, and any increase with $m$ may reflect overfitting with insufficient generalization due to the small $m$. That is, when more fine-grained models make more confident, non-distributional guesses, they end up even more incorrect because the parameters overfit to loose data trends. For $m=5$, the model begins to confidently and correctly model more localized data set tendencies, not solely distributional. 

\subsubsection{Effect of state space quantization}

We now sweep varying values of $n$, the quantization level of the state space, keeping the transition kernel Dobrushin coefficients $\delta(T)$ and $\delta(Q)$, and the action space constant. We take $m = 5$, $\tilde K = 400$, and $\delta(Q)=\delta(T) = \frac{1}{3}$ for all trials. Note that although $\delta(Q)$ and $\delta(T)$ are constant, the transition kernels have different coefficients due to the difference in state space shape. We create a general kernel matrix $\Omega$ using the equation shown below. Given an action space with $n$ levels and a desired Dobrushin coefficient $\delta(\Omega)$, we take $\Omega = \frac{\delta(\Omega)}{n}J + (1-\delta(\Omega))A$ where $J$ is the all-ones matrix and $A$ is any permutation matrix. Note that constructing $\Omega$ this way ensures that the desired $\delta(\Omega)$ is always obtained. Note also that this corresponds to the $T, Q$ used in \ref{sub:mixing_experiment} with $n = 3$, taking
\[
    A = \begin{bmatrix} 0&1&0 \\ 0 & 0 & 1\\ 1 & 0 & 0 \end{bmatrix}.
\]
Figure \ref{fig:n predictions} shows the trained model's predicted distributions for varying state space quantizations, where the size of the squares corresponds to the mass in the predicted measure. 

\begin{figure}[H]
    \centering
    \includegraphics[width=0.45\linewidth]{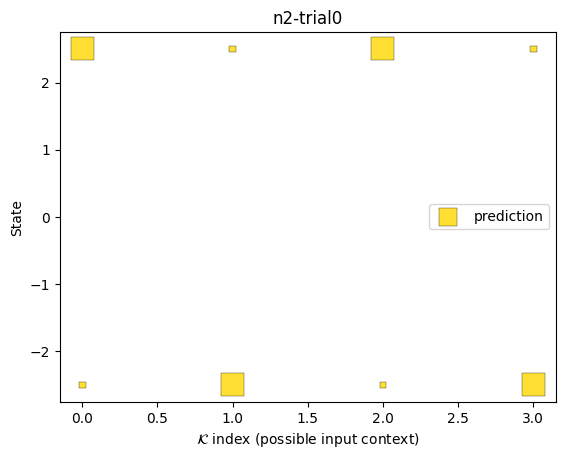}\includegraphics[width=0.45\linewidth]{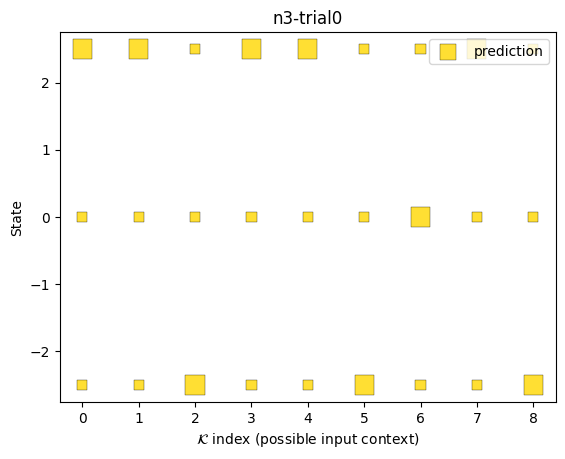}
\end{figure}
\begin{figure}[H]
    \centering
    \includegraphics[width=0.45\linewidth]{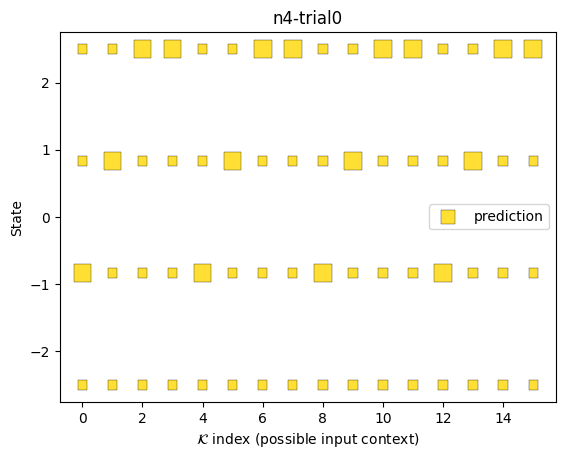}\includegraphics[width=0.45\linewidth]{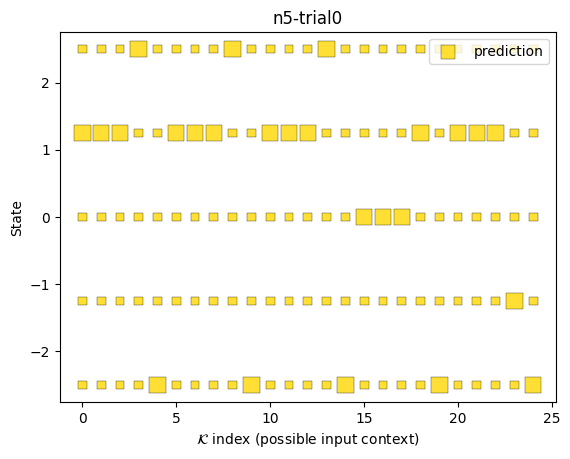}\\
    \includegraphics[width=0.45\linewidth]{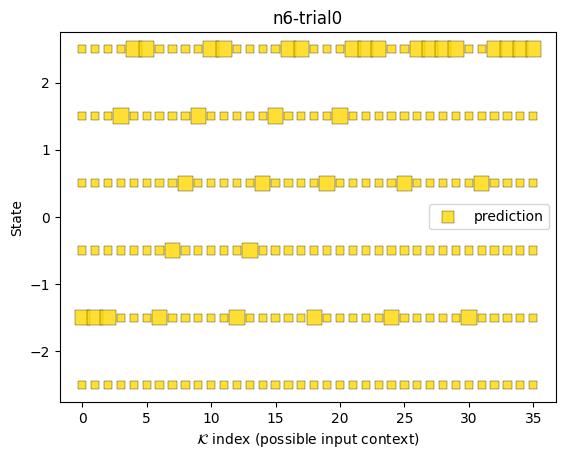}
    \caption{Shape of the predicted distributions with respect to varying state quantization level $n$.}
    \label{fig:n predictions}
\end{figure}

These predictions resulted in actual test losses and per-state losses plotted with linear interpolation in Figure \ref{fig:n-loss} for varying state quantization levels:

\begin{figure}[H]
    \centering
    \includegraphics[width=0.48\linewidth]{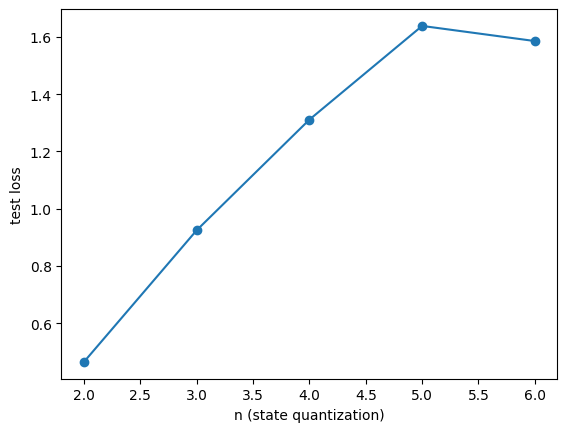}
    \hfill
    \includegraphics[width=0.48\linewidth]{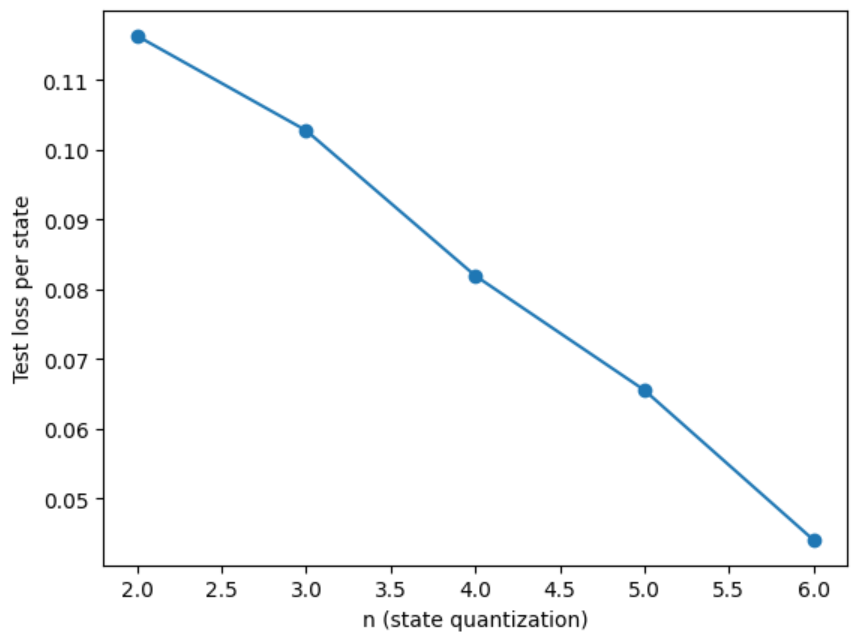}
    \caption{Test prediction loss (left) and per-state loss (right) for varying values of n}
    \label{fig:n-loss}
\end{figure}

Actual test loss tends to increase with $n$ in Figure
\ref{fig:n-loss} (left), except at $n = 6$, where the trend breaks and test loss decreases. However, since cross-entropy has no intrinsic normalization for the size of the state space, this loss increase reflects the increased difficulty in the task of classification, which is expected since there are more possible states to classify. Once we normalize by the number of possible states ($n^N$) in Figure \ref{fig:n-loss} (right), per-state loss monotonically decreases near-linearly. Thus, model performance outpaces the growth rate of the state space itself.

The cumulative training runs took 75 hours. The runtime breakdown for each trial is shown in Figure \ref{fig:n-runtime}.
\begin{figure}[H]
    \centering
    \includegraphics[width=0.7\linewidth]{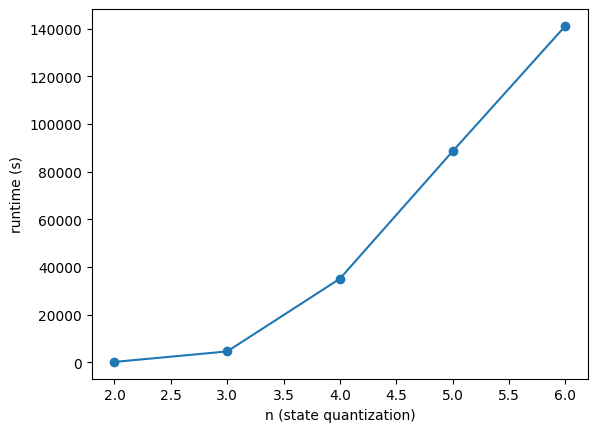}
    \caption{Numerical experiment runtime for varying values of n}
    \label{fig:n-runtime}
\end{figure}

A least-squares quadratic fit of the runtime in seconds as a function of $n$ yields $t_\text{runtime}(n) = 8542.9n^2 - 31702.9 n + 27200.0$ with $R^2 \approx 0.994$.

\section{Conclusion}\label{sec:Conclusion}

We studied next-token prediction by optimally trained Transformers for observations generated by a latent process. We truncated the context to a length $M$, and quantified the resulting error with entropy gaps for stationary ergodic process; with filter stability results for $N$th order Markov sources with the explicit rate $\boldsymbol\alpha^{\lfloor M/N \rfloor}$, where $\boldsymbol\alpha$ is a mixing rate determined by the Dobrushin coefficients of the transition kernel and the observation kernel. We then constructed a Transformer that realizes the quantized finite-window predictor, trained it with dynamic programming over a quantized set of weights. We finally showed that the resulting learning error can be decomposed into a sum of four factors: loss due to finite memory, quantization error, sample complexity error, and error due to insufficiency of architecture. Finally, through extensive experiments, we measured how important the memory of data is in Transformer training, and the effect of the mixing rate in filtering via Transformers.

\section*{Funding}

The first author was supported by the Scientific and Technological Research Council of Türkiye (TÜBİTAK) through the 2211-E program. The third author was supported by Turkish Academy of Sciences through the 2025 GEBIP Award. The last author was supported in part by the Natural Sciences and Engineering Research Council (NSERC) of Canada.
\newpage

\appendix

\section{The Training Algorithm}

\begin{algorithm}[H]
\caption{Next Token Prediction with Dynamic Programming}
\begin{algorithmic}[1]

\State \textbf{Input:} data set $\tilde D = \{(\bm x ^k, \tilde y^k)\}_{k=1}^ {\tilde K}$, horizon $T$, quantization levels $(l, n, m)$

\State Construct embedding space grid $\XX_n$ from $\XX$ and for $x\in \XX$ define 
$Q_n(\cdot) = \arg \min_{x'\in \XX_n}||\cdot-x'||$

\State Construct finite action set $\UU_m = \{U^1, \dots U^m\} \subset \UU$

\State Construct measure grid $\mathcal P^{(l)} (\XX_n)$ and for $\nu \in \PXn$ define 
$$R_l(\nu)  = \arg \min _{\hat \nu \in \mathcal P^{(l)} (\XX_n)} W_2(\nu, \hat \nu)$$

\State \textbf{for} $k = 1, \dots , \tilde K$ \textbf{do}

\State \quad $\hat \mu_0^k \leftarrow R_l (\frac{1}{N}\sum _{i=1}^N\delta_{Q_n(p_i, x^{i,k})})$

\State \quad $\hat y^k \leftarrow R_l( {\frac{
\sum_{j=1}^{\tilde {K}} \delta _ {\tilde y^j} \mathbbm{1}_{\{\bm{x}^j = \bm{x}^k\}}}{\sum_{j=1}^{\tilde {K}} \mathbbm{1}_{\{\bm{x}^j = \bm{x}^k\}}}})$

\State \textbf{end for}

\State Construct data set $D = \{(\bm x ^k, \hat y^k)\}_{k=1}^K$ 

\State Define dynamics for $U \in \UU _m$
$$\Phi ^{(l, n, m)}(\hat \mu, U) = R_l(\Phi^{(n)}(\hat \mu, U))$$

\State Define terminal cost $$\mathfrak C _ {l, n, m, T} (\hat {\bm \mu}) = \frac{1}{K}\sum_{k = 1} ^ K \mathrm{CE}(y^k, \text{softmax}(E_U {x_{T, \gamma}^{k, L}}))$$

\State \textbf{for} $t = T-1$ down to $0$ \textbf{do}
\State \quad \textbf{for all} $\hat{\bm \mu} = (\hat \mu ^1, \dots , \hat \mu ^K) \in \mathcal P^l(\XX_n)^K$ \textbf{do} 

\State \quad \quad 
$$\mathfrak C _ {l, n, m, t} (\hat {\bm \mu}) = \min_{U \in \UU_m} \mathfrak C _ {l, n, m, t+1} (\Phi ^{(l, n, m)}(\hat {\mu}^1, U), \dots ,\Phi ^{(l, n, m)}(\hat {\mu}^K, U))$$

\State \quad \quad 
$$\gamma ^*_t(\hat {\bm \mu}) = \arg \min _{U \in \UU_m} \mathfrak C _ {l, n, m, t+1} (\Phi ^{(l, n, m)}(\hat {\mu}^1, U), \dots ,\Phi ^{(l, n, m)}(\hat {\mu}^K, U))$$

\State \quad \textbf{end for}
\State \textbf{end for}
\State \textbf{Forward Pass:}

\State \textbf{for} $t = 0, \dots , T-1$ \textbf{do}

\State \quad $U^*_t \leftarrow \gamma ^*_t(\hat \mu _t ^1, \dots , \hat \mu _t ^K)$

\State \quad \textbf{for} $k = 1, \dots K$ \textbf{do}

\State \quad \quad $\hat \mu^k_{t+1} \leftarrow \Phi ^{(l, n, m)}(\hat \mu^W_{K,t}, U^*_t)$

\State \quad \textbf{end for}
\State \textbf{end for}
\State \textbf{Output:} $(U^*_t) ^{T -1}_{t = 0}$

\end{algorithmic}
\end{algorithm}

\section{Proofs of the Results in Section~\ref{sec:problem}}

\subsection{Auxiliary Results}

\begin{lemma}\label{def:QuantizationBound}
    Under Assumptions~\ref{a:BaseAssumptions}-(A2, A3), for all $r\in\ZZ_+$ and $s\in\NN$, $\max_{Y_{0:r}\in\YY^r}\|P_r(\,\cdot\mid Y_{0:r}) - \Xi_s(q^s_\#P_r)(\,\cdot\mid Y_{0:r})\|_\TV \leq L_{\psi}/s$.
\end{lemma}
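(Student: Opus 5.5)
The plan is to compute the density of both measures with respect to $\mu$ and bound the total variation distance by half the $L^1(\mu)$ distance of those densities. Fix $r$ and a history $y_{0:r}$. The true predictive distribution is a mixture of the emission kernels: with $\lambda_r := \Pr(Z_{r+1}\in\cdot\mid Y_{0:r}=y_{0:r})\in\cP(S)$, we have
\[
P_r(A) = \sum_{z\in S}\lambda_r(z)\,Q(A\mid z),
\]
so $P_r$ has the $\mu$-density $p(y) = \sum_z \lambda_r(z)\psi(y\mid z)$. Because the $\lambda_r(z)$ sum to one, (A2) passes to the mixture: $|p(y)-p(y')|\le L_\psi\rho(y,y')$ for all $y,y'$.

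Next we identify the density of $\Xi_s(q^s_\#P_r)$. By definition, $q^s_\#P_r(y_{s,i}) = P_r(C(s,i)) = \int_{C(s,i)} p\,\de\mu$. Definition~\ref{def:RevertQuantization} then spreads this mass over $C(s,i)$ proportionally to $\mu$. The resulting density is piecewise constant,
\[
\bar p(y) = \frac{1}{\mu(C(s,i))}\int_{C(s,i)} p\,\de\mu \qquad \text{for } y\in C(s,i),
\]
which is the cellwise $\mu$-average of $p$. Assumption (A3) guarantees $\mu(C(s,i))>0$, so this is well defined.

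Now compare the two densities pointwise. For $y\in C(s,i)$,
\[
|p(y)-\bar p(y)| \le \frac{1}{\mu(C(s,i))}\int_{C(s,i)}|p(y)-p(y')|\,\mu(\de y') \le L_\psi\,\mathrm{diam}(C(s,i)) \le \frac{2L_\psi}{s},
\]
using $\mathrm{diam}(C(s,i))\le 2/s$. Therefore
\[
\|P_r-\Xi_s(q^s_\#P_r)\|_\TV = \tfrac12\int_{\YY}|p-\bar p|\,\de\mu \le \tfrac12\cdot\frac{2L_\psi}{s}\,\mu(\YY) = \frac{L_\psi}{s}.
\]
The bound is uniform in the history, which gives the maximum over $Y_{0:r}$ in the statement.

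The main point to check is the normalization. We need $\|\cdot\|_\TV$ to be the sup-over-sets convention, so that it equals half the $L^1$ distance; this is how the paper defines it. We also need $\mu$ to be a probability measure. The latter holds because $\mu=\frac1{|S|}\sum_z Q(\cdot\mid z)$. No minorization (A1) is required, consistent with the lemma's hypotheses (A2, A3). The argument is the same for the general stationary ergodic source and for the Markov case, since only the mixture form of $P_r$ is used.
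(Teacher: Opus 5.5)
Your proof is correct and follows the paper's argument essentially step for step. You write $P_r$ as a mixture with an $L_\psi$-Lipschitz $\mu$-density, identify the density of $\Xi_s(q^s_\#P_r)$ as its cellwise $\mu$-average, bound the pointwise gap by $L_\psi\operatorname{diam}(C(s,i))\le 2L_\psi/s$, and take half the $L^1(\mu)$ norm. Your explicit remark that $\mu(\YY)=1$ is needed in the last step is a point the paper uses only implicitly.
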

\begin{proof}
    Taking the Radon-Nikodym derivative of $P_r$ with respect to $\mu$, we get a $\mu$-integrable function $p_r(y) := \sum_{z\in S}\psi(y\mid z)\,\Pr(Z_{r+1} = z\mid Y_{0:r})$. Then, for all $y,y'\in\YY$,
    \begin{align*}
        |p_r(y) - p_r(y)| &= \big|\sum_{z\in S}(\psi(y\mid z) - \psi(y'\mid z))\,\Pr(Z_{r+1} = z\mid Y_{0:r})\big|\\
        &\leq \sum_{z\in S}|\psi(y\mid z) - \psi(y'\mid z)|\Pr(Z_{r+1} = z\mid Y_{0:r}\\
        &\leq L_\psi\rho(y, y')\underbrace{\sum_{z\in S}\Pr(Z_{r+1} = z\mid Y_{0:r})}_{1} \tag{by Assumption~\ref{a:BaseAssumptions}-(A2)}\\
        &= L_\psi\rho(y, y').
    \end{align*}
    So, $p_r$ is also $L_\psi$-Lipschitz. For each $i=1,\ldots,R(s)$, on the cell $C(s,i)$, the $\mu$-density of $\Xi_s(q^s_\#P_r)(\,\cdot\mid Y_{0:r})$, denoted $\hat{p}_r$, is constant and equal to the $\mu$-average of $p_r$. Hence,
    \[
        \sup_{y\in C(s,i)}|p_r(y) - \hat{p}_r(y)| \leq L_\psi\operatorname{diam}(C(s,i)) \leq \frac{2L_\psi}{s}.
    \]
    We use the density representation of the total variation distance:
    \[
        \|P_r(\,\cdot\mid Y_{0:r}) - \Xi_s(q^s_\#P_r)(\,\cdot\mid Y_{0:r})\|_\TV = \frac{1}{2}\int_\YY \mu(\de y)\,|p_r(y) - \hat{p}_r(y)| \leq \frac{1}{2}\cdot\frac{2L_\psi}{s} = \frac{L_\psi}{s}.
    \]
    The above bound does not depend on the choice of trajectory $Y_{0:r}$ at any step of this proof, thereby, the claim holds and the proof is now concluded.
\end{proof}

\begin{lemma}\label{lem:Helper1}
    Under Assumptions~\ref{a:BaseAssumptions}-(A1) and (A2), for all $\mathbf{y} = (y_1,\ldots,y_M), \mathbf{y}'=(y_1',\ldots,y_M')\in\YY^M$, for all $r\geq M$, we have
    \[
        \begin{split}
            \|\Pr(Z_{r-M+1:r}\in\cdot\mid Y_{r-M+1:r} = \mathbf{y}) - \Pr(Z_{r-M+1:r}\in\cdot\mid &Y_{r-M+1:r} = \mathbf{y}')\|_\TV\\
            &\leq \frac{L_\psi}{q_{\rm min}}\sum_{i=1}^M\rho(y_i,y_i').
        \end{split}
    \]
\end{lemma}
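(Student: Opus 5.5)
The plan is to write both conditional laws explicitly through Bayes' rule, change the observation vector one coordinate at a time, and bound each single-coordinate change. Let $\pi := \mathrm{Law}(Z_{r-M+1:r})\in\cP(S^M)$. Since the $Y_j$ are conditionally independent given the $Z_j$, with densities $\psi(\cdot\mid z)$ with respect to $\mu$, the posterior on $S^M$ is
\[
    \Pr(Z_{r-M+1:r} = \mathbf{z}\mid Y_{r-M+1:r} = \mathbf{y}) = \frac{\pi(\mathbf{z})\prod_{i=1}^M\psi(y_i\mid z_i)}{\sum_{\mathbf{z}'\in S^M}\pi(\mathbf{z}')\prod_{i=1}^M\psi(y_i\mid z_i')},
\]
which is a version of the conditional law for every $\mathbf{y}$. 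The denominator is at least $q_{\rm min}^M>0$ by (A1), so this version is well defined pointwise and not only $\mu$-a.e. Fixing this version up front is what makes the $\sup$ over all $\mathbf{y},\mathbf{y}'$ meaningful, so I will state it first.

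Next I would interpolate between the two observation vectors with the hybrids $\mathbf{y}^{(k)} := (y_1',\ldots,y_k',y_{k+1},\ldots,y_M)$ for $k=0,\ldots,M$. Then $\mathbf{y}^{(0)}=\mathbf{y}$ and $\mathbf{y}^{(M)}=\mathbf{y}'$, and consecutive hybrids differ only in coordinate $k$. By the triangle inequality for $\|\cdot\|_\TV$, it suffices to show that for each $k$ the TV distance between the posteriors given $\mathbf{y}^{(k-1)}$ and $\mathbf{y}^{(k)}$ is at most $\frac{L_\psi}{q_{\rm min}}\rho(y_k,y_k')$. Summing over $k$ then gives the claim.

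For the single-coordinate step, I would set $g(\mathbf{z})$ to be $\pi(\mathbf{z})$ times the product of the likelihood factors of all coordinates $j\neq k$ (which are common to both hybrids), normalized to be a probability on $S^M$. Both posteriors then have the form $a/A$ and $b/B$, where
\[
    a(\mathbf{z}) = g(\mathbf{z})\psi(y_k'\mid z_k),\qquad b(\mathbf{z}) = g(\mathbf{z})\psi(y_k\mid z_k),\qquad A=\textstyle\sum a,\quad B=\textstyle\sum b.
\]
I would then use the elementary inequality
\[
    \Big\|\frac{a}{A}-\frac{b}{B}\Big\|_1 \leq \frac{\|a-b\|_1 + |A-B|}{B}\leq \frac{2\|a-b\|_1}{B},
\]
which follows by adding and subtracting $a/B$. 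By (A2), $\|a-b\|_1\leq L_\psi\rho(y_k,y_k')\sum_{\mathbf{z}}g(\mathbf{z}) = L_\psi\rho(y_k,y_k')$. By (A1), $B\geq q_{\rm min}\sum_{\mathbf{z}} g(\mathbf{z}) = q_{\rm min}$. Halving the $\ell_1$ norm to get TV yields exactly $\frac{L_\psi}{q_{\rm min}}\rho(y_k,y_k')$.

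I expect the main obstacle to be bookkeeping rather than analysis. The point is to choose the normalization of $g$ so that the Lipschitz bound and the lower bound on $B$ are both taken against the same probability $g$; only then do the constants cancel to give $L_\psi/q_{\rm min}$ with no extra factor. I would also note that the argument uses neither stationarity nor the Markov structure of $\{Z_r\}$, only conditional independence of the observations given the hidden states. This is why the bound holds for every $r\geq M$ uniformly.
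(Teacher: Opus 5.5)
Your proposal is correct and follows essentially the same route as the paper: the Bayes formula for the posterior on $S^M$, telescoping through the hybrid vectors $\mathbf{y}^{(k)}$, and the normalization inequality $\|a/A-b/B\|_1\leq 2\|a-b\|_1/B$, with (A2) bounding the numerator and (A1) bounding the denominator relative to the same common factor. The only differences are cosmetic: you normalize the common factor $g$, whereas the paper leaves $c$ unnormalized and lets $\|c\|_1$ cancel, and you make explicit the choice of a pointwise version of the conditional law.
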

\begin{proof}
    Let $\tilde P(\mathbf{z}) := \Pr\{Z_{r-M+1:r} = \mathbf{z}\}$ for all $\mathbf{z}\in S^M$. Conditional independence of $Y_{r-M+1:r}$ gives the likelihood
    \[
        \boldsymbol{\psi}_{\tilde{\mathbf{y}}}(\mathbf{z}) := \prod_{i=1}^M \psi(\tilde{y}_i\mid z_i)
    \]
    for all $\tilde{\mathbf{y}} := (\tilde{y}_1,\ldots,\tilde{y}_M)\in\YY^M$. The posterior is then
    \[
        \mu_{\tilde{\mathbf{y}}}(\mathbf{z}) := \frac{\tilde{P}(\mathbf{z})\,\boldsymbol{\psi}_{\tilde{\mathbf{y}}}(\mathbf{z})}{\sum_{\mathbf{z}'\in S^M}\tilde{P}(\mathbf{z}')\,\boldsymbol{\psi}_{\tilde{\mathbf{y}}}(\mathbf{z}')}.
    \]
    We continue after the following \emph{intermezzo}: For any two vectors $u,v\in\RR_+^d\setminus\{0\}$ for some $d\in\NN$, we denote their normalizations by $\hat{u} = u/\|u\|_1$ and $\hat{v} = v/\|v\|_1$. Then, we can write
    \begin{equation}\label{eq:Helper1-1}
        \hat{u} - \hat{v} = \frac{u-v}{\|u\|_1} + v\left( \frac{1}{\|u\|_1} - \frac{1}{\|v\|_1} \right).
    \end{equation}
    The term on the right hand side has the norm
    \begin{equation}\label{eq:Helper1-2}
        \left\|v \left(\frac{1}{\|u\|_1} - \frac{1}{\|v\|_1}\right)\right\|_1 \leq \frac{|\|v\|_1 - \|u\|_1}{\|u\|_1} \leq \frac{\|v-u\|_1}{\|u\|_1}.
    \end{equation}
    Therefore, by \eqref{eq:Helper1-1} and \eqref{eq:Helper1-2}, we have
    \begin{equation}
        \label{eq:Helper1-3}
        \|\hat{u} - \hat{v}\|_\TV = \frac{1}{2}\|\hat{u} - \hat{v}\|_1 \leq \frac{\|v-u\|_1}{\|u\|_1}.
    \end{equation}
    We resume from where we left: Let $\mathbf{a},\mathbf{b}\in\YY^M$ differ at the coordinate $k$ with $\rho(a_k, b_k) = \delta$. We define $c(\mathbf{z}) := \tilde{P}(\mathbf{z})\prod_{i\neq k}\psi(a_i\mid z_i) \geq 0$ (as a matter of fact, Assumption~\ref{a:BaseAssumptions}-(A1) gives that $\|c\|_1 \geq q_{\rm min}^{M-1}>0$). Writing $u(\mathbf{z}) = c(\mathbf{z})\psi(a_k\mid z_k)$ and $v(\mathbf{z}) = c(\mathbf{z})\psi(b_k\mid z_k)$. We have
    \begin{itemize}
        \item from Assumption~\ref{a:BaseAssumptions}-(A2) that $\|u-v\|_1 = \sum_{\mathbf{z}\in S^M} c(\mathbf{z})|\psi(a_k\mid z_k) - \psi(b_k\mid z_k)| \leq L_\psi\delta\|c\|_1$.
        \item from Assumption~\ref{a:BaseAssumptions}-(A1) that $\|u\|_1 = \sum_{\mathbf{z}\in S^M} c(\mathbf{z}) \psi(a_k\mid z_k) \geq q_{\rm min} \|c\|_1$.
    \end{itemize}
    Since $c$ is \emph{non-degenerate} ($\|c\|_1>0$), from the observation above and from \eqref{eq:Helper1-3}, we have
    \begin{equation}\label{eq:Helper1-4}
        \|\mu_{\mathbf{a}} - \mu_{\mathbf{b}}\|_\TV \leq \frac{L_\psi\delta}{q_{\rm min}}.
    \end{equation}

    Now define $\mathbf{y}^{(0)} = \mathbf{y}$, $\mathbf{y}^{(k)} = (y_1',\ldots,y_k',y_{k+1},\ldots,y_M)$ for all $k=1,\ldots,M$ and $\mathbf{y}^{(M)} = \mathbf{y}'$. Then, from \eqref{eq:Helper1-4}, we have
    \[
        \|\mu_{\mathbf{y}} - \mu_{\mathbf{y}'}\|_\TV \leq \sum_{k=1}^M \|\mu_{\mathbf{y}^{(k-1)}} - \mu_{\mathbf{y}^{(k)}}\|_\TV \leq \frac{L_\psi}{q_{\rm min}} \sum_{k=1}^M \rho(y_k, y_k').
    \]
    Seeing $\mu_{\mathbf{a}}$ is equivalent with $\Pr(Z_{r-M+1:r}\in\cdot\mid Y_{r-M+1:r} = \mathbf{a})$ concludes this proof.
\end{proof}

\begin{lemma}\label{lem:ErgodicityIsInherited}
    Under Assumption~\ref{a:Ergodicity}, $\{Y_r\}_{r\in\ZZ_+}$ is stationary and ergodic.
\end{lemma}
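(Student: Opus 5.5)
To prove Lemma~\ref{lem:ErgodicityIsInherited}, I would realize $\{Y_r\}_{r\in\ZZ_+}$ as a stationary coding (a factor) of a stationary ergodic process built from $\{Z_r\}$ and independent noise. Since conditional independence of the observations given $Z_r$ means that, given the whole latent path, the $Y_r$ are independent with laws $Q(\,\cdot\mid Z_r)$, I can write $Y_r = h(Z_r, V_r)$. Here $\{V_r\}$ is i.i.d. uniform on $[0,1]$ and independent of $\{Z_r\}$, and $h:S\times[0,1]\to\YY$ is measurable with $h(z,V)\sim Q(\,\cdot\mid z)$; such an $h$ exists because $\YY$ is a compact metric, hence standard Borel, space. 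This representation has the same joint law as the original pair process, so it suffices to prove the claim for it.

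\textbf{Stationarity.} For any $k$ and shift $t$, the law of $(Y_t,\ldots,Y_{t+k})$ is
\[
\sum_{z_{0:k}}\Pr(Z_{t:t+k}=z_{0:k})\prod_{j}Q(\cdot\mid z_j).
\]
By stationarity of $\{Z_r\}$ in Assumption~\ref{a:Ergodicity}, this does not depend on $t$. Hence $\{Y_r\}$ is stationary, and in fact the joint process $\{(Z_r,Y_r)\}$ is stationary as well.

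\textbf{Ergodicity.} The pair $W_r := (Z_r,V_r)$ is stationary. I will show that it is ergodic under the shift, because an ergodic product with an i.i.d. sequence remains ergodic. The reason is that an i.i.d. (Bernoulli) shift is mixing, and the product of an ergodic system with a weakly mixing system is ergodic. Then $Y = F(W)$, with $F$ applied coordinatewise by $h$, commutes with the shift. A shift-invariant set for $Y$ pulls back to a shift-invariant set for $W$, which therefore has probability $0$ or $1$. Hence $\{Y_r\}$ is ergodic.

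\textbf{Main obstacle.} The hard step is the notion of ergodicity. Assumption~\ref{a:Ergodicity} is stated only as a pointwise ergodic average for functions of a single coordinate $Z_r$, not as triviality of the shift-invariant $\sigma$-field. I would read the assumption, together with stationarity, as meaning that the shift on the path space of $\{Z_r\}$ is ergodic; this is the standard meaning and the one needed for the factor argument. If only the weak single-coordinate version were available, I would fall back to proving the weaker analogous statement for $Y$ directly. For bounded $g$, I would condition on $Z$ and write
\[
\frac1T\sum_{r<T}g(Y_r)=\frac1T\sum_{r<T}\bar g(Z_r)+\frac1T\sum_{r<T}\big(g(Y_r)-\bar g(Z_r)\big),
\]
with $\bar g(z)=\int g\,dQ(\cdot\mid z)$. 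The first term converges by the assumption. The second is an average of conditionally independent, bounded, mean-zero terms, so it vanishes almost surely by a conditional strong law of large numbers, for example via a fourth-moment Borel--Cantelli bound.
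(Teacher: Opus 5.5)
Your proposal is correct and follows the same route as the paper. The paper also writes $Y_r = H(Z_r,V_r)$ via the randomization lemma, with i.i.d.\ uniform noise independent of $\{Z_r\}$, and concludes that $\{Y_r\}$ is stationary and ergodic, citing Walters for the fact that an ergodic process combined with independent noise remains ergodic. Your explicit stationarity computation adds detail to its one-line argument, and so does your remark that Assumption~\ref{a:Ergodicity} must be read as shift-ergodicity of $\{Z_r\}$, which the paper uses tacitly, e.g.\ when applying Birkhoff's theorem to the blocks $\xi_j$ in the proof of Theorem~\ref{thm:Soft}.
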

\begin{proof}
    By the randomization lemma \cite[Lemma~2.22]{kallenberg2021foundations}, there exists $H:S\times[0,1]\to\YY$ and an i.i.d. $[0,1]$-valued noise process $\{V_r\}_{r\in\ZZ_+}$ independent of $\{Z_r\}_{r\in\ZZ_+}$ such that $Y_r = H(Z_r, V_r)$ for all $r\in\ZZ_+$. Since mixing of an ergodic process with a noise is still ergodic \cite[Theorem~1.23]{walters2000introduction}, the claim follows.
\end{proof}

\begin{lemma}\label{lem:pomp}
    The block process $\{(\mathbf{X}_i, \mathbf{Y}_i)\}_{i\geq 0}$ is a partially observed Markov process with the state transition kernel $\mathbf{T}$ and the observation channel $\mathbf{Q}$.
\end{lemma}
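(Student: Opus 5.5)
The plan is to verify directly the two defining properties of a partially observed Markov process for the block pair. First, $\{\mathbf{X}_i\}_{i\ge 0}$ must be a Markov chain with kernel $\mathbf{T}$. Second, conditionally on the block states, the $\mathbf{Y}_i$ must be independent, with $\mathbf{Y}_i$ depending only on $\mathbf{X}_i$ through $\mathbf{Q}$. Both properties come from factorizing the joint law of $(Z_{1:r},Y_{1:r})$ and regrouping the factors blockwise according to \eqref{eq:block_state}--\eqref{eq:block_obs}.

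\emph{Step 1: block transition.} Fix $i\ge 1$ and block values $\mathbf{x}_0,\ldots,\mathbf{x}_{i-1}$, with $\mathbf{x}_{i-1}=\mathbf{z}=(z_1,\ldots,z_N)$. Write $\mathbf{z}'=(z_1',\ldots,z_N')$ for the candidate value of $\mathbf{X}_i$. Apply the chain rule to the $N$ coordinates of $\mathbf{X}_i$:
\[
\Pr(\mathbf{X}_i=\mathbf{z}'\mid \mathbf{X}_{0:i-1})=\prod_{j=1}^N \Pr\big(Z_{\res+(i-1)N+j}=z_j'\mid Z_{1:\res+(i-1)N+j-1}\big).
\]
By \eqref{eq:Norder}, each factor depends only on the last $N$ states. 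Those states are $(z_j,\ldots,z_N,z_1',\ldots,z_{j-1}')$, since the window of length $N$ never reaches past the previous block $\mathbf{x}_{i-1}$. The factor therefore equals $\boldsymbol{\kappa}(z_j'\mid z_j,\ldots,z_N,z_1',\ldots,z_{j-1}')$, and the product is exactly $\mathbf{T}(\mathbf{z}'\mid\mathbf{z})$ from (K1). This shows that $\{\mathbf{X}_i\}$ is Markov with time-homogeneous kernel $\mathbf{T}$.

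\emph{Step 2: the observation channel.} Use the standing assumption that $Y_r\sim Q(\cdot\mid Z_r)$, conditionally independent of all other $Z_t$ and $Y_t$ given $Z_r$. Conditioning on the whole state path, the joint density of $Y_{1:r}$ is then $\prod_t \psi(y_t\mid z_t)$. Grouping this product by blocks yields $\prod_i \mathbf{Q}(\mathbf{y}_i\mid\mathbf{x}_i)$, with densities taken with respect to $\mu^{\otimes N}$. Hence, given $\mathbf{X}_{0:m}$, the $\mathbf{Y}_i$ are independent and $\mathrm{Law}(\mathbf{Y}_i\mid \mathbf{X}_{0:m},\mathbf{Y}_{j\ne i})=\mathbf{Q}(\cdot\mid\mathbf{X}_i)$. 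Combining with Step 1 gives the joint law
\[
\Pr(\mathbf{X}_0\in\cdot)\prod_{i}\mathbf{T}(\mathbf{x}_i\mid\mathbf{x}_{i-1})\,\mathbf{Q}(\mathbf{y}_i\mid\mathbf{x}_i),
\]
which is the canonical hidden Markov factorization. The leftover observations attached to $\mathbf{X}_0$ can be handled the same way, or simply treated as part of the initial prior.

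\emph{Main obstacle.} The only delicate point is the boundary block $\mathbf{X}_0=(Z_1,\ldots,Z_\res)$. When $\res<N$, $\mathbf{X}_0$ does not contain $N$ past states, so the first transition $\mathbf{X}_0\to\mathbf{X}_1$ is not given by $\mathbf{T}$ in general and also depends on the initial law. I would handle this by treating $\mathrm{Law}(\mathbf{X}_1)$, or the law of $\mathbf{X}_0$ with the initial segment absorbed, as the initial distribution of the block chain. The kernel $\mathbf{T}$ then governs all transitions $i\ge 1$, which is all that is needed for the filter-stability argument of Proposition~\ref{prop:NMarkovDobrushin}. I would also check that the index bookkeeping in Step 1, namely that the $N$-window never reaches past $\mathbf{X}_{i-1}$, is correct for $j=1$ and for $j=N$.
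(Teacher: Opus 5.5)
Your proposal is correct and follows the same route as the paper. The Markov property of the blocks comes from unrolling the $N^{\rm th}$ order dynamics into the product defining $\mathbf{T}$, and the channel $\mathbf{Q}$ comes from the conditional independence of the $Y_r$ given the states; the paper phrases the latter through the noise representation $Y_r = H(Z_r,V_r)$ with a noise block $\mathbf{V}_i$, not your density factorization. Your boundary caveat is well taken and more careful than the paper, whose proof asserts a $\mathbf{T}$-transition already out of $\mathbf{X}_0=(Z_1,\ldots,Z_\res)\in S^{\res}$; that only makes sense when $\res\geq N$ after keeping the last $N$ coordinates of $\mathbf{X}_0$, and when $\res<N$ the chain should start at $\mathbf{X}_1$ as you propose, though the proof of Proposition~\ref{prop:NMarkovDobrushin} as written takes a $\mathbf{T}$-step out of $\mathbf{X}_0$ and would then need re-indexing.
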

\begin{proof}
    Fix $i\geq 0$. From \eqref{eq:block_state} and the definition of $\mathbf{T}$, $\mathbf{X}_{i+1}$ is generated from $\mathbf{X}_i$ via the kernel $\mathbf{T}$, hence it is Markov. By the stochastic realization results, each $Y_i$ is a function of $Z_i$ and an independent noise $V_i$. As we block, $\mathbf{Y}_i$ is then generated only by $\mathbf{X}_i$ via a noise block $\mathbf{V}_i$ for all $i$. Hence, $\{(\mathbf{X}_i, \mathbf{Y}_i)\}_{i\geq 0}$ is indeed a POMP.
\end{proof}

\subsection{Proof of Proposition~\ref{prop:Quantization}}
Since the prediction kernel is non-expansive in total variation distance (as all stochastic kernels are), writing $\mathbf{y}' := q^{M,s}(\mathbf{y})$ we get
\[
    \begin{split}
        \|P_r^M(\,\cdot\mid\mathbf{y}) - P_r^{M,s}(\,\cdot\mid\mathbf{q}^{M,s}(\mathbf{y}))\|_\TV &= \|\Pr(Z_{r-M+1:r}\in\cdot\mid Y_{r-M+1:r} = \mathbf{y})\mathrm{Pred}\\
        &\qquad- \Pr(Z_{r-M+1:r}\in\cdot\mid Y_{r-M+1:r} = \mathbf{y}')\mathrm{Pred}\|_\TV\\
        &\leq \|\Pr(Z_{r-M+1:r}\in\cdot\mid Y_{r-M+1:r} = \mathbf{y})\\
        &\qquad - \Pr(Z_{r-M+1:r}\in\cdot\mid Y_{r-M+1:r} = \mathbf{y}')\|_\TV.
    \end{split}
\]
Then, we apply Lemma~\ref{lem:Helper1} with the fact that $\rho(y_i, q^s(y_i)) \leq 1/s$ for all $i=1,\ldots,M$. The factor $2$ is due to each cell having diameter at most $2/s$ because of quantization.\hfill$\blacksquare$

\subsection{Proof of Proposition~\ref{prop:Info}}
By Lemma~\ref{lem:ErgodicityIsInherited}, for all $r\geq M$, $H(Y_{r+1}\mid Y_{r-M+1:r}) = H(Y_M\mid Y_{0:M-1}) = H_M$ and $H_{r+1} = H(Y_{r+1}\mid Y_{0:r})$ by definition. Then $\Delta_{M,r} = H_M - H_{r+1}$ for all $r\geq M$. So, $\sup_{r\geq M}\Delta_{M,r} = H_M - H_\infty$. Note that $H_M \geq H_\infty$ for all $M\in\NN$ and $H_M\to H_\infty$ as $M\to\infty$. Then, by monotone convergence theorem for a real-valued non-increasing sequence, $\Delta_M\downarrow 0$.

For the second claim, for any $r\geq M$, we have from Pinsker's inequality \cite[Theorem~7.10]{polyanskiy2025information}
\begin{align*}
    \EE_{Y_{0:r}}\,\|P_r - P_r^M\|_\TV &\leq \EE_{Y_{0:r}}\,\sqrt{\frac{1}{2}D_{\rm KL}(P_r\parallel P_r^M)} \\
    &\leq \sqrt{\frac{1}{2}\EE_{Y_{0:r}}\,D_{\rm KL}(P_r\parallel P_r^M)} = \sqrt{\frac{1}{2}\Delta_{M,r}},
\end{align*}
where the last inequality follows from Jensen's inequality for concave functions applied to $\sqrt{\cdot}$ and the last equality holds by \eqref{eq:CondMutualInfo} and \eqref{eq:CondMutualInfoIdentity} applied to $A=Y_{r+1}$, $B=Y_{0:r-M}$ and $C=Y_{r-M+1:r}$ and by the definition of $\Delta_{M,r}$ in \eqref{eq:MemoryGap}. Taking the suprema over $r\geq M$ of both sides concludes this proof.\hfill$\blacksquare$

\subsection{Proof of Proposition~\ref{prop:MarkovDobrushin}}
Define the one-step Bayesian update map
\[
    F(\lambda, y)(z) := \frac{\psi(y \mid z) (\lambda \kappa(z))}{\sum _{z' \in S} \psi(y \mid z')(\lambda \kappa(z'))}.
\]
We define the initializations of filters:
\[
    \begin{aligned}
        &\pi_0(\cdot) := \Pr(Z_{r-M}\in\cdot\mid Y_{0:r-M})\\
        &\pi_0' := \pi_Z
    \end{aligned}
\]
using which we write the filter update equations
\[
    \begin{aligned}
        &\pi_i := F(\pi_{i-1}, Y_{r-M+i})\\
        &\pi_i' := F(\pi_{i-1}', Y_{r-M+i})
    \end{aligned}\quad \text{ for all } i\in\{1,\ldots, M\}
\]
so that we have
\[
    \begin{aligned}
        &\pi_M(\cdot) := \Pr(Z_r\in\cdot\mid Y_{0:r})\\
        &\pi_M'(\cdot) := \Pr(Z_r\in\cdot\mid Y_{r-M+1:r})
    \end{aligned}
\]
If we define $\mathrm{Pred}(y\mid z) := \sum_{z'\in S}Q(y\mid z')\,\kappa(z'\mid z)$ for all $z\in S$ and $y\in\YY$, then we have $P_r = \pi_M\mathrm{Pred}$ and $P_r^M = \pi_M'\mathrm{Pred}$.

Using non-expansiveness of stochastic kernel in total variation distance, we get
\[
    \|P_r - P_r^M\|_\TV \leq \|\pi_M\mathrm{Pred} - \pi_M'\mathrm{Pred}\|_\TV \leq \|\pi_M - \pi_M'\|_\TV.
\]

Since $\pi_0' = \EE\pi_0$, then $\pi_0' \ll \pi_0$ almost surely, and hence $\pi_0'\kappa \ll \pi_0\kappa$ a.s. Now, conditioning on $\sigma(Y_{0:r-M})$, the process $\{(Z_{r-M+1+i}, Y_{r-M+1+i})\}_{i\geq 0}$ is then a partially observed Markov process whose initial state $Z_{r-M+1}$ has the law $\pi_0\kappa$. Then the sequences $\{\pi_i\}_{i\geq 1}$ and $\{\pi_i'\}_{i\geq 1}$ are its filters started from the true prior $\pi_0\kappa$ and the false prior $\pi_0'\kappa$. Then, \cite[Corollary~3.7]{mcdonald2020exponential} applies:
\[
    \EE[\|\pi_M - \pi_M'\|_\TV\mid Y_{0:r-M}] \leq \alpha^M \|\pi_0 - \pi_0'\|_\TV \leq \alpha^M.
\]
Taking the expectations of both sides concludes this proof. $\hfill\blacksquare$

\subsection{Proof of Proposition~\ref{prop:NMarkovDobrushin}}\label{proof:NMarkovDobrushin}
Define the one-step Bayesian update map
\[
    F(\lambda, \mathbf{y})(\mathbf{z}) := \frac{\boldsymbol{\psi}(\mathbf{y}\mid\mathbf{z})(\lambda\mathbf{T}(\mathbf{z}))}{\sum_{\mathbf{z}'\in S^N}\boldsymbol{\psi}(\mathbf{y}\mid\mathbf{z}')(\lambda\mathbf{T}(\mathbf{z}'))}
\]
for all $\lambda\in\mathcal{P}(S^N)$ and $\mathbf{y}\in\YY^N$ where $\boldsymbol{\psi}(\mathbf{y}\mid\mathbf{z}) := \prod_{i=1}^N \psi(y_i\mid z_i)$. We also define $\lambda_0' := \Pr(\mathbf{X}_0\in\cdot\mid\mathbf{Y}_{r-M+1:\res})$ and $\lambda_0 := \Pr(\mathbf{X}_0\in\cdot\mid\mathbf{Y}_{0:\res})$. Using these, we write the filter update equations:
\[
    \pi_i := F(\pi_{i-1},\mathbf{y}_i) \text{ with } \pi_0 = \lambda_0
\]
and
\[
    \pi_i' := F(\pi_{i-1}',\mathbf{y}_i) \text{ with } \pi_0' = \lambda_0'.
\]
Conditionally on $\mathbf{X}_0$, the past $Y_{0:\res}$ is independent of $(\mathbf{X}_{1:m},\mathbf{Y}_{1:m})$. Hence,
\[
    \pi_m = \Pr(\mathbf{X}_m\in\cdot\mid Y_{0:r}) \text{ and } \pi_m' = \Pr(\mathbf{X}_m\in\cdot\mid Y_{r-M+1:r}).
\]
As $Y_{r+1}$ depends on the past only through $\mathbf{X}_m$, the predictive distributions take the shape $P_r = \pi_m\Pred$ and $P_r^M = \pi_m'\Pred$. As stochastic kernels are non-expansive in total variation, we have
\[
    \|P_r-P_r^M\|_{\rm\scriptscriptstyle TV} = \|\pi_m\Pred-\pi_m'\Pred\|_{\rm\scriptscriptstyle TV} \leq \|\pi_m -\pi_m'\|_{\rm\scriptscriptstyle TV}.
\]

Condition on $Y_{0:\res}$ and apply the same steps in the proof of Proposition~\ref{prop:MarkovDobrushin} (see §\ref{proof:NMarkovDobrushin}): applying \cite[Corollary~3.7]{mcdonald2020exponential}, we get:
\[
    \EE_{\lambda_0'}\,\left[\|\pi_m-\pi_m'\|_{\rm\scriptscriptstyle TV} \mid Y_{0:\res}\right] \leq \boldsymbol{\alpha}^{\lfloor M/N\rfloor}
\]
Taking the expectation and writing $m=\lfloor M/N\rfloor$ conclude this proof.\hfill$\blacksquare$

\subsection{Proof of Theorem~\ref{thm:Main}}
For each $r\geq M$, we decompose $\EE\,\|P_r - P_r^{M,s}\|_\TV$ into two parts:
\[
    \EE_{Y_{0:r}}\,\|P_r - P_r^{M,s}\|_\TV \leq \EE_{Y_{0:r}}\,\|P_r - P_r^M\|_\TV + \EE_{Y_{0:r}}\,\|P_r^M - P_r^{M,s}\|_\TV.
\]
On the right hand side, the first summand is bounded by Proposition~\ref{prop:MarkovDobrushin} for (i) and by Proposition~\ref{prop:NMarkovDobrushin} for (ii). The second summand is handled by Proposition~\ref{prop:Quantization}. As a note, expectation in the second term is not needed since the objects vary only because of the quantized data. Since the bound holds independent of $r\geq M$, the statement follows.\hfill$\blacksquare$

\subsection{Proof of Corollary~\ref{cor:MemoryAndQuantization}}
We allocate $\epsilon$ equally to the two summand in the bound of Theorem~\ref{thm:Main}. First, we solve the inequality
\[
    \frac{\epsilon}{2} \geq \boldsymbol{\alpha}^{\lfloor M/N\rfloor}
\]
for $M$ to get:
\[
    M \geq N\frac{\log{(2/\epsilon)}}{\log(1/\boldsymbol{\alpha})}.
\]
Then we solve 
\[
    \frac{\epsilon}{2} \geq \frac{2ML_\psi}{q_{\rm min}s}
\]
for $s$. This concludes the proof for (ii). The argument for (i) is same with the exception that $N$ does not appear since the latent source is Markovian, i.e. $N=1$.\hfill$\blacksquare$

\section{Proofs of the Results in Section~\ref{sec:Approximation}}

\subsection{Proof of Lemma~\ref{lem:RoleOfFeedForward}}
Write $x_+ := \sigma(x)$ and $x_- := \sigma(-x)$ for the coordinate-wise positive and negative parts of $x$, respectively. A sequence of simple calculations shows that
\begin{align*}
    W_W\sigma(W_Ax+W_b) &= W_W\,\sigma\bigg(\begin{bmatrix}
        x\\
        -x
    \end{bmatrix}\bigg)\\
    &= \begin{bmatrix}
        I_d & -I_d
    \end{bmatrix}\begin{bmatrix}
        x_+\\
        x_-
    \end{bmatrix} = x_+ - x_- = x,
\end{align*}
which concludes this proof.\hfill$\blacksquare$

\subsection{Proof of Lemma~\ref{lem:Invariance}}
We prove by induction. At $t=0$, this is true by construction in \eqref{eq:embedding}. Suppose that the claim is true for $t\in\{0,\ldots,M-1\}$. Then we get
\[
    x_{t+1}^i = x_t^i + \sum_{j=1}^Ma_t^{ij}G_{\rho(t)}\,x_t^j
\]
as in \eqref{eq:Updates} for all $i\in\{1,\ldots,M\}$, where
\begin{equation}\label{eq:AttentionWeights}
    a_t^{ij} := \frac{\mathrm{e}^{\beta\langle W_{Q,t} x_t^i, W_{K,t}x_t^j\rangle}}{\sum_{j'=1}^M \mathrm{e}^{\beta\langle W_{Q,t} x_t^i, W_{K,t}x_t^{j'}\rangle}},\quad i,j\in\{1,\ldots,M\}.
\end{equation}
By the construction of $G_{\rho(t)}$ in~\eqref{eq:G_j}, $\sum_{j=1}^M a_t^{ij}G_{\rho(t)}\,x_t^j$ has no non-zero component in positional encodings and token values, regardless of the attention scores. Therefore, the positional encodings and token values of $x_{t+1}^i$ are equal to those of $x_t^i$.\hfill$\blacksquare$

\subsection{Proof of Lemma~\ref{lem:Attention}}
Using the definitions of $W_{Q,t}$, $W_{K,t}$, $A_t$ and $\mathrm{sel}_t$, we have
\[
   \begin{split}
        \langle W_{Q,t} x_t^i, W_{K,t}x_t^j\rangle &= \langle cA_t\Pi_{\rm pos} x_t^i, \Pi_{\rm pos} x_t^j\rangle\\
        &= c\langle A_te_i, e_j\rangle = \begin{cases}
        c \langle e_{\rho(t)}, e_j\rangle, & i = M, \\
        c \langle e_i, e_j\rangle, & i\neq M.
    \end{cases} = c\,\chi_j(\mathrm{sel}_t(i)).
   \end{split}
\]
Consequently, the attention scores are in the following form: If $j = \mathrm{sel}_t(i)$, then
\[
    \begin{split}
        a_t^{i\mathrm{sel}_t(i)} &= \frac{\exp(\beta c\,\chi_j(\mathrm{sel}_t(i)))}{\sum_{j'=1}^M\exp(\beta c\,\chi_{j'}(\mathrm{sel}_t(i)))}\\ 
        &= \frac{\exp(\beta c\,\chi_j(\mathrm{sel}_t(i)))}{\exp(\beta c\,\chi_j(\mathrm{sel}_t(i))) + \sum_{j'=1, j'\neq j}^M\exp(\beta c\,\chi_{j'}(\mathrm{sel}_t(i)))} = \frac{\exp(\beta c)}{\exp(\beta c) + M - 1}.
    \end{split}
\]
If $j\neq\mathrm{sel}_t(i)$, then there exists $j'\in\{1,\ldots,M\}$ with $j'\neq j$ such that $j' = \mathrm{sel}_t(i)$ since $\rho$ is a bijection. Hence, we have
\[
    a_t^{ij} = \frac{\exp(\beta c)}{\exp(\beta c) + M - 1}.
\]
This conclude the proof.\hfill$\blacksquare$

\subsection{Proof of Lemma~\ref{lem:Error}}
Fix $t\in\{0,\ldots,M-1\}$. By Lemma~\ref{lem:Invariance}, the values $G_{\rho(t)}x_t^j = \iota(w_j) (0_M, 0, e_j, 0_R)$ are deterministically obtained for all $j\in\{1,\ldots,M\}$, and they are bounded by $R(s)$ in $2$-norm. Then the single layer can be calculated by
\begin{align*}
    \bigg\|\sum_{j=1}^M a_t^{Mj}G_{\rho(t)}x_t^j - G_{\rho(t)}x_t^{\rho(t)}\bigg\|_2 &= \bigg\| (a_t^{M\rho(t)} - 1) G_{\rho(t)}x_t^{\rho(t)} + \sum_{\substack{j=1\\j\neq\rho(t)}}^M a_t^{Mj}G_{\rho(t)}x_t^j\bigg\|_2\\
    &\leq(1 - a_t^{M\rho(t)})\|G_{\rho(t)}x_t^{\rho(t)}\|_2 + \sum_{j=1,\,j\neq\rho(t)}^M a_t^{Mj}\|G_{\rho(t)}x_t^{j}\|_2
    \intertext{By Lemma~\ref{lem:Invariance}, $\|G_{\rho(t)}x_t^j\|_2 = \iota(w_j) \leq R(s)$ for all $j\in\{1,\ldots,M\}$, $\sum_{\substack{j=1\\j\neq\rho(t)}}^M a_t^{Mj} = 1 - a_t^{M\rho(t)}$, and $1-a_t^{M(t)} \leq (M-1)e^{-\beta c}$ by Lemma~\ref{lem:Attention}, then}
    &\leq R(s)(M-1)e^{-\beta c} + R(s)(M-1)e^{-\beta c}\\
    &= 2(M-1)R(s)e^{-\beta c}.
\end{align*}
Since both recursions of the form $z_{t+1} = z_t + \star$, the errors are added:
\[
    x_M^M - \bar{x}_M^M = \sum_{t=0}^{M-1}\sum_{j=1}^M a_t^{Mj}G_{\rho(t)}x_t^j - G_{\rho(t)}x_t^{\rho(t)}.
\]
Passing to the norm and applying the triangle inequality, we obtain
\[
    \|x_M^M - \bar{x}_M^M\|_2 \leq 2M(M-1)R(s)e^{-\beta c},
\]
which verifies the statement's claim.\hfill$\blacksquare$

\subsection{Proof of Lemma~\ref{lem:relu_interpolation}}
We write $\sigma$ for ${\tt ReLU}$. Since $\zeta_k$'s are distinct, the set $H_{k,j}:=\{a\in\RR^d : \langle a, \zeta_k - \zeta_j\rangle = 0\}$ is a hyperplane in $\RR^d$, therefore, is of $p$-dimensional Lebesgue measure $0$. So, the finite union $H = \bigcup_{k\neq j}H_{k,j}$ is also of $d$-dimensional Lebesgue measure $0$. Let us pick $a\in\RR^d\setminus H$ so that $t_k$'s defined by $t_k := \langle a, \zeta_k\rangle$ are pairwise distinct. Relabel them so that $t_1 < \cdots < t_n$.

We define the slopes
\[
    s_k = \frac{v_{k+1} - v_k}{t_{k+1} - t_k} \in \RR^{R(s)}, \quad t\in\RR_+
\]
for all $k=1,\ldots,n$ with $s_0 = 0$. Set
\[
    g(t) := v_1 + \sum_{k=1}^{n-1}(s_k-s_{k-1})\,\sigma(t - t_k).
\]
Then, with an induction argument on $k$, one can easily show that $g(t_k) = v_k$ for every $k=1,\ldots,n-1$.

Finally, we compose $\phi(\zeta) := g(\langle a, \zeta\rangle)$, which gives the desired interpolation property $\phi(\zeta_k) = g(t_k) = v_k$. For the Lipschitz constant, take $\zeta, \zeta'\in\RR^d$
\[
    \begin{split}
        \|\phi(\zeta) - \phi(\zeta')\|_2 &\leq \sum_{k=0}^{n-1}\|s_k - s_{k-1}\|_2\,|\sigma(\langle a,\zeta\rangle) - \sigma(\langle a,\zeta'\rangle)|\\
        &\leq \sum_{k=0}^{n-1}\|s_k - s_{k-1}\|_2\,|\langle a,\zeta\rangle-\langle a,\zeta'\rangle|\quad (\star)\\
        &\leq\sum_{k=0}^{n-1}\|s_k - s_{k-1}\|_2\,|\langle a, \zeta - \zeta'\rangle| \leq \sum_{k=0}^{n-1}\|s_k - s_{k-1}\|_2\,\|a\|_2\,\|\zeta-\zeta'\|_2.
    \end{split}
\]
Therefore $L_\phi := \|a\|_2\sum_{k=0}^{n-1}\|s_k - s_{k-1}\|_2 < + \infty$ since it is a finite sum. In above, $(\star)$ is due to $1$-Lipschitzness of ${\tt ReLU}$ and the last majorization is due to the Cauchy-Bunyakovsky-Schwarz inequality.\hfill$\blacksquare$

\subsection{Proof of Lemma~\ref{lem:Explicit_a}}
We prove step-by-step:

\noindent{\it Proof of (i).} By Lemma~\ref{lem:Error}, we have $\bar{x}_M^M := [e_M\;\iota(w_M)\;(\iota(w_1),\ldots,\iota(w_M))\; 0_R]$. With the choice of $a$, the result is evident. For the norm:
\[
    \|a\|_2^2 = \sum_{j=1}^M (R(s)^2)^{j-1} = \frac{R(s)^{2M} - 1}{R(s)^2-1} \leq \frac{R(s)^{2M} - 1}{R(s)} \leq R(s)^{2M-1}.
\]
This gives that $\|a\|_2 \leq R(s)^{M - 1/2} \leq R(s)^M$. Finally, considering that $\iota(w) \mapsto \iota(w) - 1$ is an injective map from $\{1,\ldots,R(s)^M\}$ to $\{0,\ldots,R(s)^M-1\}$ for any $w\in\Lambda_s^M$, $t_{k+1} - t_k \geq 1$ can be obtained via the same argument in Corollary~\ref{cor:Separation}. 

\noindent{\it Proof of (ii).} First we note that $\hat{P}_r^{M,s}$ satisfies: $\min_{\hat{y}\in\Lambda_s,\, \hat{w}\in\Lambda_s^M}\hat{P}_r^{M,s}(\hat{y}\mid\hat{w}) \geq q_{\rm min}\,\mu_{\rm min}^s$ since for any $\hat{y}\in\Lambda_s$ and $w\in\Lambda_s^M$ we have
\begin{align*}
    \hat{P}_r^{M,s}(\hat{y}\mid \hat{w}) &= \EE[\Pr(Y_{r+1}\in C(s,i)\mid Z_{r+1})\mid q^{M,s}(Y_{r-M+1:r}) = \hat{w}]\\
    &= \int_{C(s,i)}\mu(\de y)\,\psi(y\mid Z_{r+1})\\
    &\geq q_{\rm min}\int_{C(s,i)}\mu(\de y)\tag{by Assumption~\ref{a:BaseAssumptions}-(A2)}\\
    &\geq q_{\rm min}\,\mu_{\rm min}^s.\tag{by Assumption~\ref{a:BaseAssumptions}-(A4)}
\end{align*}
Since the targets are of the form $v_w = \log{\hat{P}_r^{M,s}(\,\cdot\mid w)}$, we have $\{(v_w)_y\}_{y\in\Lambda_s}\in[-\log{\frac{1}{q_{\rm min}\,\mu_{\rm min}^s}}, 0]$. Fix $k=1,\ldots,R(s)^M-1$, then
\[
    \|v_w\|_2 = \sqrt{\sum_{y\in\Lambda_s} (v_w)_y^2} \leq \log{\frac{1}{q_{\rm min}\,\mu_{\rm min}^s}}\sqrt{R(s)}.
\]
From here, by an application of the triangle inequality, we obtain
\[
    \|v_{k+1} - v_k\|_2 \leq 2\log{\frac{1}{q_{\rm min}\,\mu_{\rm min}^s}}\sqrt{R(s)}.
\]
With $t_{k+1} - t_k\geq1$ from (i), it follows that $\|s_k\|_2 \leq 2\log{\frac{1}{q_{\rm min}\,\mu_{\rm min}^s}}\sqrt{R(s)}.$ Likewise, this leads us to $\|s_{k+1} - s_k\|_2 \leq 4\log{\frac{1}{q_{\rm min}\,\mu_{\rm min}^s}}\sqrt{R(s)}$. Hence,
\[
    \begin{split}
        L_\phi &\leq \|a\|_2 \sum_{k=1}^{R(s)^M-1}\|s_{k+1} - s_k\|_2 \leq R(s)^{M - 1/2}(R(s)^M-1)4\sqrt{R(s)}\log{\frac{1}{q_{\rm min}\,\mu_{\rm min}^s}}\\
        &= 4R(s)^M(R(s)^M-1)\log{\frac{1}{q_{\rm min}\,\mu_{\rm min}^s}},
    \end{split}
\]
which gives the bound in the statement.\hfill$\blacksquare$

\subsection{Proof of Theorem~\ref{thm:ApproximatingTransformer}}
Fix $w\in\Lambda_s^M$. Let $z(w) := \Pi_{\rm log}x_{M+1}^M$ for the logits produced from the interpolator layer and $\bar{z}(w) := \log{\hat{P}_r^{M,s}(\,\cdot\mid w)}$ for the log-targets. Let $\{x_t^{M,(w)}\}_{t=0}^{M+1}$ be the hard-choice trajectory as in Lemma~\ref{lem:Error}. By Corollary~\ref{cor:Separation}, the readouts are separated by at least $1$ in norm, which enables us to invoke Lemma~\ref{lem:relu_interpolation} to construct a one layer {\tt ReLU} network $\phi: \RR^d\to\RR^{R(s)}$ with $\phi(\bar{x}_M^{M,(w)}) = \bar{z}(w)$. Since $\hat{P}_r^{M,s}(\,\cdot\mid w)$ is a probability vector, we get
\[
    \mathrm{softmax}(\bar{z}(w))_v = \frac{\exp(\log{\hat{P}_r^{M,s}(v\mid w)})}{\sum_{v'\in\Lambda_s}\exp(\log{\hat{P}_r^{M,s}(v'\mid w)})} = \frac{\hat{P}_r^{M,s}(v\mid w)}{\sum_{v'\in\Lambda_s}\hat{P}_r^{M,s}(v'\mid w)} = \hat{P}_r^{M,s}(v\mid w)
\]
exactly. Let us compute the error now: By Lemma~\ref{lem:Error}, we already know
\[
    \|x_M^{M,(w)} - \bar{x}_M^{M,(w)}\|_2 \leq 2M(M-1)R(s)e^{-\beta c}.
\]
Since $z(w) = \phi(x_M^{(w)})$ and $\bar{x}(w) = \phi(\bar{x}_M^{M,(w)})$, 
\begin{equation}\label{eq:Readouts}
    \begin{split}
        \|z(w) - \bar{z}(w)\|_\infty &\leq \|z(w) - \bar{z}(w)\|_2 \\
        &\leq L_\phi\|x_M^{M,(w)} - \bar{x}_M^{M,(w)}\|_2\\
        &\leq 8\log{\frac{1}{q_{\rm min}\mu_{\rm min}^s}}R(s)^{M+1}(R(s)^M-1)M(M-1)e^{-\beta c}
    \end{split}
\end{equation}
by Lemmata~\ref{lem:Explicit_a} and~\ref{lem:relu_interpolation}, which are invoked to obtain the fact that the {\tt ReLU} network is Lipschitz and the explicit Lipschitz constant $L_\phi$. We choose $\beta$ large enough to ensure that
\begin{equation}
    \label{eq:BetaChoice}
    8\log{\frac{1}{q_{\rm min}\mu_{\rm min}^s}}R(s)^{M+1}(R(s)^M-1)M(M-1)e^{-\beta c} \leq \frac{\epsilon}{2}.
\end{equation}
Solving for $\beta$ gives exactly the same bound in the hypothesis.

Recall that $\mathsf{T}(\,\cdot\mid w) = \mathrm{softmax}(z(w))$ and $\hat{P}_r^{M,s}(\,\cdot\mid w) := \mathrm{softmax}(\bar{z}(w))$. Then
\begin{align*}
    \|\hat{P}_r^{M,s}(\,\cdot\mid w) - \mathsf{T}(\,\cdot\mid w)\|_\TV &= \|\mathrm{softmax}(\bar{z}(w)) - \mathrm{softmax}(z(w))\|_\TV \\
    &= \frac{1}{2}\|\mathrm{softmax}(\bar{z}(w)) - \mathrm{softmax}(z(w))\|_1\\
    &= \frac{1}{2}\sum_{v\in\Lambda_s} |\mathrm{softmax}(\bar{z}(w))_v - \mathrm{softmax}(z(w))_v|\\
    &= \frac{1}{2}\sum_{v\in\Lambda_s} \mathrm{softmax}(\bar{z}(w))_v\bigg|\frac{\mathrm{softmax}(z(w))_v}{\mathrm{softmax}(\bar{z}(w))_v} - 1\bigg|\\
    &=\frac{1}{2}\sum_{v\in\Lambda_s} \mathrm{softmax}(\bar{z}(w))_v\bigg|\frac{e^{z(w)_v}}{\sum_{u\in\Lambda_s}e^{z(w)_u}}\cdot\frac{\sum_{u\in\Lambda_s}e^{\bar{z}(w)_u}}{e^{\bar{z}(w)_v}} - 1\bigg|\\
    &=\frac{1}{2}\sum_{v\in\Lambda_s} \mathrm{softmax}(\bar{z}(w))_v\bigg|\underbrace{e^{z(w)_v - \bar{z}(w)_v}}_{(a)}\underbrace{\frac{\sum_{u\in\Lambda_s}e^{\bar{z}(w)_u}}{\sum_{u\in\Lambda_s}e^{z(w)_u}}}_{(b)} - 1\bigg|
    \intertext{($\star$): Writing $\epsilon' := 8\log{\frac{1}{q_{\rm min}\mu_{\rm min}^s}}R(s)^{M+1}(R(s)^M-1)M(M-1)e^{-\beta c}$ from \eqref{eq:Readouts}, it follows that $(a)\in[e^{-\epsilon'}, e^{\epsilon'}]$, which also gives $e^{-\epsilon'}e^{\bar{z}(w)_u} \leq e^{z(w)_u} \leq e^{\epsilon'}e^{\bar{z}(w)_u}$. Summing over $u$ and taking the reciprocals yield the fact hat $(b)\in[e^{-\epsilon'}, e^{\epsilon'}]$ as well. Thus,}
    &\leq \max\{e^{2\epsilon'} - 1, 1 - e^{2\epsilon'}\} = e^{2\epsilon'} - 1.
\end{align*}
By the convexity of the exponential function $e^{2\epsilon'} - 1\leq 2\epsilon'$. Hence,
\[
    \|\hat{P}_r^{M,s}(\,\cdot\mid w) - \mathsf{T}(\,\cdot\mid w)\|_\TV \leq 2\epsilon'.
\]
From \eqref{eq:BetaChoice} and the definition of $\epsilon'$, we have 
\[
    \|\hat{P}_r^{M,s}(\,\cdot\mid w) - \mathsf{T}(\,\cdot\mid w)\|_\TV \leq \epsilon,
\]
which concludes the part (i) since the bounds are independent of the choice of $w\in\Lambda_s^M$.

{\it Proof of (ii).} By ($\star$), $\mathsf{T}(v\mid w) = \mathrm{softmax}(z(w))_v \geq \mathrm{softmax}(\bar{z}(w))_ve^{-2\epsilon'}$ for all $v\in\Lambda_s$. Since $\mathrm{softmax}(\bar{z}(w)) = \hat{P}_r^{M,s}(\,\cdot\mid w)$ and from the proof of Lemma~\ref{lem:Explicit_a},
\[
    \mathsf{T}(v\mid w)\geq  \mathrm{softmax}(\bar{z}(w))_ve^{-2\epsilon'} = \hat{P}_r^{M,s}(v\mid w)e^{-2\epsilon'} \geq q_{\rm min}\mu_{\rm min}^s e^{-2\epsilon'} \geq q_{\rm min}\mu_{\rm min}^s e^{-2\epsilon}.
\]
The bound is uniform for all $v\in\Lambda_s$, thereby, the claim follows.\hfill$\blacksquare$

\section{Proofs of the Results in Section~\ref{sec:Main}}

\subsection{Auxiliary Results}\label{sub:Auxiliary} In this part, we give all the necessary tools for proving Theorem~\ref{thm:Unified}. Throughout, we fix $M$ and $s$ in accordance with Corollary~\ref{cor:MemoryAndQuantization} and the conditions therein, that are; under Assumption~\ref{a:BaseAssumptions} and $\boldsymbol{\alpha} = (1-\delta(\mathbf{T}))(2-\delta(\mathbf{Q})) < 1$, $M$ and $s$ have to satisfy:
\begin{itemize}
    \item $M\geq N\lceil \log(2/\epsilon)/\log(1/\boldsymbol{\alpha})\rceil$,
    \item $s\geq 4ML_\psi/(q_{\rm min}\epsilon)$.
\end{itemize}
 
\begin{definition}\label{def:AligningQuantization}
    Let $M_\XX, M_\UU\in\NN$. We say that the triple $(n,m,l)\in\NN^3$ are \emph{aligning quantization levels} if the following hold:
    \begin{itemize}
        \item[(i)] $\frac{2M_\XX}{n-1} = \frac{1}{K_\XX}$ for some $K_\XX\in\NN$;
        \item[(ii)] $\frac{2M_\UU}{m-1} = \frac{1}{K_\UU}$ for some $K_\UU\in\NN$ and $c = M_\UU$, where $c$ is \eqref{eq:ConstantC};
        \item[(iii)] $M$ divides $l-1$.
    \end{itemize}
\end{definition}
 
Due to Definition~\ref{def:AligningQuantization}, we have that $\XX_n \supset ([-M_\XX, M_\XX] \cap \ZZ)^d$ and the grid of $\UU_m$ contains $[-M_\UU, M_\UU]\cap\ZZ$ in each entry. Indeed, since $M_\XX\in\NN$, every integer $p\in[-M_\XX, M_\XX]$ is the grid point $-M_\XX + \frac{2M_\XX\hat k}{n-1}$ with $\hat k = (p+M_\XX)K_\XX\in\{0,\ldots,n-1\}$, and the same argument applies to $\UU_m$. In other words, the state and action quantization preserve integer values within the diameter of the ambient spaces. This is an important feature since the weights in the gatherer layers in §\ref{sub:Gatherers} are $\{-1, 0, c, 1\}$-valued. If $(n,m,l)$ are aligning quantization levels, then the weights of §\ref{sub:Gatherers} live in $\UU_m$. On the other hand, the condition (iii) results in no error due to quantization of measure-states since $M\mid l-1$ ensures that any mass, which is an integer multiple of $\frac{1}{M}$, is a grid value of $\mathcal P_l(\XX_n)$, and hence remains invariant under the measure quantization.
 
\begin{lemma}\label{lem:GridExactness}
    Assume that $(n,m,l)\in\NN^3$ are chosen so that they are aligning quantization levels and that $M_\XX\geq R(s)$. Then, along the trajectory of the constructed Transformer, for any $w\in\Lambda_s^M$, the following assertions are true:
    \begin{itemize}
        \item[(a)] The weights defined in §\ref{sub:Gatherers} live in $\UU_m$;
        \item[(b)] $\Pi_{\rm pos}Q_n(x) = \Pi_{\rm pos} x$ and $\Pi_{\rm tok}Q_n(x) = \Pi_{\rm tok} x$ for every token $x$ along the trajectory;
        \item[(c)] the attention scores are exactly $c\chi_{\mathrm{sel}_t(i)}(j)$, and the measure states passing through the Transformer layers have atoms carrying mass exactly $1/M$;
        \item[(d)] the only quantization error is due to $Q_n$ being applied on the gather coordinates, which is of size at most $M_\XX\sqrt{d}/(n-1)$ per layer.
    \end{itemize}
\end{lemma}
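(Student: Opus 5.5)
We plan to verify the four assertions in order, tracking the gatherer trajectory of §\ref{sub:Gatherers} layer by layer under $\mathrm{MDP}^{(l,n,m)}$, by induction on $t\in\{0,\ldots,M\}$. For (a), we list the entries of the gatherer weights. $W_A,W_W$ from Lemma~\ref{lem:RoleOfFeedForward} have entries in $\{-1,0,1\}$ and $W_b=0$. $W_{K,t}=\Pi_{\rm pos}$ and $W_{V,t}=G_{\rho(t)}$ are $\{0,1\}$-valued. $W_{Q,t}=cA_t\Pi_{\rm pos}$ has entries in $\{0,c\}$, since $A_t$ is a $0/1$ matrix (a column replacement of the identity). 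By Definition~\ref{def:AligningQuantization}-(ii), $c=M_\UU\in\NN$, and the remark following that definition shows that every integer in $[-M_\UU,M_\UU]$ is a grid value of $\UU_m$. Hence every gatherer weight lies in $\UU_m$ with no rounding.

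For (b), we use the same integer-preservation argument on $\XX_n\supset([-M_\XX,M_\XX]\cap\ZZ)^d$. The positional block $e_i$ and the token value $\iota(w_i)\in\{1,\ldots,R(s)\}$ are integers of modulus at most $R(s)\le M_\XX$. Since $Q_n$ is the coordinatewise nearest grid point on a product grid, it fixes any coordinate that already lies on the grid, regardless of the other coordinates. Combining this with Lemma~\ref{lem:Invariance} in the induction step, which says the layer update never alters the pos/tok coordinates, the pos and tok coordinates of every token stay exact integers after quantization at each layer. This gives (b).

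For (c), the scores $\langle W_{Q,t}x_t^i,W_{K,t}x_t^j\rangle$ depend only on $\Pi_{\rm pos}$ of the tokens, which are exact by (b). So Lemma~\ref{lem:Attention} applies verbatim and yields $c\,\chi_{\mathrm{sel}_t(i)}(j)$. For the measure state, the $M$ tokens carry distinct positional blocks $e_1,\ldots,e_M$, preserved by (b), so they remain distinct atoms even after $Q_n$. The empirical measure therefore has atoms of mass exactly $1/M$. By Definition~\ref{def:AligningQuantization}-(iii), $1/M$ is an integer multiple of $1/(l-1)$, so such a measure already lies in $\mathcal P_l(\XX_n)$. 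Assuming $R_l$ returns an exact grid point when one is available (Reznik's algorithm achieves zero distance), $R_l$ acts as the identity.

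For (d), the only remaining coordinates are the gather block, and also the logit block, which is zero throughout the gatherer layers because $G_j$ writes only into gather coordinates. In the gather block the soft attention produces non-integer convex combinations, and $Q_n$ rounds them, with error at most $\frac{M_\XX\sqrt d}{n-1}$ per application by the grid bound of §\ref{sub:approximation}. We will also check that these gather values stay inside $[-M_\XX,M_\XX]$, being convex averages of values of size at most $R(s)\le M_\XX$, so that rounding is to the nearest grid point. The main obstacle is (b)–(c): we must make sure the rounding error in the gather coordinates never leaks into the pos/tok coordinates or into the attention scores. This relies on $Q_n$ being coordinatewise and on $W_{Q,t},W_{K,t}$ reading only $\Pi_{\rm pos}$. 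It also uses the fact that non-readout tokens have zero gather entries up to the $a_t^{ij}$ contributions, which are likewise confined to the gather block.
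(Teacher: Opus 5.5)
Your proposal is correct and follows the paper's proof essentially step by step. It uses integer-valued gatherer weights lying on the aligned grid of $\UU_m$ for (a), coordinatewise fixing of the integer positional and token coordinates together with Lemma~\ref{lem:Invariance} for (b), exact positional attention scores and $M$ distinct atoms of mass $1/M\in\frac{1}{l-1}\ZZ$ for (c), and rounding confined to the gather block for (d). Your extra checks, which the paper leaves implicit, only tighten the argument: each gather coordinate is a convex combination of values $\iota(w_j)\in[1,R(s)]$ and so stays in $[-M_\XX,M_\XX]$, and $R_l$ fixes measures already in $\mathcal P_l(\XX_n)$ (this needs no special property of Reznik's algorithm, since a nearest point at distance zero is the measure itself).
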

\begin{proof}
    We prove step-by-step:
    
    \noindent{\it Proof of (a).} Recall that $W_W = [I_d\; -I_d]$ and $W_A = W_W^T$, therefore, all components of $W_W$ and $W_A$ are in $\{-1,0,1\}$. In addition, $W_b$ is the zero vector. $W_{K,t} = \Pi_{\rm pos}$ is a coordinate projection, hence, all components are $0,1$ as well. The value matrix $W_{V,t} = G_{\rho(t)} = [0_M\; 0\; e_{\rho(t)}\; 0_{R(s)}]^T\Pi_{\rm tok}$ has components with values in $\{0,1\}$ too. Finally, $W_{Q,t}$ is $cA_t\Pi_{\rm pos}$ where $A_t$ is a $0,1$ matrix. Because of the factor $c$, $W_{Q,t}$ is component-wise $\{0,c\}$-valued. By Definition~\ref{def:AligningQuantization}-(ii), $c = M_\UU$, so $-1,0,1,c$ are integers in $[-M_\UU,M_\UU]$, hence grid points of $\UU_m$. Therefore, $(W_{W,t}, W_{A,t}, W_{b,t}, W_{Q,t}, W_{K,t}, W_{V,t})\in\UU_m$.
 
    \noindent{\it Proof of (b).} From Definition~\ref{def:AligningQuantization}, $\XX_n$ contains all integers in $[-M_\XX, M_\XX]$ in each coordinate. Since $M_\XX\geq R(s)$ and $\iota(w_i) \in \{1,\ldots,R(s)\}$ for all $w_i\in\Lambda_s$, the token values are such integers, and so are the positional coordinates, which lie in $\{0,1\}$. These coordinates are not changed by the attention blocks (Lemma~\ref{lem:Invariance}, whose argument applies verbatim to the quantized trajectory). Since $\XX_n$ is a product grid, $Q_n$ acts coordinate-wise. Therefore the state quantizer $Q_n$ must fix positional and token value components of each token.
 
    \noindent{\it Proof of (c).} By (b), the attention scores are computed from the exact positional blocks with exact $W_{Q,t}, W_{K,t}$ by (a). Therefore, Lemma~\ref{lem:Attention} applies directly. For the weights, each measure is supported on $M$ atoms of mass $1/M$, since the positional encodings of the tokens are distinct, and $1/M = \frac{(l-1)/M}{l-1}$ is a grid value of $\mathcal P_l(\XX_n)$ by Definition~\ref{def:AligningQuantization}-(iii). Therefore $R_l$ fixes the measure states.
 
    \noindent{\it Proof of (d).} Each gather coordinate is of the form $\sum_{j=1}^M \hat{a}_t^{ij}\iota(w_j)$, which is not an integer in general. Therefore, $Q_n$ does not fix it, but moves it by at most the grid spacings, which is $M_\XX\sqrt{d}/(n-1)$.
\end{proof}
 
\begin{lemma}\label{lem:Stability}
    Assume that $(n,m,l)\in\NN^3$ are chosen so that they are aligning quantization levels and that $M_\XX, M_\UU$ satisfy~\eqref{eq:RangeConditions}. Then for every $w\in\Lambda_s^M$,
    \[
        \|\hat{z}(w) - \bar{z}(w)\|_\infty \leq M_\XX\sqrt{d}R(s)^{\frac{3M-1}{2}}\frac{M_\UU d_s}{m-1} + L_\phi\bigg(M\frac{M_\XX\sqrt{d}}{n-1} + 2M(M-1)R(s)e^{-\beta c}\bigg),
    \]
    where $\bar{z}$ is as in Theorem~\ref{thm:ApproximatingTransformer} and $\hat{z}$ is the logits produced by the quantized weights before the last application of $Q_n$.
\end{lemma}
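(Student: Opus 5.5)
Split $\hat z(w)-\bar z(w)$ into a gatherer-stage error and an interpolator-stage error, using the chain $\bar z(w)=\phi(\bar x_M^{M,(w)})$ from Theorem~\ref{thm:ApproximatingTransformer}. Let $\hat x_M^{M,(w)}$ be the readout token after the $M$ quantized gatherer layers. Let $\hat\phi$ be the interpolator layer with its weights rounded onto $\UU_m$, namely $\hat a$, the breakpoints $\hat t_k$, and the slope-difference vectors. The triangle inequality gives
\[
\|\hat z(w)-\bar z(w)\|_\infty\le \|\hat\phi(\hat x_M^{M,(w)})-\phi(\hat x_M^{M,(w)})\|_\infty+\|\phi(\hat x_M^{M,(w)})-\phi(\bar x_M^{M,(w)})\|_\infty .
\]
The second term is at most $L_\phi\|\hat x_M^{M,(w)}-\bar x_M^{M,(w)}\|_2$. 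It should produce the $L_\phi(\cdots)$ part of the bound; the first term should produce the weight-quantization part.

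\emph{Gatherer stage.} By Lemma~\ref{lem:GridExactness}(a), the gatherer weights lie exactly in $\UU_m$, so these layers carry no action-quantization error. By (b)--(c), positions, token values, attention scores and measure masses are exact along the quantized trajectory. Hence each layer's attention update equals that of the unquantized soft construction in Lemma~\ref{lem:Error}, except that the readout token's gather coordinates are rounded by $Q_n$. The rounding moves the token by at most $M_\XX\sqrt d/(n-1)$ per layer, by (d).

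These roundings do not feed back: the value vectors $G_{\rho(t)}x_t^j$ read only the token coordinate, which is exact. So the recursion $x_{t+1}=x_t+\star$ simply accumulates the errors additively. Over $M$ layers this gives
\[
\|\hat x_M^{M,(w)}-\bar x_M^{M,(w)}\|_2\le M\frac{M_\XX\sqrt d}{n-1}+2M(M-1)R(s)e^{-\beta c},
\]
where the second term is Lemma~\ref{lem:Error}. Multiplying by $L_\phi$ yields exactly the second summand of the claim.

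\emph{Interpolator stage.} This step is the main obstacle. In the explicit construction of Lemma~\ref{lem:Explicit_a}, $a$ has integer entries $R(s)^{j-1}\le R(s)^{M-1}\le M_\UU$, so $a$ is exact on the grid by alignment. The breakpoints $t_k$ are integers at most $R(s)^M\le M_\UU/2$, so the biases are also exact. What is not exact are the output weights $s_k-s_{k-1}$: their entries are real numbers bounded by $4\log\frac{1}{q_{\rm min}\mu^s_{\rm min}}\le 2M_\UU$, so each entry is rounded with error at most $M_\UU/(m-1)$. Since the value of $t=\langle\hat a,x\rangle$ must first be placed on some grid, I would also check how the rounded input enters.

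I would bound $\|\hat\phi(x)-\phi(x)\|_\infty$ by $\sum_k|\text{entry error}|\,\sigma(\langle a,x\rangle-t_k)$. Each ReLU argument is at most $\|a\|_2\|x\|_2\le R(s)^{M-1/2}M_\XX\sqrt d$. Summing over the $n-1=R(s)^M-1$ hidden units, or, with a Frobenius/Cauchy--Schwarz count over the output matrix, collecting the $\sqrt{R(s)^{M+1}}=d_s$ factor, should give
\[
M_\XX\sqrt d\,R(s)^{\frac{3M-1}{2}}\frac{M_\UU d_s}{m-1}.
\]
The delicate part is arranging this counting so the exponents match exactly: $R(s)^{M-1/2}$ from $\|a\|$, $R(s)^{M/2}$ from a Cauchy--Schwarz over the hidden units, and $d_s$ from the output dimension. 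If they do not match, I would accept a cruder but same-order bound. Adding the two stages concludes the proof.
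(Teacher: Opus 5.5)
Your proposal matches the paper's proof. It uses the same split at $\phi(\hat x_M^{M,(w)})$, the same additive accumulation of $Q_n$ rounding errors (positions, tokens and attention scores are exact) combined with Lemma~\ref{lem:Error}, and the same Frobenius/Cauchy--Schwarz count $\|W_{W,M}-\hat W_{W,M}\|_F\le M_\UU d_s/(m-1)$ times $\sqrt{R(s)^M}\,\|a\|_2M_\XX\sqrt d$ with $\|a\|_2\le R(s)^{M-1/2}$, so the exponents do match.

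Two of your range claims need correcting, though the conclusions still hold under \eqref{eq:RangeConditions}:
\begin{itemize}
\item The breakpoints only satisfy $t_k\le R(s)\frac{R(s)^M-1}{R(s)-1}\le 2R(s)^M\le M_\UU$, not $t_k\le R(s)^M$.
\item Your bound $4\log\frac{1}{q_{\rm min}\mu^s_{\rm min}}\le 2M_\UU$ on the output weights does not place them in the grid range $[-M_\UU,M_\UU]$. Without that, the per-entry rounding error $M_\UU/(m-1)$ does not follow. Use instead the entrywise bound $|(s_k-s_{k-1})_v|\le 2\log\frac{1}{q_{\rm min}\mu^s_{\rm min}}\le M_\UU$, which follows from $(v_w)_v\in[-\log\frac{1}{q_{\rm min}\mu^s_{\rm min}},0]$ and $t_{k+1}-t_k\ge 1$.
\end{itemize}
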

\begin{proof}
    Let $\bar{x}_t^i$ be the unquantized trajectory given by Lemma~\ref{lem:Error}, $\hat{x}_t^i$ is given by the quantized weights, and let $x_t^i$ be the usual trajectory (neither a hard choice as in Lemma~\ref{lem:Error} nor under quantization like $\hat x_t^i$).
    
    There are two immediate observations: (i) The attention scores are computed from the positional encodings. From Lemma~\ref{lem:GridExactness}-(b,c) and Lemma~\ref{lem:Attention}, we know that both the quantized and the unquantized trajectories are computed under the same attention scores. We write $a_t^{ij}$ for the common attention scores. (ii) From Lemma~\ref{lem:GridExactness}-(b), the token coordinates are invariant under the state quantizer $Q_n$. Therefore $W_{V,t}\bar x_t^i = W_{V,t}\hat x_t^i$ for all $t\in\{0,\ldots,M-1\}$ and $i\in\{1,\ldots,M\}$. 
    
    For convenience, we write $A_i = \sum_{j=1}^M a_t^{ij}W_{V,t}\bar{x}_t^j$ for the $i$th token's attention part, which is same for both $\bar x_t^i$ and $\hat x_t^i$ by (i) and (ii). Then
    \[
        \begin{split}
             x_{t+1}^i - \hat x_{t+1}^i &= ( x_t^i + A_i) - Q_n(\hat x_t^i + A_i) \\
            &=  x_t^i + A_i - Q_n(\hat x_t^i + A_i) + \hat x_t^i - \hat x_t^i\\
            &= ( x_t^i - \hat x_t^i) + \big((\hat x_t^i + A_i) - Q_n(\hat x_t^i + A_i)\big).
        \end{split}
    \]
    Passing to the norm, we get
    \begin{equation}\label{eq:RecursiveBound}
        \begin{split}
            \| x_{t+1}^i - \hat x_{t+1}^i\| &\leq \| x_t^i - \hat x_t^i\| + \underbrace{\|(\hat x_t^i + A_i) - Q_n(\hat x_t^i + A_i)\|}_{\leq \frac{M_\XX\sqrt{d}}{n-1}},
        \end{split}
    \end{equation}
    for all $t\in\{0,\ldots,M-1\}$ and all $i\in\{1,\ldots,M\}$. Initial tokens given by $[e_i \; \iota(w_i) \; 0_M \; 0_{R(s)}]$ have integer coordinates, hence, invariant under $Q_n$. So $\| x_{0}^i - \hat x_{0}^i\| = 0$ for all $i\in\{1,\ldots,M\}$. Consequently, the error in the last gatherer layer's readout position is upper bounded as
    \begin{equation}\label{eq:Part1}
            \|\hat{x}_M^M - \bar x_M^M\| \leq \|\hat x_M^M - x_M^M\| + \|x_M^M - \bar x_M^M\| \leq M\frac{M_\XX\sqrt{d}}{n-1} + 2M(M-1)R(s)e^{-\beta c},
    \end{equation}
    where the first summand in the last bound is due to \eqref{eq:RecursiveBound} accumulated over $M$ layers and the the second summand is due to Lemma~\ref{lem:Error}.
 
    Now we arrive at the interpolator layer, in which we have $W_{V,M}$ is the zero matrix, hence the attention part does not contribute the final token. The logit part of the final layer is then given in terms of a one-layer feed-forward network of the form $\phi(\zeta) = W_{W,M}\sigma(W_{A,M}\zeta + W_{b,M})$. Let $\hat\phi$ be the corresponding feed-forward with quantized weights, i.e. $\hat\phi(\zeta) = \hat W_{W,M}\sigma(W_{A,M}\zeta + W_{b,M})$. We deliberately write $W_{A,M}$ and $W_{b,M}$ without hats because of Lemma~\ref{lem:Explicit_a}, which implies that their entries are integers in $[-M_\UU, M_\UU]$ by~\eqref{eq:RangeConditions}; hence, they are not affected by the quantization due to Definition~\ref{def:AligningQuantization}-(ii) (see the proof of Lemma~\ref{lem:TransformerPerformance}). So, for any $\zeta\in\XX$
    \[
        \hat\phi(\zeta) - \phi(\zeta) = (W_{W,M} - \hat W_{W,M})\sigma(W_{A,M}\zeta + W_{b,M}).
    \]
    In norm, we have
    \[
        \|\hat\phi(\zeta) - \phi(\zeta)\| \leq \|W_{W,M} - \hat W_{W,M}\|_F \bigg(\sum_{k=1}^{R(s)^M}\sigma(\langle a, \zeta\rangle - t_k)^2\bigg)^{\frac{1}{2}}.
    \]
    $\|W_{W,M} - \hat W_{W,M}\|_F \leq \frac{M_\UU d_s}{m - 1}$ due to action quantization. Indeed, the only entries of $W_{W,M}$ that are not integers are those of its logit block, which has $R(s)$ rows and $R(s)^M$ columns, so that there are at most $d_s^2 = R(s)^{M+1}$ of them. By the proof of Lemma~\ref{lem:Explicit_a}-(ii), they lie in $[-2\log\frac{1}{q_{\rm min}\mu_{\rm min}^s}, 2\log\frac{1}{q_{\rm min}\mu_{\rm min}^s}]\subset[-M_\UU, M_\UU]$, and each of them is moved by at most half of the grid spacing $\frac{2M_\UU}{m-1}$. On the other hand,
    \[
        \bigg(\sum_{k=1}^{R(s)^M}\sigma(\langle a, \zeta\rangle - t_k)^2\bigg)^{\frac{1}{2}} \leq \sqrt{R(s)^M}\|a\|_2 M_\XX \sqrt{d}. 
    \]
    By Lemma~\ref{lem:Explicit_a}, $a = (1,R(s),\ldots,R(s)^{M-1})$, hence $\|a\|_2 \leq R(s)^{M - 1/2}$, which implies 
    \begin{equation}\label{eq:Part2}
        \|\phi - \hat\phi\|_\infty \leq M_\XX\sqrt{d}R(s)^{\frac{3M-1}{2}}\frac{M_\UU d_s}{m-1}.
    \end{equation}
    By definition, $\hat z(w) = \hat\phi(\hat x_M^M)$ and $\bar z(w) = \phi(\bar x_M^M)$. With an ``add and subtract'' trick:
    \[
        \|\hat z(w) - \bar z(w)\|_\infty \leq \|\hat z(w) - \phi(\hat x_M^M)\|_\infty + \|\phi(\hat x_M^M) - \bar z(w)\|_\infty.
    \]
    For the first term, we have 
    \[
        \|\hat z(w) - \phi(\hat x_M^M)\|_\infty \leq M_\XX\sqrt{d}R(s)^{\frac{3M-1}{2}}\frac{M_\UU d_s}{m-1}
    \]
    from \eqref{eq:Part2}, and the second term is bounded as
    \[
        \|\phi(\hat x_M^M) - \bar z(w)\|_\infty \leq L_\phi\bigg(M\frac{M_\XX\sqrt{d}}{n-1} + 2M(M-1)R(s)e^{-\beta c}\bigg)
    \]
    by Lemma~\ref{lem:relu_interpolation} and \eqref{eq:Part1}. Therefore, we get
    \[
         \|\hat z(w) - \bar z(w)\|_\infty \leq M_\XX\sqrt{d}R(s)^{\frac{3M-1}{2}}\frac{M_\UU d_s}{m-1} + L_\phi\bigg(M\frac{M_\XX\sqrt{d}}{n-1} + 2M(M-1)R(s)e^{-\beta c}\bigg).\qedhere
    \]
\end{proof}
 
\begin{lemma}\label{lem:TransformerPerformance}
    Assume that $(n,m,l)\in\NN^3$ are chosen so that they are aligning quantization levels, that $M_\XX, M_\UU$ satisfy~\eqref{eq:RangeConditions}, that $R(s)\geq 2$, and let Assumptions~\ref{a:BaseAssumptions}-(A1, A3) hold. Let $\beta>0$ be such that $\eta(n,m,\beta)\leq 1$, where $\eta(n,m,\beta)$ is as in~\eqref{eq:EtaDef}. Then there exists an open-loop policy $\underline{\hat U} \in \UU_m^{M+1}$ such that, for every $w\in\Lambda_s^M$, $\sT^{\underline{\hat U}}(\,\cdot\mid w) = \mathrm{softmax}(\hat z_{\rm out}(w))$ for a logit vector $\hat z_{\rm out}(w)\in\RR^{R(s)}$ with
    \begin{equation}\label{eq:LogitError}
        \|\hat z_{\rm out}(w) - \bar z(w)\|_\infty\leq\eta(n,m,\beta),
    \end{equation}
    where $\bar z(w) = \log\hat P_r^{M,s}(\,\cdot\mid w)$ is as in Theorem~\ref{thm:ApproximatingTransformer}. Consequently,
    \[
        \max_{w\in\Lambda_s^M}\|\hat{P}_r^{M,s}(\,\cdot\mid w) - \mathsf{T}^{\underline{\hat U}}(\,\cdot\mid w)\|_\TV \leq \frac{\eta(n,m,\beta)}{2}.
    \]
    In particular, if $\beta$ is as in Theorem~\ref{thm:ApproximatingTransformer} for some $\epsilon_0\in(0,1/2]$, then
    \[
        \begin{split}
            \max_{w\in\Lambda_s^M}\|\hat{P}_r^{M,s}(\,\cdot\mid w) - \mathsf{T}^{\underline{\hat U}}(\,\cdot\mid w)\|_\TV &\leq \frac{\epsilon_0}{4} + \big(1+L_\phi M\sqrt{d}\big)\frac{M_\XX}{2(n-1)}\\
            &\qquad\qquad+ M_\XX\sqrt{d}R(s)^{\frac{3M-1}{2}}\frac{M_\UU d_s}{2(m-1)}.
        \end{split}
    \]
\end{lemma}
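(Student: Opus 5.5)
The policy $\underline{\hat U}$ will be the Transformer of §\ref{sub:TokenEmbeddings}--\ref{sub:Interpolator}, with the weights that are not already on the grid replaced by their nearest points in $\UU_m$. Concretely, for $t\in\{0,\ldots,M-1\}$ we take the gatherer weights of §\ref{sub:Gatherers} unchanged. By Lemma~\ref{lem:GridExactness}-(a) these already lie in $\UU_m$. For the interpolator layer $U_M$ we proceed as follows:
\begin{itemize}
\item set $W_{V,M}=0$;
\item take $W_{A,M}$ with rows built from the vector $a$ of Lemma~\ref{lem:Explicit_a}, together with identity/pass-through rows;
\item take biases $-t_k$;
\item take $W_{W,M}$ containing the slope differences $s_k-s_{k-1}$ in its logit block, with the constant $v_1$ absorbed via a ReLU unit fed by a constant coordinate.
\end{itemize}
We then replace $W_{W,M}$ by its nearest grid point $\hat W_{W,M}$.

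The first task is admissibility. The entries of $a$ are $R(s)^{j-1}\le R(s)^{M-1}$, and the breakpoints $t_k$ are integers bounded by $R(s)^M$. Both are therefore integers in $[-M_\UU,M_\UU]$ by \eqref{eq:RangeConditions}, and they are grid points by Definition~\ref{def:AligningQuantization}-(ii). By Lemma~\ref{lem:Explicit_a}-(ii) the logit-block entries lie in $[-2\log\frac{1}{q_{\rm min}\mu^s_{\rm min}},\,2\log\frac{1}{q_{\rm min}\mu^s_{\rm min}}]\subset[-M_\UU,M_\UU]$. We also check that all intermediate tokens stay in $\XX=[-M_\XX,M_\XX]^d$, using $M_\XX\ge R(s)$ for the gather coordinates and $M_\XX\ge 1+\log\frac{1}{q_{\rm min}\mu^s_{\rm min}}$ for the logit coordinates.

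The second task is the logit bound \eqref{eq:LogitError}. Define $\hat z_{\rm out}(w)=\Pi_{\rm log}Q_n(\hat x^M_{M+1})$, so that $\sT^{\underline{\hat U}}=\mathrm{softmax}(\hat z_{\rm out})$. Lemma~\ref{lem:Stability} bounds $\|\hat z(w)-\bar z(w)\|_\infty$ by everything in $\eta$ except one extra term $M_\XX/(n-1)$. That term is the coordinatewise error of the final application of $Q_n$: the grid spacing is $2M_\XX/(n-1)$, so the sup-norm error is at most $M_\XX/(n-1)$. The triangle inequality then gives $\|\hat z_{\rm out}-\bar z\|_\infty\le\eta(n,m,\beta)$, which matches the coefficient $(1+L_\phi M\sqrt d)M_\XX/(n-1)$ in \eqref{eq:EtaDef}. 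The hypothesis $\eta\le 1$ keeps the logits in range, so the clipping to $[-M_\XX,M_\XX]$ is harmless.

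The third task converts logit error into total variation. Here the softmax ratio argument of Theorem~\ref{thm:ApproximatingTransformer} is too lossy, since it gives $e^{2\eta}-1$. Instead, bound $\|\mathrm{softmax}(z)-\mathrm{softmax}(\bar z)\|_1$ through the derivative of softmax along the segment between the two logit vectors. The Jacobian $\mathrm{diag}(p)-pp^T$ maps $\ell_\infty$ to $\ell_1$ with norm at most $\max_S 2p(S)(1-p(S))\le 1/2$, because $\sum_v p_v|h_v-\bar h|\le$ half the oscillation of $h$. This gives $\|\cdot\|_1\le\eta$, hence TV $\le\eta/2$.

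For the final display, substitute the lower bound on $\beta$ from Theorem~\ref{thm:ApproximatingTransformer}. It gives $2L_\phi M(M-1)R(s)e^{-\beta c}\le\epsilon_0/2$, and halving yields $\epsilon_0/4$ plus the remaining two halved terms.

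The main obstacle is the admissibility and range bookkeeping of the interpolator layer. One must ensure the pass-through and identity rows, the constant unit used for $v_1$, and the integrality of the $t_k$ are all consistent with \eqref{eq:RangeConditions}, and that the remaining coordinates of the readout token do not leave $\XX$. The sharp $1/2$ softmax Lipschitz constant is a secondary point to verify carefully.
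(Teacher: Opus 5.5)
Your proposal matches the paper's proof step for step. The policy is the same: gatherer weights untouched, only $W_{W,M}$ rounded to the grid, and $v_1$ produced by a constant hidden unit. The admissibility and range bookkeeping is the same, the logit bound comes from Lemma~\ref{lem:Stability} plus the extra $M_\XX/(n-1)$ from the last $Q_n$, and the conversion to total variation uses the same fundamental-theorem-of-calculus bound along the softmax segment. That last bound is the paper's Lemma~\ref{lem:Softmax}-(i), proved there via $\sum_v\pi_\tau(v)|h_v-\langle\pi_\tau,h\rangle|\le\mathrm{Var}_{\pi_\tau}(h)^{1/2}\le\|h\|_\infty$ rather than your oscillation argument.

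Two constants are misstated, though neither affects the conclusion:
\begin{itemize}
\item The breakpoints satisfy $t_k\le\sum_{j=1}^M R(s)^j=R(s)\frac{R(s)^M-1}{R(s)-1}\le 2R(s)^M$, not $t_k\le R(s)^M$; this is where the hypotheses $R(s)\ge 2$ and $M_\UU\ge 2R(s)^M$ are actually used.
\item The softmax Jacobian has $\ell_\infty\to\ell_1$ norm $\max_S 4p(S)(1-p(S))\le 1$, not $\le 1/2$; your $1/2$ is its norm relative to the oscillation of $h$. Norm $1$ is exactly what your correct conclusion $\|\mathrm{softmax}(\hat z_{\rm out})-\mathrm{softmax}(\bar z)\|_1\le\eta$ requires.
\end{itemize}
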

\begin{proof}
    We prove step-by-step:
 
    \noindent{\it Step 1.} Let $\underline{\hat U} = (\hat U_0,\ldots,\hat U_M)$ be the weights of §\ref{sec:Approximation} in which only the output matrix of the interpolator layer is moved onto the grid. That is, for $t\in\{0,\ldots,M-1\}$, $\hat U_t$ consists of the gatherer weights of §\ref{sub:Gatherers}, and $\hat U_0$ also holds the embedding $E_E$ in~\eqref{eq:embedding}. The last layer $\hat U_M$ consists of $W_{V,M} = 0$, so that its attention block is switched off, of the matrices $W_{A,M}$ and $W_{b,M}$ of the {\tt ReLU} network $\phi$ of Lemma~\ref{lem:relu_interpolation} with the choice of $a$ in Lemma~\ref{lem:Explicit_a}, of $\hat W_{W,M}$, the entry-wise closest grid point to $W_{W,M}$, and of the unembedding $E_U = \Pi_{\rm log}$. Since the feed-forward block has no output bias, the constant term $v_1$ of the interpolant $g$ in the proof of Lemma~\ref{lem:relu_interpolation} is produced by one additional hidden unit with zero input weights and bias $1$, whose output column is $v_1$. In addition, the interpolator layer carries the positional block through: $W_{A,M}$ has the additional rows $(e_i^T, 0, 0_M^T, 0_R^T)$ and $W_{W,M}$ the corresponding columns $(e_i, 0, 0_M, 0_R)$, $i=1,\ldots,M$. Since the positional coordinates are nonnegative, $\sigma$ acts on them as the identity. These additional weights do not interact with the logit block, and they keep Assumption~\ref{assumption:B1} in force after the last layer, so that the readout token is identified by its position.
 
    \noindent{\it Step 2.} We show that $\underline{\hat U}\in\UU_m^{M+1}$. The gatherer weights live in $\UU_m$ by Lemma~\ref{lem:GridExactness}-(a), and the entries of $E_E$, of $E_U$ and of the positional entries are in $\{0,1,\ldots,R(s)\}$. The entries of $W_{A,M}$ are $0$, $1$ or $R(s)^{j-1}\leq R(s)^{M-1}$, $j=1,\ldots,M$, the latter coming from $a$ in Lemma~\ref{lem:Explicit_a}. The entries of $W_{b,M}$ are $1$ (for the constant unit) and $-t_k$, where, by Lemma~\ref{lem:Explicit_a}-(i) and $R(s)\geq2$,
    \[
        0\leq t_k\leq\sum_{j=1}^M R(s)\,R(s)^{j-1} = R(s)\frac{R(s)^M-1}{R(s)-1}\leq 2R(s)^M.
    \]
    Therefore, all these entries are integers in $[-M_\UU, M_\UU]$ by~\eqref{eq:RangeConditions}, hence grid points of $\UU_m$ by Definition~\ref{def:AligningQuantization}-(ii). Finally, $\hat W_{W,M}$ is a grid point by construction. So, $\underline{\hat U}\in\UU_m^{M+1}$.
 
    \noindent{\it Step 3.} Fix $w\in\Lambda_s^M$. The positional carry uses exact integer weights, so Lemma~\ref{lem:Stability} applies without change to the logit vector $\hat z(w) = \hat\phi(\hat x_M^M)$ produced by the interpolator layer before the last application of $Q_n$. By the definition~\eqref{eq:EtaDef}, we have
    \[
        \|\hat z(w) - \bar z(w)\|_\infty\leq\eta(n,m,\beta) - \frac{M_\XX}{n-1}.
    \]
    Since $\bar z(w)\in[-\log\frac{1}{q_{\rm min}\mu_{\rm min}^s}, 0]^{R(s)}$ by the proof of Lemma~\ref{lem:Explicit_a}-(ii) and $\eta(n,m,\beta)\leq1$, the logit coordinates of the readout token lie in $[-M_\XX, M_\XX]$ by~\eqref{eq:RangeConditions}. Hence, the last application of $Q_n$ moves each of them by at most half of the grid spacing, that is, by at most $M_\XX/(n-1)$, and it fixes the positional block by Lemma~\ref{lem:GridExactness}-(b). The atoms of the last measure state still carry mass exactly $1/M$ since their positional encodings are distinct, so $R_l$ does not move them either by Lemma~\ref{lem:GridExactness}-(c). Consequently, $\sT^{\underline{\hat U}}(\,\cdot\mid w) = \mathrm{softmax}(\hat z_{\rm out}(w))$ with $\hat z_{\rm out}(w) := \Pi_{\rm log}Q_n(\hat x_{M+1}^M)$, and
    \[
        \|\hat z_{\rm out}(w) - \bar z(w)\|_\infty\leq\|\hat z_{\rm out}(w) - \hat z(w)\|_\infty + \|\hat z(w) - \bar z(w)\|_\infty\leq\frac{M_\XX}{n-1} + \eta(n,m,\beta) - \frac{M_\XX}{n-1} = \eta(n,m,\beta),
    \]
    which is~\eqref{eq:LogitError}.
 
    \noindent{\it Step 4.} Since $\hat P_r^{M,s}(\,\cdot\mid w) = \mathrm{softmax}(\bar z(w))$ (see the proof of Theorem~\ref{thm:ApproximatingTransformer}), Lemma~\ref{lem:Softmax}-(i) below and~\eqref{eq:LogitError} give
    \[
        \|\hat P_r^{M,s}(\,\cdot\mid w) - \sT^{\underline{\hat U}}(\,\cdot\mid w)\|_\TV\leq\frac{1}{2}\|\bar z(w) - \hat z_{\rm out}(w)\|_\infty\leq\frac{\eta(n,m,\beta)}{2}
    \]
    for every $w\in\Lambda_s^M$. Finally, if $\beta$ is as in Theorem~\ref{thm:ApproximatingTransformer} with $\epsilon=\epsilon_0$, then $\beta c\geq\log(4M(M-1)R(s)L_\phi/\epsilon_0)$, that is, $2L_\phi M(M-1)R(s)e^{-\beta c}\leq\epsilon_0/2$. Plugging this into~\eqref{eq:EtaDef} gives the last claim.
\end{proof}
 
\def\cG{\mathcal{G}}
\begin{lemma}\label{lem:SampleComplexity}
    Suppose Assumptions~\ref{a:BaseAssumptions}-(A1, A3) hold and that the data set is sampled as in the beginning of this section. Then the following assertions are true:
    \begin{itemize}
        \item[(i)] $p_r(w)\geq(q_{\rm min}\mu_{\rm min}^s)^M$ and $\min_{v\in\Lambda_s}\hat P_r^{M,s}(v\mid w)\geq q_{\rm min}\mu_{\rm min}^s$ for all $w\in\Lambda_s^M$.
        \item[(ii)] Let $\delta\in(0,1)$ and $\tilde K \geq 8(q_{\rm min}\mu_{\rm min}^s)^{-M}(M\log{R(s)} + \log{\frac{3}{\delta}})$. Then, with probability at least $1-\delta$, the following two events occur simultaneously:
        \[
            \cG_1 := \Big\{n_w\geq\frac{1}{2}\tilde Kp_r(w)\ \text{ for all } w\in\Lambda_s^M\Big\},
        \]
        \[
            \cG_2 := \bigg\{\|y^w - \hat P_r^{M,s}(\,\cdot\mid w)\|_\TV\leq\sqrt{\frac{C_\delta}{2n_w}}\ \text{ for all } w\in\Lambda_s^M \text{ with } n_w\geq1\bigg\},
        \]
        where $C_\delta := R(s)\log2 + M\log R(s) + \log(3/\delta)$. On $\cG_1$, every window $w\in\Lambda_s^M$ appears in the data set; hence, $K = R(s)^M$ and $\{\bm{x}^k : k\in\cK\} = \Lambda_s^M$.
        \item[(iii)] On $\cG_1\cap\cG_2$, we have
        \[
            \sum_{w\in\Lambda_s^M}p_r(w)\|y^w - \hat P_r^{M,s}(\,\cdot\mid w)\|_\TV\leq\sqrt{\frac{R(s)^MC_\delta}{\tilde K}}
        \]
        and
        \[
            \sum_{w\in\Lambda_s^M}\frac{n_w}{\tilde K}\|y^w - \hat P_r^{M,s}(\,\cdot\mid w)\|_\TV^2\leq\frac{R(s)^MC_\delta}{2\tilde K}.
        \]
    \end{itemize}
\end{lemma}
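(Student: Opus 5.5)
For part (i), I will use the observation structure directly. The window $w$ is the event $\{q^s(Y_{r-M+1+j})\in C(s,\iota(w_j)),\ j=0,\ldots,M-1\}$. Condition on the hidden states $Z_{r-M+1:r}$. The $Y$'s are then conditionally independent, with densities $\psi(\cdot\mid Z)$ with respect to $\mu$. Assumption~\ref{a:BaseAssumptions}-(A1) and (A3) give
\[
\Pr\big(Y\in C(s,i)\mid Z\big)=\int_{C(s,i)}\psi(y\mid Z)\,\mu(\de y)\ge q_{\rm min}\mu_{\rm min}^s.
\]
Multiplying over the $M$ coordinates and averaging over $Z_{r-M+1:r}$ yields $p_r(w)\ge (q_{\rm min}\mu_{\rm min}^s)^M$. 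The same computation, now conditioning on $Z_{r+1}$ given the window, gives $\hat P_r^{M,s}(v\mid w)\ge q_{\rm min}\mu_{\rm min}^s$; this is the argument already used in the proof of Lemma~\ref{lem:Explicit_a}-(ii).

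For part (ii), I split the failure probability $\delta$ into pieces and union bound. For $\cG_1$, I fix $w$. Since $n_w\sim\mathrm{Bin}(\tilde K,p_r(w))$, the multiplicative Chernoff bound gives $\Pr(n_w<\tfrac12\tilde Kp_r(w))\le \exp(-\tilde Kp_r(w)/8)$. Using (i) and the hypothesis on $\tilde K$, this is at most $\exp(-M\log R(s)-\log(3/\delta)) = \delta/(3R(s)^M)$. A union bound over the $R(s)^M$ windows gives failure probability $\delta/3$. On $\cG_1$ every $n_w\ge \tfrac12\tilde K p_r(w)>0$, because the hypothesis forces $\tilde Kp_r(w)\ge 8\log 3>2$. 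So every window appears, and $K=R(s)^M$.

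For $\cG_2$, I condition on the feature vectors $(\bm x^l)_l$. Given these, the labels with $\bm x^l=w$ are i.i.d.\ with law $\hat P_r^{M,s}(\cdot\mid w)$. This holds because the training pairs are i.i.d.\ copies of (window, next quantized token), and $\hat P_r^{M,s}(\cdot\mid w)$ is exactly the conditional law of $q^s(Y_{r+1})$ given the window. I then write $\|y^w-P\|_\TV=\max_{A\subseteq\Lambda_s}(y^w(A)-P(A))$ and apply Hoeffding's inequality to each of the $2^{R(s)}$ sets $A$. This gives, for fixed $w$ and $n_w$,
\[
\Pr\Big(\|y^w-P\|_\TV>\sqrt{C_\delta/(2n_w)}\Big)\le 2^{R(s)}e^{-C_\delta}=\frac{\delta}{3R(s)^M}.
\]
A union bound over $w$ gives $\delta/3$; integrating out the conditioning keeps this bound. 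The remaining $\delta/3$ is slack, so $\Pr(\cG_1\cap\cG_2)\ge 1-\tfrac23\delta\ge1-\delta$. The main care point is justifying this conditional i.i.d.\ structure, with random $n_w$, so that Hoeffding applies uniformly in $n_w\ge1$.

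For part (iii), on $\cG_1\cap\cG_2$ I use $n_w\ge\tfrac12\tilde Kp_r(w)$ to get
\[
p_r(w)\sqrt{\frac{C_\delta}{2n_w}}\le\sqrt{\frac{p_r(w)C_\delta}{\tilde K}}.
\]
Summing over $w$ and applying Cauchy--Schwarz, $\sum_w\sqrt{p_r(w)}\le\sqrt{R(s)^M}$ since $\sum_w p_r(w)=1$, which gives the first bound. For the second, each term satisfies $\frac{n_w}{\tilde K}\cdot\frac{C_\delta}{2n_w}=\frac{C_\delta}{2\tilde K}$. Summing over the $R(s)^M$ windows gives $R(s)^MC_\delta/(2\tilde K)$.
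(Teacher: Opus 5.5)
Your proposal is correct and follows the paper's proof essentially step for step: the same conditional-independence lower bounds in (i), the multiplicative Chernoff bound with a union bound over windows for $\cG_1$, conditioning on the prompts for $\cG_2$, and the same Cauchy--Schwarz and summation computations in (iii). The only difference is cosmetic: you derive the concentration of $y^w$ from Hoeffding's inequality plus a union bound over the $2^{R(s)}$ subsets $A\subseteq\Lambda_s$, whereas the paper cites the Bretagnolle--Huber--Carol inequality, which is proved in exactly that way and gives the same bound $2^{R(s)}e^{-C_\delta}=\delta/(3R(s)^M)$.
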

\begin{proof}
    We prove step-by-step:
 
    \noindent{\it Proof of (i).} Conditionally on $Z_{0:r+1}$, the observations $Y_0,\ldots,Y_{r+1}$ are independent with $Y_t\sim Q(\,\cdot\mid Z_t)$; see the representation $Y_t = H(Z_t, V_t)$ with i.i.d.\ noise independent of $\{Z_r\}_{r\in\ZZ_+}$ in the proof of Lemma~\ref{lem:ErgodicityIsInherited}. For every $v\in\Lambda_s$, the preimage $(q^s)^{-1}(v)$ is one of the cells $C(s,i)$, so that, for all $z\in S$,
    \[
        Q\big((q^s)^{-1}(v)\mid z\big) = \int_{(q^s)^{-1}(v)}\mu(\de y)\,\psi(y\mid z)\geq q_{\rm min}\,\mu\big((q^s)^{-1}(v)\big)\geq q_{\rm min}\,\mu_{\rm min}^s
    \]
    by Assumptions~\ref{a:BaseAssumptions}-(A1) and (A3). Therefore, for $w = (w_1,\ldots,w_M)\in\Lambda_s^M$,
    \[
        p_r(w) = \EE\bigg[\prod_{j=1}^MQ\big((q^s)^{-1}(w_j)\mid Z_{r-M+j}\big)\bigg]\geq(q_{\rm min}\,\mu_{\rm min}^s)^M.
    \]
    In particular, $\{q^{M,s}(Y_{r-M+1:r}) = w\}$ has positive probability, and, by the same conditional independence,
    \[
        \hat P_r^{M,s}(v\mid w) = \EE\big[Q\big((q^s)^{-1}(v)\mid Z_{r+1}\big)\,\big|\, q^{M,s}(Y_{r-M+1:r}) = w\big]\geq q_{\rm min}\,\mu_{\rm min}^s
    \]
    for all $v\in\Lambda_s$, as in the proof of Lemma~\ref{lem:Explicit_a}-(ii).
 
    \noindent{\it Proof of (ii).} Since the prompts are independent copies of $q^{M,s}(Y_{r-M+1:r})$, $n_w$ is binomially distributed with $\tilde K$ trials and success probability $p_r(w)$. The multiplicative Chernoff bound \cite[Corollary~4.2]{mulzer2018five} and (i) give
    \[
        \Pr\Big(n_w<\frac{1}{2}\tilde Kp_r(w)\Big)\leq\exp\Big(-\frac{\tilde Kp_r(w)}{8}\Big)\leq\exp\Big(-\frac{\tilde K(q_{\rm min}\mu_{\rm min}^s)^M}{8}\Big).
    \]
    A union bound over the $R(s)^M$ windows and the lower bound on $\tilde K$ yield
    \[
        \Pr(\cG_1^c)\leq R(s)^M\exp\Big(-\frac{\tilde K(q_{\rm min}\mu_{\rm min}^s)^M}{8}\Big)\leq R(s)^M\exp\Big(-M\log R(s) - \log\frac{3}{\delta}\Big) = \frac{\delta}{3}.
    \]
    For $\cG_2$, since the training pairs are independent and identically distributed, conditionally on the prompts $\bm{x}^1,\ldots,\bm{x}^{\tilde K}$, the labels are independent and $\tilde y^l\sim\hat P_r^{M,s}(\,\cdot\mid\bm{x}^l)$. Hence, conditionally on the prompts, for every $w$ with $n_w\geq1$, the label counts $(n_wy^w(v))_{v\in\Lambda_s}$ form a multinomial vector with $n_w$ trials and cell probabilities $\hat P_r^{M,s}(\,\cdot\mid w)$. By the Bretagnolle-Huber-Carol inequality \cite[Proposition~A.6.6]{vandervaart1996weak}, such a vector satisfies
    \[
        \Pr\bigg(\sum_{v\in\Lambda_s}\big|n_wy^w(v) - n_w\hat P_r^{M,s}(v\mid w)\big|\geq 2\sqrt{n_w}\lambda\bigg)\leq 2^{R(s)}e^{-2\lambda^2},\quad\lambda>0.
    \]
    Since $\|\cdot\|_\TV = \frac{1}{2}\|\cdot\|_1$ on $\cP(\Lambda_s)$, the event on the left hand side is $\{\|y^w - \hat P_r^{M,s}(\,\cdot\mid w)\|_\TV\geq\lambda/\sqrt{n_w}\}$. Choosing $\lambda = \sqrt{C_\delta/2}$, we get
    \[
        \Pr\bigg(\|y^w - \hat P_r^{M,s}(\,\cdot\mid w)\|_\TV>\sqrt{\frac{C_\delta}{2n_w}}\;\bigg|\;\bm{x}^1,\ldots,\bm{x}^{\tilde K}\bigg)\leq 2^{R(s)}e^{-C_\delta} = \frac{\delta}{3R(s)^M}.
    \]
    A union bound over at most $R(s)^M$ windows and taking the expectation give $\Pr(\cG_2^c)\leq\delta/3$. Therefore, $\Pr(\cG_1\cap\cG_2)\geq 1 - 2\delta/3\geq 1-\delta$. Finally, on $\cG_1$, we have $n_w\geq\tilde Kp_r(w)/2>0$ for every $w\in\Lambda_s^M$ by (i), that is, every window appears in the data set.
    \smallskip
    
    \noindent{\it Proof of (iii).} On $\cG_1\cap\cG_2$, we have $n_w\geq\tilde Kp_r(w)/2$, and therefore
    \[
        \|y^w - \hat P_r^{M,s}(\,\cdot\mid w)\|_\TV\leq\sqrt{\frac{C_\delta}{2n_w}}\leq\sqrt{\frac{C_\delta}{\tilde Kp_r(w)}}
    \]
    for every $w\in\Lambda_s^M$. By the Cauchy-Bunyakovsky-Schwarz inequality and $\sum_wp_r(w) = 1$,
    \[
        \sum_{w\in\Lambda_s^M}p_r(w)\|y^w - \hat P_r^{M,s}(\,\cdot\mid w)\|_\TV\leq\sqrt{\frac{C_\delta}{\tilde K}}\sum_{w\in\Lambda_s^M}\sqrt{p_r(w)}\leq\sqrt{\frac{C_\delta}{\tilde K}}\sqrt{R(s)^M}.
    \]
    For the second claim,
    \[
        \sum_{w\in\Lambda_s^M}\frac{n_w}{\tilde K}\|y^w - \hat P_r^{M,s}(\,\cdot\mid w)\|_\TV^2\leq\sum_{w\in\Lambda_s^M}\frac{n_w}{\tilde K}\frac{C_\delta}{2n_w} = \frac{R(s)^MC_\delta}{2\tilde K}.\qedhere
    \]
\end{proof}
 
The next lemma collects three elementary facts on the softmax map and the Kullback-Leibler divergence, which convert errors in the logits into errors in total variation and in the excess loss.
 
\begin{lemma}\label{lem:Softmax}
    Let $z,z'\in\RR^{R(s)}$, indexed by $\Lambda_s$, and let $\nu,P,P'\in\cP(\Lambda_s)$. Then the following assertions are true:
    \begin{itemize}
        \item[(i)] $\|\mathrm{softmax}(z) - \mathrm{softmax}(z')\|_\TV\leq\frac{1}{2}\|z-z'\|_\infty$;
        \item[(ii)] $D_{\rm KL}(\nu\parallel\mathrm{softmax}(z'))\leq D_{\rm KL}(\nu\parallel\mathrm{softmax}(z)) + 2\|\nu - \mathrm{softmax}(z)\|_\TV\|z-z'\|_\infty + \frac{1}{2}\|z-z'\|_\infty^2$;
        \item[(iii)] if $\min_{v\in\Lambda_s}P'(v)\geq p>0$, then $D_{\rm KL}(P\parallel P')\leq\frac{4}{p}\|P-P'\|_\TV^2$.
    \end{itemize}
\end{lemma}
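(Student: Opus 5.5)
Each of the three assertions of Lemma~\ref{lem:Softmax} follows from the log-sum-exp structure of softmax. Write $p := \mathrm{softmax}(z)$, $p' := \mathrm{softmax}(z')$, $\Delta := z'-z$, $\varepsilon := \|\Delta\|_\infty$ and $A(z) := \log\sum_{u\in\Lambda_s}e^{z_u}$, so that $\log p_v = z_v - A(z)$.

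For (i), interpolate along $z(\tau) := z + \tau\Delta$ for $\tau\in[0,1]$. The derivative is
\[
    \frac{\de}{\de\tau}\,\mathrm{softmax}(z(\tau))_v = p_v(\tau)\big(\Delta_v - \langle p(\tau),\Delta\rangle\big).
\]
Hence $\|\tfrac{\de}{\de\tau}p(\tau)\|_1 = \EE_{p(\tau)}|\Delta - \EE_{p(\tau)}\Delta|$. The random variable $\Delta$ takes values in an interval of length at most $2\varepsilon$, so this mean absolute deviation is at most half the range, i.e.\ at most $\varepsilon$. Integrating over $\tau$ gives $\|p-p'\|_1\le\varepsilon$, and therefore $\|p-p'\|_\TV\le\varepsilon/2$. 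The one delicate point is the half-range bound on the mean absolute deviation; I would check it by comparing with $\EE|X-c|$ at the midpoint $c$ of the range, noting that the deviation about the mean is at most twice the deviation about any point.

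For (ii), expand directly:
\[
    D_{\rm KL}(\nu\parallel p') - D_{\rm KL}(\nu\parallel p) = \sum_v\nu_v(\log p_v - \log p'_v) = A(z') - A(z) - \langle\nu,\Delta\rangle.
\]
A second-order Taylor expansion of $A$ along the segment gives $A(z')-A(z) = \langle p,\Delta\rangle + \tfrac12\mathrm{Var}_{p(\xi)}(\Delta)$ for some intermediate point. The variance is at most $\varepsilon^2$ by Popoviciu's inequality (range $2\varepsilon$), which gives a quadratic term of at most $\tfrac12\varepsilon^2$. The linear term is $\langle p-\nu,\Delta\rangle$, and it is bounded by $\|p-\nu\|_1\,\varepsilon = 2\|\nu-p\|_\TV\,\varepsilon$. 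Together these give exactly the claimed inequality.

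For (iii), use $D_{\rm KL}\le\chi^2$, which follows from $\log x\le x-1$:
\[
    D_{\rm KL}(P\parallel P') = \sum_v P_v\log\frac{P_v}{P'_v} \le \sum_v\frac{(P_v-P'_v)^2}{P'_v} \le \frac1p\sum_v(P_v-P'_v)^2 \le \frac1p\|P-P'\|_1^2 = \frac4p\|P-P'\|_\TV^2,
\]
where the penultimate step uses $\sum_v x_v^2\le(\sum_v|x_v|)^2$.

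None of the three parts is a serious obstacle. The only places requiring care are the constants coming from the range bounds in (i) and (ii), and keeping in mind that $\|\cdot\|_\TV$ is half the $\ell_1$ norm.
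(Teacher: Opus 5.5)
Your proposal follows the paper's proof essentially step for step. Part (i) uses the same interpolation $\tau\mapsto\mathrm{softmax}(z+\tau\Delta)$. Part (ii) uses the same first-order Taylor expansion of the log-sum-exp, with the variance as second derivative. Part (iii) uses the same chain $D_{\rm KL}\le\chi^2\le\frac1p\|P-P'\|_1^2$. The paper reaches $\chi^2$ via Jensen and $\log(1+x)\le x$, while you use $\log x\le x-1$ termwise; these are equivalent.

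The one thing to fix is your proposed verification of the half-range bound in (i). The bound itself, $\EE_{p(\tau)}|\Delta-\EE_{p(\tau)}\Delta|\le\varepsilon$, is true, but the argument you sketch does not give it. The inequalities $\EE|X-\EE X|\le\EE|X-c|+|c-\EE X|\le 2\,\EE|X-c|$ and $\EE|X-c|\le\varepsilon$ at the midpoint only yield $\EE|X-\EE X|\le 2\varepsilon$. That gives $\|p-p'\|_\TV\le\varepsilon$, which loses exactly the factor $\tfrac12$ in the statement. This constant matters: it is sharp to first order, since for $z=(0,0)$ and $z'=(\varepsilon,-\varepsilon)$ one gets $\|p-p'\|_\TV=\tfrac12\tanh\varepsilon$.

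To fix it, bound the mean absolute deviation by the standard deviation (Cauchy--Schwarz), and then bound the variance by $\varepsilon^2$. You can get the variance bound from Popoviciu, as you already do in (ii), or from $\mathrm{Var}_\pi(\Delta)\le\langle\pi,\Delta^2\rangle\le\|\Delta\|_\infty^2$. This is exactly what the paper does, and with that change your proof is complete.
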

\begin{proof}
    Throughout, write $h := z'-z$ and $\pi_\tau := \mathrm{softmax}(z+\tau h)$ for $\tau\in[0,1]$. For $\pi\in\cP(\Lambda_s)$, write $\langle\pi,h\rangle := \sum_v\pi(v)h_v$ and $\mathrm{Var}_\pi(h) := \sum_v\pi(v)(h_v - \langle\pi,h\rangle)^2$ for the mean and the variance of $h$ under $\pi$. We have that
    \begin{equation}
        \label{eq:HelperBound}
        \mathrm{Var}_\pi(h) \leq \langle \pi, h^2\rangle \leq \|h\|_\infty^2
    \end{equation}
    since $\pi$ is a probability vector and $h$ is bounded.
    
    Moreover, a direct computation gives $\frac{\de}{\de\tau}\pi_\tau(v) = \pi_\tau(v)(h_v - \langle\pi_\tau,h\rangle)$ for all $v\in\Lambda_s$.
 
    \noindent{\it Proof of (i).} By the fundamental theorem of calculus, the Cauchy-Bunyakovsky-Schwarz inequality and~\eqref{eq:HelperBound},
    \[
       \begin{split}
            \|\mathrm{softmax}(z') - \mathrm{softmax}(z)\|_1&\leq\int_0^1\sum_{v\in\Lambda_s}\pi_\tau(v)\,|h_v - \langle\pi_\tau,h\rangle|\,\de\tau\\
            &\leq\int_0^1\mathrm{Var}_{\pi_\tau}(h)^{\frac{1}{2}}\,\de\tau\leq\|h\|_\infty.
       \end{split}
    \]
    Since $\|\cdot\|_\TV = \frac{1}{2}\|\cdot\|_1$ on $\cP(\Lambda_s)$, (i) follows.
 
    \noindent{\it Proof of (ii).} Write $\mathrm{LSE}(u) := \log\sum_ve^{u_v}$, so that $\log\mathrm{softmax}(u)_v = u_v - \mathrm{LSE}(u)$. Since the softmax map has full support, both divergences in (ii) are finite, and
    \[
        \begin{split}
            D_{\rm KL}(\nu\parallel\mathrm{softmax}(z')) - D_{\rm KL}(\nu\parallel\mathrm{softmax}(z)) &= \sum_{v\in\Lambda_s}\nu(v)\log\frac{\mathrm{softmax}(z)_v}{\mathrm{softmax}(z')_v}\\
            &= -\langle\nu,h\rangle + \mathrm{LSE}(z+h) - \mathrm{LSE}(z).
        \end{split}
    \]
    The map $\tau\mapsto\mathrm{LSE}(z+\tau h)$ has the first derivative $\langle\pi_\tau,h\rangle$ and, by the computation above, the second derivative $\mathrm{Var}_{\pi_\tau}(h)\leq\|h\|_\infty^2$. By Taylor's theorem, we have the first order approximation: $\mathrm{LSE}(z+h) - \mathrm{LSE}(z)\leq\langle\pi_0,h\rangle + \frac{1}{2}\|h\|_\infty^2$ with $\pi_0 = \mathrm{softmax}(z)$. Therefore,
    \[
        \begin{split}
            D_{\rm KL}(\nu\parallel\mathrm{softmax}(z')) - D_{\rm KL}(\nu\parallel\mathrm{softmax}(z)) &\leq\langle\mathrm{softmax}(z) - \nu, h\rangle + \frac{1}{2}\|h\|_\infty^2\\
            &\leq\|\mathrm{softmax}(z) - \nu\|_1\|h\|_\infty + \frac{1}{2}\|h\|_\infty^2,
        \end{split}
    \]
    and $\|\cdot\|_1 = 2\|\cdot\|_\TV$ concludes (ii).
 
    \noindent{\it Proof of (iii).} By Jensen's inequality for the concave logarithm,
    \[
        \begin{split}
            D_{\rm KL}(P\parallel P') &= \sum_{v:\,P(v)>0}P(v)\log\frac{P(v)}{P'(v)}\\
            &\leq\log\sum_{v\in\Lambda_s}\frac{P(v)^2}{P'(v)}\\
            &= \log\bigg(1 + \sum_{v\in\Lambda_s}\frac{(P(v) - P'(v))^2}{P'(v)}\bigg),
        \end{split}
    \]
    where the last equality follows from expanding $P(v)^2 = (P(v)-P'(v))^2 + 2P(v)P'(v) - P'(v)^2$ and summing over $v$. Using $\log(1+x)\leq x$, $P'(v)\geq p$ and $\sum_v(P(v)-P'(v))^2\leq(\sum_v|P(v)-P'(v)|)^2$, we obtain
    \[
        D_{\rm KL}(P\parallel P')\leq\frac{1}{p}\bigg(\sum_{v\in\Lambda_s}|P(v) - P'(v)|\bigg)^2 = \frac{4}{p}\|P-P'\|_\TV^2.\qedhere
    \]
\end{proof}

\subsection{Proof of Theorem~\ref{thm:Unified}}\label{sub:Proof}
 
Fix $r\geq M$ and write $ w_r := q^{M,s}(Y_{r-M+1:r})$ for the quantized test window. Then $P_r^{M,s} = P_r^{M,s}(\,\cdot\mid w_r)$ and $\hat P_r^{M,s} = \hat P_r^{M,s}(\,\cdot\mid w_r)$ in Definition~\ref{def:PredictiveDists}, and the law of $ w_r$ is $p_r$ in~\eqref{eq:WindowLaw}. The data set enters only through $\gamma^*$ and it is independent of $Y_{0:r}$; so, in Steps~1--3, we fix the data set and take the expectation over $Y_{0:r}$ only. We prove step-by-step:
 
\noindent{\it Step 1.} For every realization of $Y_{0:r}$, all measures below are probability measures on $\YY$, and the triangle inequality gives
\begin{equation}\label{eq:ProofDecomposition}
    \begin{split}
        \|P_r - \Xi_s(\sT^{\gamma^*}(\,\cdot\mid w_r))\|_\TV &\leq \|P_r - P_r^{M,s}\|_\TV + \|P_r^{M,s}(\,\cdot\mid w_r) - \Xi_s(\hat P_r^{M,s}(\,\cdot\mid w_r))\|_\TV\\
        &\qquad + \|\Xi_s(\hat P_r^{M,s}(\,\cdot\mid w_r)) - \Xi_s(\sT^{\gamma^*}(\,\cdot\mid w_r))\|_\TV.
    \end{split}
\end{equation}
By Theorem~\ref{thm:Main}-(ii), the first term satisfies
\[
    \EE_{Y_{0:r}}\|P_r - P_r^{M,s}\|_\TV\leq\boldsymbol{\alpha}^{\lfloor M/N\rfloor} + \frac{2ML_\psi}{q_{\rm min}s}.
\]
 
\noindent{\it Step 2.} Fix $w\in\Lambda_s^M$. Conditionally on $Z_{r+1}$, the next observation $Y_{r+1}$ is independent of $Y_{r-M+1:r}$ and has the law $Q(\,\cdot\mid Z_{r+1})$. Hence, the $\mu$-density of $P_r^{M,s}(\,\cdot\mid w)$ is
\[
    p_r^{M,s}(y\mid w) := \sum_{z\in S}\psi(y\mid z)\,\Pr\big(Z_{r+1} = z\mid q^{M,s}(Y_{r-M+1:r}) = w\big),\qquad y\in\YY,
\]
which is a convex combination of $\{\psi(\,\cdot\mid z)\}_{z\in S}$. Exactly as for $p_r$ in the proof of Lemma~\ref{def:QuantizationBound}, Assumption~\ref{a:BaseAssumptions}-(A2) shows that $p_r^{M,s}(\,\cdot\mid w)$ is $L_\psi$-Lipschitz. Since $\hat P_r^{M,s}(\,\cdot\mid w) = q^s_\#P_r^{M,s}(\,\cdot\mid w)$ by Definition~\ref{def:PredictiveDists}, the $\mu$-density of $\Xi_s(\hat P_r^{M,s}(\,\cdot\mid w))$ is constant on each cell $C(s,i)$ and equal to the $\mu$-average of $p_r^{M,s}(\,\cdot\mid w)$ over $C(s,i)$. The remaining lines of the proof of Lemma~\ref{def:QuantizationBound} then apply verbatim and give
\[
    \|P_r^{M,s}(\,\cdot\mid w) - \Xi_s(\hat P_r^{M,s}(\,\cdot\mid w))\|_\TV\leq\frac{L_\psi}{s},\qquad w\in\Lambda_s^M.
\]
So, the second term in~\eqref{eq:ProofDecomposition} is at most $L_\psi/s$ for every realization of $Y_{0:r}$.
 
\noindent{\it Step 3.} For $\nu,\nu'\in\cP(\Lambda_s)$ and a Borel set $A\subset\YY$, Definition~\ref{def:RevertQuantization} gives
\[
    \Xi_s(\nu)(A) - \Xi_s(\nu')(A) = \sum_{i=1}^{R(s)}\big(\nu(y_{s,i}) - \nu'(y_{s,i})\big)\frac{\mu(A\cap C(s,i))}{\mu(C(s,i))}.
\]
Each ratio on the right hand side is in the interval $[0,1]$, and $\sum_{i}(\nu(y_{s,i}) - \nu'(y_{s,i})) = 0$. Therefore, the absolute value of the right hand side is at most $\sum_{i}(\nu(y_{s,i}) - \nu'(y_{s,i}))^+ = \|\nu-\nu'\|_\TV$, where $x^+ := \max\{x,0\}$. Hence, $\|\Xi_s(\nu) - \Xi_s(\nu')\|_\TV\leq\|\nu-\nu'\|_\TV$, and the third term in~\eqref{eq:ProofDecomposition} is at most $\|\hat P_r^{M,s}(\,\cdot\mid w_r) - \sT^{\gamma^*}(\,\cdot\mid w_r)\|_\TV$. Taking the expectation and collecting Steps~1--3, we have shown
\begin{equation}\label{eq:ProofReduction}
    \begin{split}
        &\EE_{Y_{0:r}}\|P_r - \Xi_s(\sT^{\gamma^*}(\,\cdot\mid w_r))\|_\TV\\
        &\qquad\leq\boldsymbol{\alpha}^{\lfloor M/N\rfloor} + \Big(1 + \frac{2M}{q_{\rm min}}\Big)\frac{L_\psi}{s} + \sum_{w\in\Lambda_s^M}p_r(w)\|\hat P_r^{M,s}(\,\cdot\mid w) - \sT^{\gamma^*}(\,\cdot\mid w)\|_\TV.
    \end{split}
\end{equation}
 
\noindent{\it Step 4.} Let $\mathcal{G}_1$ and $\mathcal{G}_2$ be the events of Lemma~\ref{lem:SampleComplexity}-(ii). Their intersection has probability at least $1-\delta$, and we fix a realization of the data set in $\mathcal{G}_1\cap\mathcal{G}_2$ for the rest of the proof. By Lemma~\ref{lem:SampleComplexity}-(ii), every window appears in the data set, so the sums over $k\in\cK$ in~\eqref{eq:cross_entropy_loss} and in the definition of $\cE$ are sums over $w\in\Lambda_s^M$, and
\[
    \cE(\gamma) = \sum_{w\in\Lambda_s^M}\frac{n_w}{\tilde K}D_{\rm KL}\big(y^w\parallel\sT^{\gamma}(\,\cdot\mid w)\big),\qquad\gamma\in\UU_m^{M+1}.
\]
By the triangle inequality,
\begin{equation}\label{eq:ProofSplit}
    \begin{split}
        \sum_{w\in\Lambda_s^M}p_r(w)\|\hat P_r^{M,s}(\,\cdot\mid w) - \sT^{\gamma^*}(\,\cdot\mid w)\|_\TV &\leq\sum_{w\in\Lambda_s^M}p_r(w)\|\hat P_r^{M,s}(\,\cdot\mid w) - y^w\|_\TV\\
        &\qquad + \sum_{w\in\Lambda_s^M}p_r(w)\|y^w - \sT^{\gamma^*}(\,\cdot\mid w)\|_\TV.
    \end{split}
\end{equation}
The first sum is at most $\sqrt{R(s)^MC_\delta/\tilde K}$ by Lemma~\ref{lem:SampleComplexity}-(iii). For the second sum, we use Pinsker's inequality \cite[Theorem~7.10]{polyanskiy2025information}, Jensen's inequality for the concave function $\sqrt{\cdot}$ with the weights $p_r(w)$, which sum to $1$, and the inequality $p_r(w)\leq 2n_w/\tilde K$, which holds on $\mathcal{G}_1$:
\begin{equation}\label{eq:ProofPinsker}
    \begin{split}
        \sum_{w\in\Lambda_s^M}p_r(w)\|y^w - \sT^{\gamma^*}(\,\cdot\mid w)\|_\TV &\leq\sum_{w\in\Lambda_s^M}p_r(w)\sqrt{\frac{1}{2}D_{\rm KL}\big(y^w\parallel\sT^{\gamma^*}(\,\cdot\mid w)\big)}\\
        &\leq\sqrt{\frac{1}{2}\sum_{w\in\Lambda_s^M}p_r(w)D_{\rm KL}\big(y^w\parallel\sT^{\gamma^*}(\,\cdot\mid w)\big)}\\
        &\leq\sqrt{\sum_{w\in\Lambda_s^M}\frac{n_w}{\tilde K}D_{\rm KL}\big(y^w\parallel\sT^{\gamma^*}(\,\cdot\mid w)\big)} = \sqrt{\cE(\gamma^*)}.
    \end{split}
\end{equation}
Finally, $\gamma^*$ minimizes $\cE$ over $\UU_m^{M+1}$, and the policy $\underline{\hat U}$ of Lemma~\ref{lem:TransformerPerformance} belongs to $\UU_m^{M+1}$. Therefore,
\begin{equation}\label{eq:ProofOptimality}
    \cE(\gamma^*)\leq\cE(\underline{\hat U}).
\end{equation}
 
\noindent{\it Step 5.} Fix $w\in\Lambda_s^M$, write $\eta := \eta(n,m,\beta)$ and $a_w := \|y^w - \hat P_r^{M,s}(\,\cdot\mid w)\|_\TV$. By Lemma~\ref{lem:TransformerPerformance}, $\sT^{\underline{\hat U}}(\,\cdot\mid w) = \mathrm{softmax}(\hat z_{\rm out}(w))$ with $\|\hat z_{\rm out}(w) - \bar z(w)\|_\infty\leq\eta$, and $\hat P_r^{M,s}(\,\cdot\mid w) = \mathrm{softmax}(\bar z(w))$. We apply Lemma~\ref{lem:Softmax}-(ii) with $\nu = y^w$, $z = \bar z(w)$ and $z' = \hat z_{\rm out}(w)$, and then Lemma~\ref{lem:Softmax}-(iii) with $P = y^w$ and $P' = \hat P_r^{M,s}(\,\cdot\mid w)$, whose entries are at least $q_{\rm min}\mu_{\rm min}^s$ by Lemma~\ref{lem:SampleComplexity}-(i):
\[
    D_{\rm KL}\big(y^w\parallel\sT^{\underline{\hat U}}(\,\cdot\mid w)\big)\leq D_{\rm KL}\big(y^w\parallel\hat P_r^{M,s}(\,\cdot\mid w)\big) + 2a_w\eta + \frac{\eta^2}{2}\leq\frac{4a_w^2}{q_{\rm min}\mu_{\rm min}^s} + 2a_w\eta + \frac{\eta^2}{2}.
\]
We multiply by $n_w/\tilde K$ and sum over $w$. Write $A := (\sum_w\frac{n_w}{\tilde K}a_w^2)^{1/2}$. Since $\sum_wn_w/\tilde K = 1$, the Cauchy-Bunyakovsky-Schwarz inequality gives $\sum_w\frac{n_w}{\tilde K}a_w\leq A$, and therefore
\[
    \cE(\underline{\hat U})\leq\frac{4A^2}{q_{\rm min}\mu_{\rm min}^s} + 2A\eta + \frac{\eta^2}{2}\leq\bigg(\frac{2A}{\sqrt{q_{\rm min}\mu_{\rm min}^s}} + \frac{\eta}{\sqrt{2}}\bigg)^2,
\]
where, for the last inequality, we expanded the square and used $2A\eta\leq 2\sqrt{2}A\eta/\sqrt{q_{\rm min}\mu_{\rm min}^s}$. The latter holds since $q_{\rm min}\mu_{\rm min}^s\leq Q((q^s)^{-1}(v)\mid z)\leq 1$ for all $v\in\Lambda_s$ and $z\in S$ by the proof of Lemma~\ref{lem:SampleComplexity}-(i). By Lemma~\ref{lem:SampleComplexity}-(iii), $A^2\leq R(s)^MC_\delta/(2\tilde K)$, hence
\begin{equation}\label{eq:ProofConstructed}
    \sqrt{\cE(\underline{\hat U})}\leq\sqrt{\frac{2R(s)^MC_\delta}{q_{\rm min}\mu_{\rm min}^s\tilde K}} + \frac{\eta}{\sqrt{2}}.
\end{equation}
 
\noindent{\it Step 6.} Combining~\eqref{eq:ProofSplit}--\eqref{eq:ProofConstructed} with the bound on the first sum in~\eqref{eq:ProofSplit}, and using $1\leq(q_{\rm min}\mu_{\rm min}^s)^{-1/2}$, we get
\[
    \begin{split}
        \sum_{w\in\Lambda_s^M}p_r(w)\|\hat P_r^{M,s}(\,\cdot\mid w) - \sT^{\gamma^*}(\,\cdot\mid w)\|_\TV &\leq\sqrt{\frac{R(s)^MC_\delta}{\tilde K}} + \sqrt{\frac{2R(s)^MC_\delta}{q_{\rm min}\mu_{\rm min}^s\tilde K}} + \frac{\eta}{\sqrt{2}}\\
        &\leq(1+\sqrt{2})\sqrt{\frac{R(s)^MC_\delta}{q_{\rm min}\mu_{\rm min}^s\tilde K}} + \frac{\eta}{\sqrt{2}}.
    \end{split}
\]
Plugging this into~\eqref{eq:ProofReduction} gives~\eqref{eq:UnifiedBound}. Since the realization of the data set in $\mathcal{G}_1\cap\mathcal{G}_2$ is arbitrary, the bound holds with probability at least $1-\delta$, and the proof is now concluded.\hfill$\blacksquare$

\subsection{Proof of Theorem~\ref{thm:Soft}}\label{proof:Soft}

By Lemma~\ref{lem:ErgodicityIsInherited}, $\{(Z_r,Y_r)\}$ is stationary. Hence the window law
$p(w):=\Pr\big(q^{M,s}(Y_{r-M+1:r})=w\big)$ and the objects $P^M_r$, $P^{M,s}_r$ and
$\hat P^{M,s}_r=:\hat P(\cdot\mid w)$ do not depend on $r\geq M$. Fix $M$ and $s$ with $R(s)\geq2$, and
let $\underline U$ be any open-loop policy. We proceed in four steps.

\emph{Step 1.} Replacing
the memory bound of Theorem~\ref{thm:Main} with that of Proposition~\ref{prop:Info}, we obtain
\begin{equation}\tag{$\star$}\label{eq:41-decomp}
  \begin{split}
      \sup_{r\geq M}\EE_{Y_{0:r}}\Big\|P_r-\Xi_s\big(T^{\underline U}(\cdot\mid w_r)\big)\Big\|_{\TV}
      \leq\sqrt{\Delta_M/2}&+\Big(1+\frac{2M}{q_{\min}}\Big)\frac{L_\psi}{s}\\
      &+\sum_{w\in\Lambda_s^M}p(w)\,\big\|\hat P(\cdot\mid w)-T^{\underline U}(\cdot\mid w)\big\|_{\TV},
  \end{split}
\end{equation}
where $w_r:=q^{M,s}(Y_{r-M+1:r})$.

\emph{Step 2.} Let $\xi_j:=\big(q^{M,s}(Y_{j:j+M-1}),q^s(Y_{j+M})\big)$. This is a fixed measurable function of finitely many coordinates of the shifted sequence $(Y_{j+i})_{i\geqq0}$.
Since $\{Y_r\}$ is stationary and ergodic (Lemma~\ref{lem:ErgodicityIsInherited}), so is $\{\xi_j\}_j$. By Birkhoff's ergodic
theorem, applied to the finitely many indicators $\mathbf 1\{\xi_j=(w,v)\}$, there is an event
$\Omega_{M,s}$ of probability one on which
\[
  \frac{n_w}{\tilde K}\to p(w)
  \qquad\text{and}\qquad
  \frac{1}{\tilde K}\sum_{j=1}^{\tilde K}\mathbf 1\{\xi_j=(w,v)\}\to p(w)\,\hat P(v\mid w)
\]
for all $(w,v)\in\Lambda_s^M\times\Lambda_s$. The proof of Lemma~\ref{lem:SampleComplexity}-(i) uses only the conditional
independence of the observations, so $p(w)\geq(q_{\min}\mu^s_{\min})^M>0$. Hence, on $\Omega_{M,s}$:
\begin{itemize}
  \item there exists $\tilde K_0(\omega)$ such that, for all $\tilde K\geq\tilde K_0(\omega)$, every
  window appears in the data set and $p(w)\le2n_w/\tilde K$ for every $w$;
  \item $a_w:=\|y^w-\hat P(\cdot\mid w)\|_{\TV}\to0$ for every $w$, as $\tilde K\to\infty$.
\end{itemize}

\emph{Step 3.} Fix aligning levels $(n,m,l)$ and $\beta$ with
$\eta:=\eta(n,m,\beta)\le1$, and fix $\omega\in\Omega_{M,s}$ and $\tilde K\geq\tilde K_0(\omega)$.
Steps~4 and~5 of §\ref{sub:Proof} use only the properties listed at the end of Step~2, the optimality of
$\underline U^*_{\tilde K}$, and Lemmata~\ref{lem:Stability} and~\ref{lem:Softmax}. These steps
therefore give
\[
  \sum_{w}p(w)\big\|\hat P(\cdot\mid w)-T^{\underline U^*_{\tilde K}}(\cdot\mid w)\big\|_{\TV}
  \leq\sum_w p(w)\,a_w+\frac{2A_{\tilde K}}{\sqrt{q_{\min}\mu^s_{\min}}}+\frac{\eta}{\sqrt2},
\]
with $A_{\tilde K}^2:=\sum_w\frac{n_w}{\tilde K}\,a_w^2$.

By Step~2, $a_w\to0$ for each of the finitely many $w$, and $n_w/\tilde K\le1$. Hence the first two
terms vanish as $\tilde K\to\infty$. Combining with \eqref{eq:41-decomp}, on $\Omega_{M,s}$ we obtain
\begin{equation}\tag{$\star'$}\label{eq:41-limK}
  \limsup_{\tilde K\to\infty}\ \sup_{r\geq M}\EE_{Y_{0:r}}\Big\|P_r-\Xi_s\big(T^{\underline U^*_{\tilde K}}(\cdot\mid w_r)\big)\Big\|_{\TV}
  \leq\sqrt{\Delta_M/2}+\Big(1+\frac{2M}{q_{\min}}\Big)\frac{L_\psi}{s}+\frac{\eta(n,m,\beta)}{\sqrt2}.
\end{equation}

\emph{Step 4.} The event $\Omega^*:=\bigcap_{M,s\in\mathbb N}\Omega_{M,s}$ is a
countable intersection, so it has probability one. It does not depend on $(n,m,l,\beta)$, and on it
\eqref{eq:41-limK} holds for all $M$, $s$, aligning $(n,m,l)$ and $\beta$ with $\eta\le1$. For fixed
$M$ and $s$, we have $c=M_\UU$, so each term of \eqref{eq:EtaDef} tends to $0$ as $\beta,n,m\to\infty$ along aligning
levels; in particular $\eta\le1$ eventually. Letting $(n,m,\beta)\to\infty$, then $s\to\infty$, then
$M\to\infty$, and using $\Delta_M\downarrow0$ (Proposition~\ref{prop:Info}), proves the claim on $\Omega^*$.\hfill$\blacksquare$

\bibliographystyle{abbrvnat}

\end{document}